\documentclass{article} % For LaTeX2e
\usepackage{iclr2027_conference,times}

\usepackage{amsmath,amsfonts,bm}

\def\eqref#1{equation~\ref{#1}}
\def\1{\bm{1}}

\DeclareMathAlphabet{\mathsfit}{\encodingdefault}{\sfdefault}{m}{sl}
\SetMathAlphabet{\mathsfit}{bold}{\encodingdefault}{\sfdefault}{bx}{n}

\usepackage{hyperref}
\usepackage{url}
\usepackage{amsmath}
\usepackage{amssymb}
\usepackage{amsthm}
\usepackage{mathrsfs}
\usepackage{graphicx}
\usepackage{enumitem}
\usepackage{booktabs}
\usepackage{booktabs}
\usepackage{wrapfig}
\usepackage{makecell}
\usepackage{subcaption}
\usepackage[dvipsnames, table]{xcolor}
\usepackage[most]{tcolorbox}
\usepackage{enumitem}
\newcommand{\wa}{\mathbf{W}_1}
\newcommand{\wb}{\mathbf{W}_2}

\newtheorem{theorem}{Theorem}[section]
\newtheorem{proposition}[theorem]{Proposition}
\newtheorem{lemma}[theorem]{Lemma}
\newtheorem{corollary}[theorem]{Corollary}

\definecolor{darkgreen}{RGB}{0,100,0}
\definecolor{oursblue}{RGB}{220,240,250}
\newtheorem{definition}[theorem]{Definition}

\theoremstyle{remark}
\newtheorem{remark}[theorem]{Remark}

\title{$\lambda$-JEPA: Spectral Anti-Collapse \\ Regularization for Self-Supervised Learning}
\author{
\begin{tabular}{@{}l@{\qquad}l@{\qquad}l@{}}
\textbf{Berker Demirel}\textsuperscript{*} &
\textbf{Clémentine Dominé}\textsuperscript{*} &
\textbf{Valentino Maiorca} \\
\textbf{Marco Fumero} &
\textbf{Marco Mondelli} &
\textbf{Francesco Locatello}
\end{tabular}
\\[3pt]
Institute of Science and Technology Austria (ISTA)\\
Am Campus 1, 3400 Klosterneuburg, Austria
}

\iclrfinalcopy % Uncomment for camera-ready version, but NOT for submission.
\begin{document}
% SAC-JEPA: From Feature-Learning Dynamics to Spectral Anti-Collapse Regularization for JE-SSL

% SAC-JEPA : A Spectral Anti-collapse feature learnting insipire regulariser for SSL 

% SACReg: Spectral Anti-collapse regularizer for JE-SSL

% Feature learning to JE-SSL: Spectral anti collapse regularizer
 
\maketitle
\lhead{Preprint.}
\renewcommand{\thefootnote}{*}
\footnotetext{Equal contribution.}
\renewcommand{\thefootnote}{\arabic{footnote}}

\begin{abstract}
Joint-embedding self-supervised learning typically combines an invariance objective across augmented views with additional mechanisms to prevent representational collapse. These objectives are often applied after a projection head, while downstream tasks use the backbone representation before the projector. We find that this mismatch does not necessarily prevent dimensional collapse in the backbone, which can retain low effective rank and potentially limit downstream transfer. To address this, we introduce \texttt{SACReg}, a spectral anti-collapse regularizer motivated by an analysis of $\lambda$-balance, which captures the relative scale of weight matrices across layers. In a two-layer linear network, we show that \emph{(i)} $\lambda$-balance prevents collapse, and \emph{(ii)} our regularizer applied to the backbone induces $\lambda$-balance. In the nonlinear case, this regularizer leads to anti-collapse as well and, in realistic architectures on ImageNet100, it empirically increases the representations' ranks. We apply \texttt{SACReg} to JEPA and propose $\lambda$-JEPA, which improves over LeJEPA and VISReg on ImageNet-1k classification and in average linear-probe transfer performance across eight downstream image datasets. On video self-supervised learning, $\lambda$-JEPA improves over LeVJEPA and V-JEPA~2 on the Something-Something-v2 and Kinetics-400 benchmarks. Code is available at \href{https://github.com/berkerdemirel/lambda-jepa}{https://github.com/berkerdemirel/lambda-jepa}.

\end{abstract}

\section{Introduction}
\label{sec:intro}

% opener

Self-supervised learning aims to learn transferable representations from unlabeled data without requiring manual annotations~\citep{surveyjing2020self}. In computer vision, joint-embedding self-supervised learning (JE-SSL) has become one of the most promising approaches, encouraging representations of related views or regions of an image to agree while additional mechanisms prevent representational collapse~\citep{understandingjing2022, vicreg2022}. These mechanisms include using contrastive negatives~\citep{simclr2020, mocohe2020momentum}, asymmetric teacher-student training~\citep{dinocaron2021, ibotzhou2022, byolgrill2020} and explicit anti-collapse regularization~\citep{lejepa2025, vicreg2022, visreg2026}.

% observation + mismatch
JE-SSL methods typically optimize a projected representation during pretraining, while downstream tasks discard the non-linear projection head and use the backbone representation instead. These two representation spaces can differ substantially in both geometry and downstream performance~\citep{guillotinebordes2023}. %Prior work has shown that representation geometry and downstream performance can change substantially across layers~\citep{guillotinebordes2023}.
In contrastive self-supervised learning, this has also been studied through dimensional collapse, where learned representations span only a lower dimensional part of the available feature space~\citep{understandingjing2022}. We observe an analogous phenomenon in explicitly regularized JE-SSL methods. Although their anti-collapse objectives successfully maintain high rank projected representations, the corresponding backbone representations can remain low rank. 
This matters for task-agnostic representation learning, where different downstream tasks can benefit from different invariances~\citep{wellericsson2021, tian2020makes}. Dimensional collapse in the backbone can therefore limit transfer by reducing the set of feature directions available for downstream tasks. Consistent with this view, effective rank has been shown to predict downstream performance in JE-SSL~\citep{rankme2023}.

\looseness=-1The rank of learned representations has been studied extensively in the feature-learning literature, with prior work linking representation rank to distinct learning regimes in linear and nonlinear networks \citep{saxe_2014_exact,atanasov2022neural}.
%\CD{Should we cite Sueyon's papers here?} 
% Building on this perspective, we analyze a two-layer linear model and derive a regularizer that dynamically prevents collapse of the encoder representations while allowing the end-to-end mapping to adapt to the objective. \FL{This paragraph should contain the term lambda balance} 
Building on this perspective, we analyze a two-layer linear model with $\lambda$-balance initialization, which captures the relative scale of adjacent layers, and connects it to representation rank. The resulting spectral anti-collapse mechanism motivates a covariance regularizer for nonlinear networks, applied directly to the backbone features to restore their representation rank. In other words, our $\lambda$-balance theory prescribes both the form of the regularizer and where it should be applied. We then apply it accordingly to JE-SSL in a method we call $\lambda$-JEPA, using view-averaged backbone representations to prevent collapse and promote variation across images (see Figure \ref{fig:Schematic_SACReg}). In controlled experiments, we show that $\lambda$-JEPA improves class-relevant separation while preserving augmentation dependent variation. At larger scale, it outperforms both LeJEPA~\citep{lejepa2025} and VISReg~\citep{visreg2026} on ImageNet-1K~\citep{imagenetrussakovsky2015} trained from scratch, with performance approaching DINO~\citep{dinocaron2021} in both linear probing and transfer learning. The same approach applies off the shelf to video JEPA models, where $\lambda$-JEPA trained from scratch outperforms V-JEPA 2~\citep{vjepa2assran2025v} and LeVJEPA~\citep{levjepakuhn2026} on Something-Something-v2~\citep{ssv2goyal2017something} and Kinetics-400~\citep{kinetics400kay2017}.

% \BD{i think this is another paragraph if we can manage within the space}Beyond showing that our regularizer increases backbone rank and downstream performance, we study how these two effects are related. In particular, we characterize what variation is recovered in the additional representation directions, introducing \emph{augmentation thickness}, defined as the within-image variation across augmentations relative to variation between images. Through it, we find that regularized models retain more augmentation-dependent variation along downstream class directions. This shows that the performance gains are not simply a consequence of making different views more invariant, rather from recovering feature directions that remain useful for downstream prediction.

% Beyond showing that our regularizer restores backbone rank, we examine what variation is preserved in the representation. To measure this empirically, \MF{rephrased: "In addion we empirically measure what variations are preserved in the representation after applying our regularizer" (we should motivate here why we do this)} we introduce a metric we term \emph{augmentation thickness}, defined as the within-image variation across augmentations relative to variation between images. Through it, we find that regularized models retain more augmentation-dependent variation along downstream class directions. This shows that the performance gains are not simply a consequence of making different views more invariant, rather from recovering feature directions that remain useful for downstream prediction. 

We summarize our main contributions as:
\begin{itemize}[leftmargin=*]
    % \item We identify a backbone dimension collapse in explicitly regularized JE-SSL methods, showing that preventing collapse at the projected representation does not guarantee a high rank backbone. \BD{make it explicit why it is important}
    \item We show that standard JE-SSL methods designed to prevent collapse in the projected representation do not ensure a high-rank backbone, potentially limiting its representational capacity for downstream tasks.% (Section \ref{sec:related_work}).
    % \item We derive a spectral anti-collapse mechanism from a feature-learning analysis and extend it to nonlinear networks through a covariance regularizer applied directly to the backbone representation. \BD{needs to be more precise?} \MF{maybe write explicitly that the regularizer induce high rank representation (guaranteed, at least in the linear case) and better performance empirically (although this we can cross it out here if its the last bullet point)?}
   % \item From a feature learning analysis, we derive a spectral anti-collapse mechanism that guarantees non-collapse in the linear setting, and extend it to nonlinear networks with SACReg, which empirically increases backbone rank and improves downstream transfer.
    \item We derive SACReg, a spectral anti-collapse regularizer that guarantees encoder non-collapse in a two-layer linear network, motivating encoder-side spectral regularization. We extend this to nonlinear encoders by directly regularizing representation covariance and apply it to JE-SSL using view-averaged backbone representations to prevent collapse and promote variation across images. %(Section \ref{sec:method}). 
 %   \item We introduce augmentation thickness to study augmentation-dependent variation and use it to show the tradeoff between invariance and downstream transfer.
  % \CD{I would remove this, because it does no say much here}
    \item We propose $\lambda$-JEPA, a standalone SSL method that applies SACReg to both the backbone and projected representations. Across image JE-SSL benchmarks, $\lambda$-JEPA improves frozen transfer while remaining competitive on ImageNet-1k classification, and extends to video SSL with large gains on temporal recognition. %(Section \ref{sec:Exp} \ref{sec:Exp}).
    %\CD{Could add in the Thickness here }
\end{itemize}

\section{Problem and related works}
\label{sec:related_work}

\paragraph{Joint-embedding SSL} % or Self-supervised learning with projection heads ?
\label{sec:rel_jepa}
\begin{wrapfigure}[14]{r}{0.58\textwidth}
    \vspace{-0.8em}
    \centering
    \vspace{-0.5\baselineskip}
    \includegraphics[width=\linewidth]{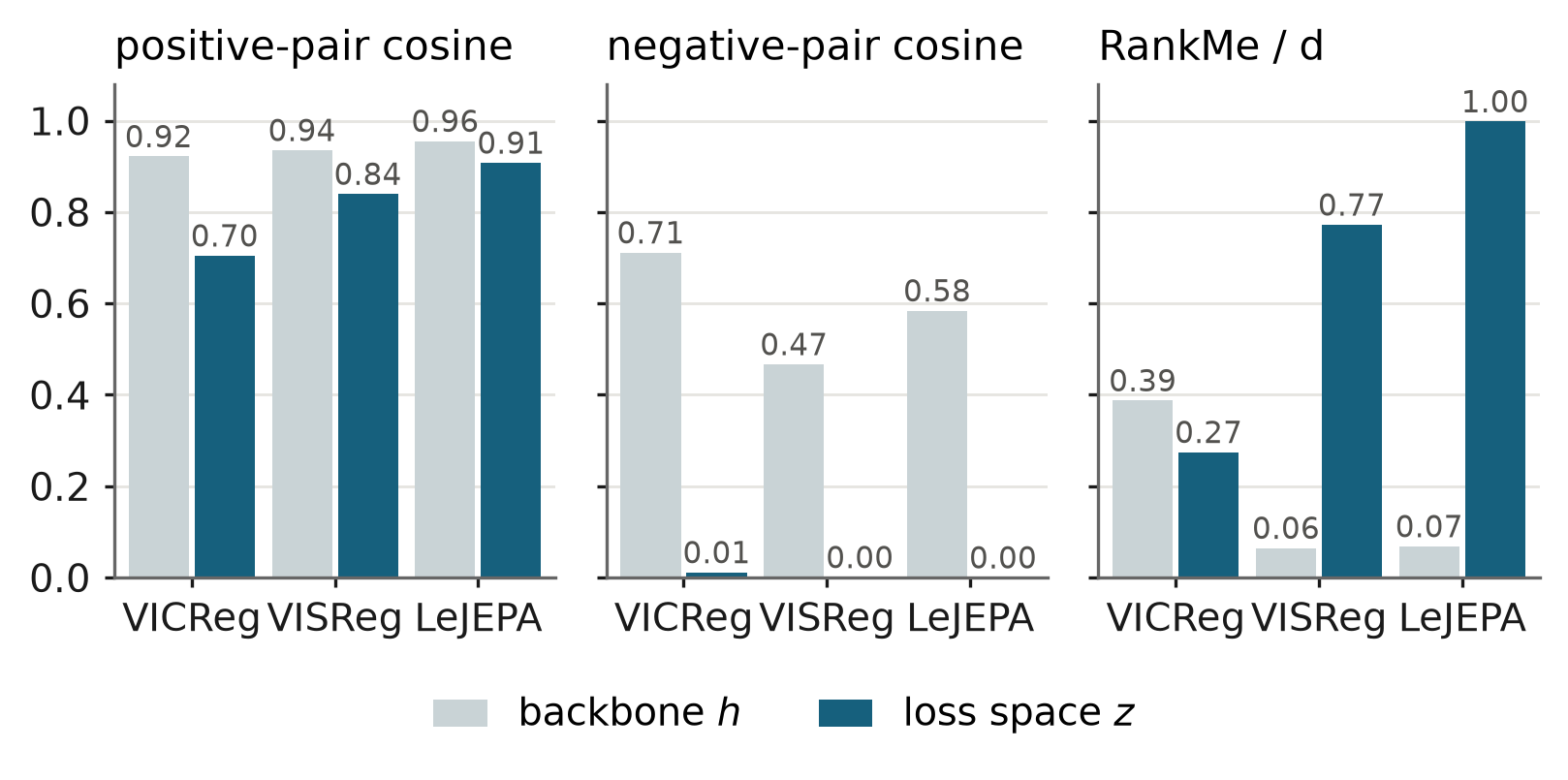}
    \vspace{-2.3em}
    \caption{Backbone and loss space geometry for explicitly regularized JE-SSL methods. The projection head changes both pairwise similarities and RankMe.}
    \label{fig:hz_geometry}
    \vspace{-0.5\baselineskip}
\end{wrapfigure}
Modern joint-embedding self-supervised learning (JE-SSL) methods differ primarily in how they avoid collapsed solutions. We focus on a family of explicitly regularized methods (VICReg~\citep{vicreg2022}, LeJEPA~\citep{lejepa2025}, VISReg~\citep{visreg2026}). In the projected space, VICReg controls feature variance and covariance, while LeJEPA regularizes the embedding distribution toward an isotropic Gaussian. VISReg similarly combines variance regularization with an additional distributional constraint. Following the use of projection heads in SimCLR~\citep{simclr2020}, these objectives act after a non-linear projection head, which is discarded for downstream tasks.

% \FL{the first sentence is not super clear}Subsequent work has further shown that transfer quality and representation geometry can differ substantially across the projector \citep{guillotinebordes2023,xue2024,ouyang2025}.

% We focus on a specific consequence of this separation: in these explicitly regularized
% methods, anti-collapse constraints are imposed on $\mathbf{z}$, yet with nonlinear
% projectors, preventing dimensional collapse in $\mathbf{z}$ does not in general prevent it in the backbone representation $h$.\footnote{For example, let $h=(X,0)^\top$ with $X$ uniform on $\{-1,0,1\}$, and let $p(h_1,h_2)=(h_1,|h_1|)^\top$. Then $\operatorname{Cov}(h)$ has rank one, while
% $\operatorname{Cov}(p(h))=\operatorname{diag}(2/3,2/9)$ has full rank.}.

We focus on a specific consequence of this separation: in these explicitly regularized
methods, anti-collapse constraints are imposed on $\mathbf{z}$, yet with nonlinear
projectors, preventing dimensional collapse in $\mathbf{z}$ does not in general prevent it in the backbone representation $\mathbf{h}$.\footnote{For example, let $\mathbf{h}=(X,0)^\top$ with $X$ uniform on $\{-1,0,1\}$, and let $p(\mathbf{h})=(h_1,|h_1|)^\top$. Then $\operatorname{Cov}(\mathbf{h})$ has rank one, while
$\operatorname{Cov}(p(\mathbf{h}))=\operatorname{diag}(2/3,2/9)$ has full rank.}. Figure~\ref{fig:hz_geometry} shows how two key aspects of explicitly regularized JE-SSL methods change across the projection head: (i) agreement between augmented views and (ii) prevention of representational collapse. Positive pairs remain highly aligned in both the backbone and projected spaces. In contrast, different images have very similar representations in the backbone, indicating substantial concentration. The same discrepancy appears in representation rank. RankMe~\citep{rankme2023} measures how evenly variance is distributed across feature directions and is typically higher in the projected space, whereas backbone remains low rank, especially for VISReg and LeJEPA. For task-agnostic representation learning, low rank backbones can be restrictive because different downstream tasks may rely on different feature directions~\citep{wellericsson2021,tian2020makes}. Consistently, ~\citet{rankme2023} have shown that it correlates with downstream performance in JE-SSL, motivating backbone-level anti-collapse regularization.

\vspace{-0.5em}
\paragraph{Feature learning and anti-collapse of the backbone representations}

The question of what determines the rank of learned representations has been studied extensively in the feature learning literature which has so far had little contact with SSL but offers directly relevant tools. 
In particular, deep linear networks provide a tractable setting for studying these effects. Despite their linear end-to-end mapping, their factorized parameterization induces nonlinear optimization dynamics ~\citep{baldi1989neural,fukumizu1998effect,saxe_2014_exact,du2019gradient,jacot2018neural,chizat2019lazy,braun2022exact} and captures phenomena observed in nonlinear networks~\citep{saxe2019mathematical,kunin2024get,nam2025position,anguita2026theory}. Initialization plays a central role in these dynamics: small initializations favor low-rank or sparse solutions, whereas appropriate large-scale limits yield kernel-like behavior~\citep{saxe_2014_exact,gunasekar2018characterizing,chizat2020implicit,woodworth2020kernel,li2020towards}.
In nonlinear networks, initialization scale likewise influences the learning regime~\citep{luo2021phase,atanasov2022neural}. 
%In particular, small initializations can promote feature learning, with neurons concentrating along a few directions in the condensed regime of two-layer ReLU networks~\citep{luo2021phase} and low-rank, task-aligned structure emerging in the neural tangent kernel~\citep{atanasov2022neural}.
Beyond overall initialization scale, the relative scales of adjacent layers, as studied through $\lambda$-balanced initializations, also influence learning dynamics and implicit bias~\citep{azulay2021implicit,kunin2024get,domine2025lazy,jarvis2025theory}. Their consequences for representation rank, however, remain less characterized, particularly in the nonlinear regime and in larger scale networks. Our work addresses this gap by investigating how layer balance shapes representation rank and using these insights to derive a regularizer that prevents collapse in the backbone representations of SSL models.
%Small-initialisation regimes have been linked to low-rank implicit bias in matrix factorisation \citep{li2020towards} ( of the netwrok fucntion)and sparsity-promoting implicit bias in diagonal linear networks \citep{woodworth2020kernel}.
%Related analyses show that small initialisations lead to approximately low-rank  , task-aligned layer weights in deep linear networks \citep{atanasov2022neural}.
\section{Method}
\label{sec:method}
Our goal is to prevent collapse in the backbone representations retained for downstream transfer. We first study a tractable two-layer linear network, where we show that negative layer balance yields a non-collapse guarantee
and motivates an encoder-side spectral regularizer.
We extend this principle to nonlinear encoders by directly regularizing representation covariance, then apply it to JE-SSL on view-averaged backbone representations to promote variation across images. Figure~\ref{fig:Schematic_SACReg} illustrates the resulting $\lambda$-JEPA objective and the placement of SACReg in the backbone and projected spaces.

%We then extend this principle to nonlinear encoders by regularizing the representation covariance directly. Finally, we apply the regularizer to joint-embedding self-supervised learning, targeting backbone representations averaged across augmented views to promote variation between images.

\vspace{-0.5em}
\subsection{Deriving a spectral anti-collapse mechanism in linear networks}

\textbf{Linear-network setting.}
Consider a supervised dataset
$
\mathcal{D}
=
\left\{
(\mathbf{x}_n,\mathbf{y}_n)
\right\}_{n=1}^{P}
$, $
\mathbf{x}_n\in\mathbb{R}^{N_i} 
$, $
\mathbf{y}_n\in\mathbb{R}^{N_o}.
$
We model the input--output mapping using a two-layer linear network, $\widehat{\mathbf{y}}_n
=
\wb\wa\mathbf{x}_n,$
where $\wa\in\mathbb{R}^{N_h\times N_i}$ and
$\wb\in\mathbb{R}^{N_o\times N_h}$ denote the encoder and decoder weight
matrices, respectively. The network is trained on the mean-squared-error %\BD{result is more general than MSE from appendix a.1.2 do you think we should mention that?}
loss
$
\mathcal{L}_{\mathrm{task}}(\wa,\wb)
=
\frac{1}{2}
\left\langle
\left\|
\wb\wa\mathbf{x}-\mathbf{y}
\right\|_2^2
\right\rangle,$
where $\langle\cdot\rangle$ denotes the empirical average over the training
set. Although the task loss depends only on the end-to-end mapping
$\mathbf{M}=\wb\wa$, its factorization across the two layers affects the parameter
and representation dynamics. We characterize this factorization through
the layer-imbalance matrix
$
\mathbf{\Delta}
=
\wb^\top\wb-\wa\wa^\top.
$
A network is said to be $\lambda$-balanced when
$
\mathbf{\Delta}
=
\lambda\mathbf{I}_{N_h}.$
Previous work has shown that $\lambda$-balanced initializations provide an analytically tractable framework for characterizing the transition between rich and lazy learning dynamics~\citep{domine2025lazy,nam2025position}. Under unregularized gradient flow, $\mathbf{\Delta}$ is conserved, so the layer balance remains fixed at its initial value. Prior studies have also linked these learning regimes to the rank of the representations \cite{saxe_2014_exact,atanasov2022neural}. Motivated by this connection, we investigate whether a suitable layer balance can prevent representation collapse. We show that an appropriately chosen negative balance provides such a guarantee, and use this result to derive a regularizer that dynamically promotes the same anti-collapse property during training.

\textbf{Balancedness anti-collapse property.}
A negative layer balance provides an explicit anti-collapse guarantee.
If $\mathbf{\Delta}=\lambda_{\mathrm{bal}}\mathbf{I}_{N_h}$ with
$\lambda_{\mathrm{bal}}<0$, then
$
\wa\wa^\top\succeq-\lambda_{\mathrm{bal}}\mathbf{I}_{N_h},
 \text{ and }
\sigma_{\min}(\wa)\geq\sqrt{-\lambda_{\mathrm{bal}}},
$
so the encoder has full row rank. This also yields a guarantee for
the hidden representations under the assumption that, for some $\kappa_x >0$, the input
covariance $\boldsymbol{\Sigma}_x
=
\frac{1}{P}\sum_{n=1}^{P}
(\mathbf{x}_n-\bar{\mathbf{x}})
(\mathbf{x}_n-\bar{\mathbf{x}})^\top \succeq\kappa_x \mathbf{I}_{N_i}$, with $ 
\bar{\mathbf{x}}=\frac{1}{P}\sum_{n=1}^{P}\mathbf{x}_n$.
In fact, the centered covariance of the hidden representations
$\mathbf{h}_n=\wa\mathbf{x}_n$ satisfies $\mathbf{C}_h
=
\wa\boldsymbol{\Sigma}_x\wa^\top
\succeq
-\kappa_x \lambda_{\mathrm{bal}}\mathbf{I}_{N_h}.$
Therefore, negative layer balance prevents representation collapse, see Lemma~\ref{lem:balanced-non-collapse} in Appendix \ref{app:noncollapse} for a formal statement and proof.

Although a negative balanced initialization prevents collapse, extending the same initialization strategy to deep nonlinear networks is non-trivial. 
We therefore seek a regularized training objective that dynamically selects a negative layer balance, independently of its initial value. Motivated by this objective, we derive a regularizer that follows from a constrained optimization problem. 

\textbf{Layer balance through regularization.} Among all factorizations of a fixed end-to-end mapping
$\mathbf{M}=\wb\wa$, we look for a constrained minimizer that satisfies
$
\wb^\top\wb-\wa\wa^\top
=
-\frac{\gamma}{\lambda_{\mathrm{reg}}}
\mathbf{I}_{N_h}.$

%The regulariser therefore selects a particular $\lambda$-balanced factorisation, rather than merely penalising the magnitude of the weights.

\begin{theorem}
\label{thm:encoder-variational-balance}
Fix an end-to-end mapping $
\mathbf{M}\in\mathbb{R}^{N_o\times N_i}$,  $\gamma>0$ and $\lambda_{\mathrm{reg}}>0$.
Let $(\wa^\star,\wb^\star)$ be a solution of
\begin{equation}
\begin{aligned}
\min_{\wa,\wb}\quad&
\frac{\lambda_{\mathrm{reg}}}{2}
\left(
    \|\wa\|_F^2+\|\wb\|_F^2
\right)
-
\frac{\gamma}{2}
\log\det\left(\wa\wa^\top\right)\quad \text{subject to}\quad
\wb\wa=\mathbf{M},
\end{aligned}
\end{equation}
with
\(\wa^\star(\wa^\star)^\top\succ0\). Then,  $(\wb^\star)^\top\wb^\star
-
\wa^\star(\wa^\star)^\top
=
-\frac{\gamma}{\lambda_{\mathrm{reg}}}
\mathbf{I}_{N_h}$
\end{theorem}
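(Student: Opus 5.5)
The idea is to exploit the symmetry of the constraint $\wb\wa=\mathbf{M}$ under invertible reparametrizations of the hidden layer, rather than writing Lagrange multipliers for the bilinear constraint directly. Because $\wa^\star(\wa^\star)^\top\succ0$, the objective is finite and smooth in a neighborhood of the minimizer. For any invertible $\mathbf{G}\in\mathbb{R}^{N_h\times N_h}$, the pair $(\mathbf{G}\wa^\star,\wb^\star\mathbf{G}^{-1})$ still satisfies the constraint. The one-parameter family $\mathbf{G}(t)=\exp(t\mathbf{A})$, with $\mathbf{A}$ an arbitrary $N_h\times N_h$ matrix, is therefore a feasible curve through the minimizer. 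Consequently the function
\[
f(t)=\tfrac{\lambda_{\mathrm{reg}}}{2}\bigl(\|e^{t\mathbf{A}}\wa^\star\|_F^2+\|\wb^\star e^{-t\mathbf{A}}\|_F^2\bigr)-\tfrac{\gamma}{2}\log\det\bigl(e^{t\mathbf{A}}\wa^\star(\wa^\star)^\top e^{t\mathbf{A}^\top}\bigr)
\]
has a local minimum at $t=0$, and $f'(0)=0$ for every $\mathbf{A}$.

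Next I compute $f'(0)$ term by term:
\begin{itemize}
\item The first norm term contributes $\lambda_{\mathrm{reg}}\operatorname{tr}(\mathbf{A}\wa^\star(\wa^\star)^\top)$.
\item The second norm term contributes $-\lambda_{\mathrm{reg}}\operatorname{tr}(\mathbf{A}^\top(\wb^\star)^\top\wb^\star)=-\lambda_{\mathrm{reg}}\operatorname{tr}(\mathbf{A}(\wb^\star)^\top\wb^\star)$, using the symmetry of $(\wb^\star)^\top\wb^\star$.
\item For the log-determinant, $\det(e^{t\mathbf{A}}\mathbf{S}e^{t\mathbf{A}^\top})=\det(e^{t\mathbf{A}})^2\det\mathbf{S}=e^{2t\operatorname{tr}\mathbf{A}}\det\mathbf{S}$. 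The term therefore equals $-\gamma t\operatorname{tr}\mathbf{A}+$const, and its derivative is $-\gamma\operatorname{tr}(\mathbf{A})$.
\end{itemize}
This exact computation avoids any differentiability concerns for the log-det; positive definiteness is only needed so that the term is finite.

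Setting $f'(0)=0$ for all $\mathbf{A}$ gives
\[
\operatorname{tr}\bigl(\mathbf{A}\,[\lambda_{\mathrm{reg}}(\wa^\star(\wa^\star)^\top-(\wb^\star)^\top\wb^\star)-\gamma\mathbf{I}_{N_h}]\bigr)=0\quad\text{for all }\mathbf{A}.
\]
Choosing $\mathbf{A}$ equal to the transpose of the bracketed matrix forces the bracket to vanish. Hence $(\wb^\star)^\top\wb^\star-\wa^\star(\wa^\star)^\top=-\frac{\gamma}{\lambda_{\mathrm{reg}}}\mathbf{I}_{N_h}$, which is the claim.

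The main subtlety is justifying that the exponential reparametrization curve stays feasible and that the minimizer (which may be only local) is stationary along it. Both follow directly, since the curve lies entirely in the feasible set and $f$ is smooth near $t=0$. This is exactly why the group-orbit argument is preferable to a KKT analysis, which would require constraint qualification for $\wb\wa=\mathbf{M}$ when $\mathbf{M}$ is rank-deficient.
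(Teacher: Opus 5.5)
Your proof is correct, and it takes a different route from the paper. The paper introduces a multiplier $\mathbf{\Lambda}$ for $\wb\wa=\mathbf{M}$ and writes the two stationarity equations. It then multiplies the $\wa$-equation on the right by $\wa^\top$ and the $\wb$-equation on the left by $\wb^\top$, and subtracts, so that the common term $\wb^\top\mathbf{\Lambda}\wa^\top$ cancels.

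That subtraction is the same as pairing the stationarity conditions with your infinitesimal reparametrization $(\mathbf{A}\wa,-\wb\mathbf{A})$. This direction is tangent to the feasible set, which is why the multiplier drops out. So the two computations have the same content; yours is the static analogue of the cancellation of task-loss terms in the paper's imbalance dynamics.

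What your version buys is that it never needs a multiplier to exist. The paper writes the Lagrange conditions without any constraint qualification. Since $\wa^\star$ is onto, $\operatorname{rank}\wb^\star=\operatorname{rank}\mathbf{M}$. The derivative $(\delta\wa,\delta\wb)\mapsto\wb^\star\delta\wa+\delta\wb\,\wa^\star$ then fails to be surjective exactly when $N_h<N_i$ and $\operatorname{rank}\mathbf{M}<N_o$, which includes every bottleneck $N_h<\min\{N_i,N_o\}$. So your closing remark is on target.

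Your argument only uses that $t=0$ minimizes $f$ along a feasible curve. It therefore covers local minimizers, and even points that are merely stationary along the $\mathrm{GL}(N_h)$ orbit.

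Conversely, once the balance is established, one checks directly that $\mathbf{\Lambda}=-\lambda_{\mathrm{reg}}\wb^\star\bigl(\wa^\star(\wa^\star)^\top\bigr)^{-1}\wa^\star$ satisfies both of the paper's stationarity equations. Your result thus retroactively justifies the KKT system the paper writes down.
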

The proof of Theorem~\ref{thm:encoder-variational-balance} is provided in Appendix \ref{app:opptimisation}. This motivates the following definition.
\begin{tcolorbox}
\begin{definition}[Linear SACReg]
\label{def:linear-encoder-regulariser}
Assume that $\wa\wa^\top\succ0$, $\gamma >  0$ and $\lambda_{\mathrm{reg}} >  0$. We define
\[
\mathcal{R}_{\mathrm{LSAC}}(\wa,\wb)
=
\frac{\lambda_{\mathrm{reg}}}{2}
\left(
    \|\wa\|_F^2+\|\wb\|_F^2
\right)
-
\frac{\gamma}{2}
\log\det\left(\wa\wa^\top\right),
\]
%where $\lambda_{\mathrm{reg}}$ controls the strength of the symmetric
%$L_2$ decay and $\gamma>0$ controls the log-determinant term. 
and the
training objective  $\mathcal{L}_{\mathrm{LSAC}}(\wa,\wb)
=
\mathcal{L}_{\mathrm{task}}(\wa,\wb)
+
\mathcal{R}_{\mathrm{LSAC}}(\wa,\wb).
\label{eq:encoder-regularised-objective}$
\end{definition}
\end{tcolorbox}
\looseness=-1The regularizer combines symmetric $L_2$ weight decay with a log-determinant term acting specifically on the encoder Gram matrix. 
This placement is motivated by the non-collapse condition:
a negative layer balance bounds the encoder Gram matrix away from being singular and under the input-covariance assumption, guarantees non-collapse of the hidden representations. The log-determinant term
penalizes vanishing encoder singular values, while weight decay controls the overall weight scale.

We next note that the balance identified by the constrained optimization problem also emerges dynamically during training. If $N_h\leq N_i$ and $\wa(0)\wa(0)^\top\succ0$, the regularized gradient-flow solution for the mean-squared-error objective exists for all $t\geq0$, and the encoder retains full row rank throughout training. The task-loss terms cancel in the imbalance dynamics,
giving $\tau\dot{\mathbf{\Delta}}
=
-2\lambda_{\mathrm{reg}}\mathbf{\Delta}-2\gamma\mathbf{I}_{N_h}.$
Consequently,
$\mathbf{\Delta}(t)+\frac{\gamma}{\lambda_{\mathrm{reg}}}\mathbf{I}_{N_h}
=
e^{-2\lambda_{\mathrm{reg}}t/\tau}
\left(
\mathbf{\Delta}(0)+\frac{\gamma}{\lambda_{\mathrm{reg}}}\mathbf{I}_{N_h}
\right).$
Thus, independently of the dataset and initial imbalance, the network
converges exponentially with rate $2\lambda_{\mathrm{reg}}/\tau$ to  $\lambda_{\mathrm{bal}}^{\mathrm{enc}}$-balanced solution, with
$\lambda_{\mathrm{bal}}^{\mathrm{enc}}
=-\gamma/\lambda_{\mathrm{reg}}$ (see
Theorem ~\ref{thm:encoder-balance-dynamics} in Appendix \ref{app:balsel}).
The network preserves this balance throughout training (Corollary~\ref{cor:selected-balance-persistence}). Unlike $\lambda$-balanced initialization, which fixes the balance only
through the starting condition, the proposed objective continuously
attracts the network towards the selected balance. The ratio
$\gamma/\lambda_{\mathrm{reg}}$ determines the fixed point, while
$\lambda_{\mathrm{reg}}$ controls the rate of convergence towards it. 
%This construction connects regularization to the layer-balance
%mechanism studied in rich and lazy learning
%dynamics~\citep{domine2024lazy}; see
%Appendix~\ref{app:rich_lazy}.
The ratio $\gamma/\lambda_{\mathrm{reg}}$ determines the selected
balance and the corresponding lower bound on the encoder spectrum and yields a non-collapse guarantee.
This motivates applying the penalty directly to the
representation covariance in nonlinear encoders, as developed next.

% \begin{tcolorbox}[
%     colback=gray!14,
%     colframe=black,
%     boxrule=0.8pt,
%     arc=2mm,
%     left=3mm,
%     right=3mm,
%     top=1.5mm,
%     bottom=1.5mm
% ]
% \textbf{Linear anti-collapse mechanism.}
% Negative layer balance,
% \[
% \mathbf{\Delta}=\lambda_{\mathrm{bal}}\mathbf{I}_{N_h},
% \qquad
% \lambda_{\mathrm{bal}}<0,
% \]
% lower-bounds the encoder spectrum and prevents representation collapse.
% Linear SACReg dynamically selects
% \[
% \lambda_{\mathrm{bal}}^{\mathrm{enc}}
% =
% -\frac{\gamma}{\lambda_{\mathrm{reg}}},
% \]
% independently of the initial imbalance.
% \end{tcolorbox}

 %In particular, \citet{kunin2024get} show that both upstream and downstream initializations can induce rich dynamics when the overall initialization scale is small. Although balancedness is not conserved in nonlinear networks, the initial balance between layers still helps determine the learning regime. 
 %In the following section, we inpired by the linear setting we 
 %construct a practical regulariser for nonlinear networks, where exact balance dynamics are no longer available but the same spectral anti-collapse principle, the placement of the regulariser, and the learning regime ideas can be applied.

\subsection{Spectral regularization of nonlinear representations}
\label{sec:logdet-covariance-regulariser}
Although full matrix imbalance is not conserved in nonlinear networks, prior work has shown that it influences their learning dynamics~\citep{kunin2024get,anguita2026theory,jarvis2025theory}.
Motivated by this connection and our linear analysis, we apply
the spectral anti-collapse principle directly to the encoder's
representation covariance, preserving the encoder-side placement
without relying on exact balance dynamics. Let $\mathbf{h}=f_\theta(\mathbf{x})\in\mathbb{R}^{N_h}$ denote the representation
produced by a differentiable encoder $f_\theta$.
For a minibatch of size $B$, collect the representations row-wise
in $\mathbf{H}\in\mathbb{R}^{B\times N_h}$ and define $\boldsymbol{\mu}=\frac{1}{B}\mathbf{H}^\top\mathbf{1}_B,
$\text{ }$
\mathbf{P}_B=\mathbf{I}_B-\frac{1}{B}\mathbf{1}_B\mathbf{1}_B^\top,
$\text{ }$
\widetilde{\mathbf{H}}=\mathbf{P}_B\mathbf{H},
$\text{ }$
\mathbf{C}_h=\frac{1}{B} \widetilde{\mathbf{H}}^\top\widetilde{\mathbf{H}}.$
Here, $\boldsymbol{\mu}$ and $\mathbf{C}_h$ are the empirical representation mean
and centered covariance, respectively.
\begin{tcolorbox}
\begin{definition}[Nonlinear SACReg]
\label{def:nonlinear-encoder-regulariser}
Let $\lambda_{\mathrm{mean}}>0$,
$\lambda_{\mathrm{cov}}>0$, $\gamma>0$, and $\varepsilon\geq0$.
Assume that $\mathbf{C}_h+\varepsilon\mathbf{I}_{N_h}\succ0$.
We define 
%\BD{in case your theory do not repeat calling z as features, i would call it h to be consistent with the other parts, then SACReg becomes a function of H only as you already defined empirical mean and cov above?} 
\begin{equation}
\mathcal{R}_{\mathrm{SAC}}(\boldsymbol{\mu},\mathbf{C}_h)
=
\frac{\lambda_{\mathrm{mean}}}{2}\|\boldsymbol{\mu}\|_2^2
+
\frac{\lambda_{\mathrm{cov}}}{2}\operatorname{tr}(\mathbf{C}_h)
-
\frac{\gamma}{2}
\log\det\left(\mathbf{C}_h+\varepsilon\mathbf{I}_{N_h}\right),
\label{eq:nonlinear-covariance-regulariser_main}
\end{equation}
and the
corresponding training objective  $\mathcal{L}_{\mathrm{SAC}}
=
\mathcal{L}_{\mathrm{task}}
+
\mathcal{R}_{\mathrm{SAC}}(\boldsymbol{\mu},\mathbf{C}_h).$
\end{definition}
\end{tcolorbox}
The covariance terms play the same roles as in the linear setting: $\lambda_{\mathrm{cov}}$ controls the total representation variance, while $\gamma$ controls the strength of the log-determinant penalty on small covariance eigenvalues. The parameter $\varepsilon$ provides numerical stabilization. The nonlinear regularizer also includes a mean penalty $\|\boldsymbol{\mu}\|_2^2$, weighted by $\lambda_{\mathrm{mean}}$, which pulls the representation mean towards the origin. For a bias-free linear encoder, centered inputs produce centered representations, so no separate mean penalty is needed. %Because the trace and log-determinant terms depend only on centered covariance, they are unaffected by shifts in the representation mean. The additional $\|\boldsymbol{\mu}\|_2^2$ term thus controls this otherwise unconstrained shift.
For a linear encoder with whitened inputs, the covariance penalty reduces exactly to the encoder portion of the linear regularizer, with $\lambda_{\mathrm{cov}}=\lambda_{\mathrm{reg}}$. This connection is derived in Appendix~\ref{sec:nonlinear-covariance-regularisation}. 

\textbf{Anti-collapse property.} We next characterize the regularizer's inductive bias and anti-collapse properties. The mean and trace penalties control representation location and scale, while the negative log-determinant penalizes small covariance eigenvalues.%The trace and log-determinant terms play complementary roles. The trace penalty prevents unbounded growth of the representation variance, whereas
%the negative log-determinant penalises covariance spectra containing smalleigenvalues. Indeed, if
%$\lambda_1(\mathbf{C}_h),\ldots,\lambda_k(\mathbf{C}_h)$ are the eigenvalues of $\mathbf{C}_h$, then$\mathcal{R}_{\mathrm{SAC}}(\boldsymbol{\mu},\mathbf{C}_h)
%=
%\frac{\lambda_{\mathrm{mean}}}{2}\|\boldsymbol{\mu}\|^2
%+
%\frac{1}{2}
%\sum_{i=1}^{h}
%\left[
%    \lambda_{\mathrm{cov}}\lambda_i(\mathbf{C}_h)
%    -
  %  \gamma\log\left(\lambda_i(\mathbf{C}_h)+\varepsilon\right)
%\right].$
%Thus, the regulariser acts directly on the representation's location and on every direction of its covariance; see
%Proposition~\ref{prop:nonlinear-covariance-spectral-form} in the Appendix.
\begin{theorem}[Optimal mean and covariance]
\label{thm:nonlinear-optimal-covariance_main}
The unique minimizer of  \(\mathcal{R}_{\mathrm{SAC}}\) over $\boldsymbol{\mu}\in\mathbb R^{N_h},\ \mathbf{C}_h\succeq0$ such that $\mathbf{C}_h+\varepsilon\mathbf{I}_{N_h}\succ 0$ is $ 
\boldsymbol{\mu}^\star=\mathbf{0}$, $\mathbf{C}_h^\star
=
\left(
    \frac{\gamma}{\lambda_{\mathrm{cov}}}
    -
    \varepsilon
\right)_{+}\mathbf{I}_{N_h}$, where $(a)_{+}=\max\{a,0\}$. \end{theorem}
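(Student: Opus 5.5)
The objective separates into a term in $\boldsymbol{\mu}$ alone and a term in $\mathbf{C}_h$ alone, so I will minimize each part on its own. The mean part $\frac{\lambda_{\mathrm{mean}}}{2}\|\boldsymbol{\mu}\|_2^2$ with $\lambda_{\mathrm{mean}}>0$ is strictly convex and is uniquely minimized at $\boldsymbol{\mu}^\star=\mathbf{0}$. The real content is the covariance part
\[
F(\mathbf{C})=\frac{\lambda_{\mathrm{cov}}}{2}\operatorname{tr}(\mathbf{C})-\frac{\gamma}{2}\log\det(\mathbf{C}+\varepsilon\mathbf{I}_{N_h}),
\]
minimized over the feasible set $\mathcal{K}=\{\mathbf{C}\succeq0:\ \mathbf{C}+\varepsilon\mathbf{I}\succ0\}$.

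\textbf{Step 1: reduce to a spectral problem.} Write $\mathbf{C}=\mathbf{U}\operatorname{diag}(c_1,\dots,c_{N_h})\mathbf{U}^\top$ with $c_i\ge0$. Both the trace and $\log\det(\mathbf{C}+\varepsilon\mathbf{I})$ depend only on the eigenvalues, which gives
\[
F(\mathbf{C})=\tfrac12\sum_i g(c_i),\qquad g(c)=\lambda_{\mathrm{cov}}c-\gamma\log(c+\varepsilon).
\]
Feasibility requires $c_i+\varepsilon>0$ for every $i$. This is automatic when $\varepsilon>0$. When $\varepsilon=0$ it forces $c_i>0$.

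\textbf{Step 2: minimize the scalar function.} On $\{c\ge0,\ c+\varepsilon>0\}$ the function $g$ is strictly convex, since $g''(c)=\gamma/(c+\varepsilon)^2>0$. It also tends to $+\infty$ as $c\to\infty$, and as $c\to0^+$ when $\varepsilon=0$. Its stationary point is $c=\gamma/\lambda_{\mathrm{cov}}-\varepsilon$.
\begin{itemize}
\item If $\gamma/\lambda_{\mathrm{cov}}-\varepsilon\ge0$, this point is feasible and is the unique minimizer of $g$.
\item Otherwise $g'(0)=\lambda_{\mathrm{cov}}-\gamma/\varepsilon>0$. Since $g'$ is increasing, $g'>0$ on the whole domain, so the unique minimizer is the boundary point $c=0$. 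This case requires $\varepsilon>0$, because $\gamma/\lambda_{\mathrm{cov}}>0$; hence $c=0$ is feasible.
\end{itemize}
In both cases the unique scalar minimizer is $c^\star=(\gamma/\lambda_{\mathrm{cov}}-\varepsilon)_+$.

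\textbf{Step 3: lift back to matrices with uniqueness.} For any feasible $\mathbf{C}$,
\[
F(\mathbf{C})=\tfrac12\sum_i g(c_i)\ \ge\ \tfrac{N_h}{2}g(c^\star)=F(c^\star\mathbf{I}_{N_h}).
\]
Equality holds only if every $c_i=c^\star$, and then $\mathbf{C}=\mathbf{U}(c^\star\mathbf{I})\mathbf{U}^\top=c^\star\mathbf{I}_{N_h}$ whatever $\mathbf{U}$ is. This gives both existence and uniqueness of $\mathbf{C}_h^\star$ without appealing to matrix convexity.

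Combining this with $\boldsymbol{\mu}^\star=\mathbf{0}$ yields the claim. As an alternative route, $\mathbf{C}\mapsto-\log\det(\mathbf{C}+\varepsilon\mathbf{I})$ is strictly convex and the trace is linear, so $F$ is strictly convex on the convex set $\mathcal{K}$. The first-order condition $\lambda_{\mathrm{cov}}\mathbf{I}=\gamma(\mathbf{C}+\varepsilon\mathbf{I})^{-1}$ gives the interior solution, and the KKT condition with a multiplier for $\mathbf{C}\succeq0$ gives the boundary case. The eigenvalue argument is simpler, though.

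\textbf{Main obstacle.} The only delicate point is the boundary case $\gamma/\lambda_{\mathrm{cov}}<\varepsilon$, where the constraint $\mathbf{C}\succeq0$ is active. There one must check the sign of $g'(0)$ rather than set the gradient to zero. One must also confirm that the case $\varepsilon=0$ never reaches this boundary, which would be infeasible. Both checks are handled in Step 2.
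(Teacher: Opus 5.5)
Your proof is correct and follows the same route as the paper. Both arguments minimize the mean term separately, reduce the covariance term via its spectral form to the strictly convex scalar function $\frac{\lambda_{\mathrm{cov}}}{2}s-\frac{\gamma}{2}\log(s+\varepsilon)$ of each eigenvalue, and handle the case $\gamma/\lambda_{\mathrm{cov}}\le\varepsilon$ (which forces $\varepsilon>0$) with the boundary minimizer $s=0$. Your Step 3 spells out the matrix-level uniqueness that the paper leaves implicit: equality forces every eigenvalue to equal $c^\star$, so $\mathbf{C}=c^\star\mathbf{I}_{N_h}$ whatever the eigenbasis.
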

Thus, the regularizer favors zero-mean representations with isotropic covariance, which is full rank when $\gamma>\lambda_{\mathrm{cov}}\varepsilon$. For $\varepsilon=0$, the preferred variance is $\gamma/\lambda_{\mathrm{cov}}$. These are the preferred moments of the regularizer alone; the task loss, network parameterization, and minibatch rank constraints may prevent their attainment.

When $\varepsilon=0$, the negative log-determinant is a strict barrier against covariance collapse: $\lambda_{\min}(\mathbf{C}_h)\rightarrow0$ implies $\mathcal{R}_{\mathrm{SAC}}(\boldsymbol{\mu},\mathbf{C}_h)\rightarrow+\infty.$
At the other extreme, the trace term prevents unbounded growth: $\lambda_{\max}(\mathbf{C}_h)\rightarrow+\infty$ implies $\mathcal{R}_{\mathrm{SAC}}(\boldsymbol{\mu},\mathbf{C}_h)\rightarrow+\infty.$ The mean term supplies a third, independent barrier: $\|\boldsymbol{\mu}\|\rightarrow\infty$ implies $ \mathcal{R}_{\mathrm{SAC}}(\boldsymbol{\mu},\mathbf{C}_h)\rightarrow+\infty,$ so the representation cannot escape the regularizer by drifting off in mean instead of collapsing or exploding in covariance. 
Consequently, every finite sublevel set has bounded mean and covariance eigenvalues bounded above and away from zero; see Theorem~\ref{thm:strict-covariance-non-collapse} in Appendix \ref{app:logdet-covariance-regulariser}. We evaluate this property experimentally in nonlinear ReLU networks, following \cite{kunin2024get}. Consistent with our predictions, the regularizer increases representation rank across initialization scales (Fig. \ref{fig:rich-Relu} in Appendix \ref{app:Two-layer}).

Altogether, these results suggest that SACReg promotes higher-rank representations in nonlinear networks, supporting its role as an anti-collapse regularizer while also preserving feature learning.
%The nonlinear covariance regulariser preserves the spectral mechanism identified by the linear analysis,  the negative log-determinant opposes contraction of low-variance representation directions, while the trace term controls the overall scale. %These experiments therefore provide qualitative support rather than establish a formal extension of the theory. 
% {\color{red}Things we should really check here did we apply the right regular + maybe I should add the VAE results + more chats about kunin }

% \begin{tcolorbox}[
%     colback=gray!14,
%     colframe=black,
%     boxrule=0.8pt,
%     arc=2mm,
%     left=3mm,
%     right=3mm,
%     top=1.5mm,
%     bottom=1.5mm
% ]
% \textbf{Nonlinear spectral regularization.}
% We apply the same anti-collapse principle directly to the representation covariance:
% \[
% \mathcal{R}_{\mathrm{SAC}}(\boldsymbol{\mu},\mathbf{C}_h)
% =
% \frac{\lambda_{\mathrm{mean}}}{2}\|\boldsymbol{\mu}\|_2^2
% +
% \frac{\lambda_{\mathrm{cov}}}{2}\operatorname{tr}(\mathbf{C}_h)
% -
% \frac{\gamma}{2}
% \log\det(\mathbf{C}_h+\varepsilon \mathbf{I}_{N_h}).
% \]
% Its preferred moments are
% \[
% \boldsymbol{\mu}^\star=0,
% \qquad
% \mathbf{C}_h^\star
% =
% \left(
% \frac{\gamma}{\lambda_{\mathrm{cov}}}
% -
% \varepsilon
% \right)_+ \mathbf{I}_{N_h}.
% \]
% Thus, when $\gamma>\lambda_{\mathrm{cov}}\varepsilon$, the preferred covariance is full-rank and isotropic.
% \end{tcolorbox}
\subsection{Spectral anti-collapse for JE-SSL}
\label{sec:ssl}
As shown in Fig.~\ref{fig:hz_geometry}, standard JE-SSL objectives
can leave backbone representations dimensionally collapsed.
To address this, we apply the spectral anti-collapse regularizer (SACReg),
introduced in Section~\ref{sec:logdet-covariance-regulariser},
to joint-embedding self-supervised learning (JE-SSL), targeting
variation between images to prevent representation collapse. Standard JE-SSL objectives enforce invariance and prevent collapse in the projected space $\mathbf{z}$. Motivated by our analysis of $\lambda$-balance and the resulting anti-collapse guarantees, our standalone SSL method, \mbox{$\lambda$-JEPA}, applies SACReg to both backbone and projected representations. For a batch of $B$ images, each with $V$ augmented views, let
$\mathbf{h}_i^{(j)}=f_\theta(\mathbf{v}_i^{(j)}),$ and $
\mathbf{z}_i^{(j)}=p_\phi(\mathbf{h}_i^{(j)})$
denote the backbone and projected representations, respectively.
Define their view centers by $\bar{\mathbf{h}}_i=\frac{1}{V}\sum_{j=1}^{V}\mathbf{h}_i^{(j)},$ $
\bar{\mathbf{z}}_i=\frac{1}{V}\sum_{j=1}^{V}\mathbf{z}_i^{(j)},$
and write
$\bar{\mathbf{H}}=\{\bar{\mathbf{h}}_i\}_{i=1}^{B}$,
$\bar{\mathbf{Z}}=\{\bar{\mathbf{z}}_i\}_{i=1}^{B}$, and
$\mathbf{Z}=\{\mathbf{z}_i^{(j)}\}_{i=1,\ldots,B;\,j=1,\ldots,V}$. For a batch of view centers
$\mathbf{R}=\{\mathbf{r}_i\}_{i=1}^{B}\subset\mathbb R^d$,
define its empirical mean and centered covariance as
$\boldsymbol{\mu}_R=\frac{1}{B}\sum_{i=1}^{B}\mathbf{r}_i,$ and
$\boldsymbol{\Sigma}_R=\frac{1}{B}\sum_{i=1}^{B}
(\mathbf{r}_i-\boldsymbol{\mu}_R)(\mathbf{r}_i-\boldsymbol{\mu}_R)^\top.$
We average over $V$ views so that SACReg mainly encourages variation across images rather than augmentations.
\begin{tcolorbox}
\begin{definition}[SACReg for JE-SSL]
\label{def:sacreg-je-ssl}
For $\varepsilon\geq0$ such that
$\boldsymbol{\Sigma}_R+\varepsilon \mathbf{I}_d\succ0$, we define
\[
\mathrm{SACReg}(\mathbf{R})
=
\frac{1}{2}
\left[
\|\boldsymbol{\mu}_R\|_2^2
+\operatorname{tr}(\boldsymbol{\Sigma}_R)
-\log\det(\boldsymbol{\Sigma}_R+\varepsilon \mathbf{I}_d)
-d
\right].
\]
Given an invariance objective $\mathcal L_{\mathrm{SSL}}(\mathbf{Z})$
and weights $\beta_h,\beta_z>0$, the $\lambda$-JEPA objective is
\begin{equation}
\label{eq:lambda-jepa}
\mathcal L_{\lambda\text{-JEPA}}
=
\mathcal L_{\mathrm{SSL}}(\mathbf{Z})
+\beta_h\,\operatorname{SACReg}(\bar{\mathbf{H}})
+\beta_z\,\operatorname{SACReg}(\bar{\mathbf{Z}}).
\end{equation}
%SACReg is evaluated using the dimension of each representation space.
\end{definition}
\end{tcolorbox}
The mean term centers the representation, the trace controls its overall scale, and the negative log-determinant penalizes vanishing covariance directions. 
The projected-space term provides anti-collapse where the SSL objective is optimized, while the backbone term directly protects the representation retained for downstream transfer. When applying our regularizer to an existing SSL method that already contains its own projected-space anti-collapse mechanism, we leave its original objective unchanged and add only the backbone term,
$\beta_h\operatorname{SACReg}(\bar{\mathbf{H}})$.
In practice, when the feature dimension exceeds the number of image centers in a minibatch, we evaluate the regularizer over orthogonal lower-dimensional slices of the representation, details are given in Appendix~\ref{app:slicing}.

%Another possible choice is the log-determinant regularizer, whose
%Bregmann divergence is so called the LogDet divergence. There are many ap-
%plications of the LogDet divergence such as metric learning and Gaussian
%graphical models . However, the log-determinant regularizer is less popular in online prediction and it is unclear how to derive general and non-trivial regretbounds when using the FTRL with the log-determinant regularizer, as posedas an open problem in \cite{}. 

% \end{remark}

\section{Experiments}
\label{sec:Exp}
We evaluate whether SACReg improves the performance and transferability of representations. We first report the performance of the standalone $\lambda$-JEPA objective on image and video benchmarks, then use controlled ImageNet-100~\citep{inet100tian2020contrastive} experiments to isolate the effect of backbone regularization across diverse SSL objectives and analyze how it changes representation geometry.

\vspace{-0.5em}
\subsection{Setup}
\label{sec:exp-setup}

We consider two experimental settings. Our main evaluation covers large-scale image and video SSL. For images, we train ViT-S and ViT-B~\citep{vitdosovitskiy2021an} on ImageNet-1k~\citep{imagenetrussakovsky2015} for 100 and 400 epochs from scratch. For video, we follow LeVJEPA~\citep{levjepakuhn2026} and train ViT-S and ViT-B from scratch on a class-balanced 20\% subset of Kinetics-710~\citep{kinetics710li2023uniformerv2}, for up to 1{,}085 epochs. Our controlled experiments use ViT-S on ImageNet-100~\citep{inet100tian2020contrastive} to study the effect of SACReg across different SSL objectives.

Due to compute limitations, we do not retune hyperparameters for the 400-epoch ImageNet-1k or 1{,}085-epoch video runs; we use these primarily to study scaling with extended training. For evaluation, we follow standard frozen feature protocols: the Lightly~\citep{lightlySusmelj} linear probe benchmark on ImageNet-1k, the VISReg~\citep{visreg2026} pipeline for transfer and the LeVJEPA pipeline for video. Details are given in Appendix~\ref{app:exp-details}. Whenever public checkpoints trained under a common codebase are available (OpenKnowledge AI\footnote{\href{https://huggingface.co/collections/OK-AI/imagenet-1k-self-supervised-vit-baselines}{https://huggingface.co/collections/OK-AI/imagenet-1k-self-supervised-vit-baselines}}), we evaluate them under the same pipeline, otherwise we report published numbers. We set the loss weights using the gradient-based calibration procedure described in Appendix~\ref{app:loss-weights}.

\begin{table*}[t]
\centering
\small

\begin{minipage}[t]{0.44\textwidth}
\centering
\phantomsubcaption
\label{tab:main-results}
\textbf{(a) Main: 100 epochs}\\[1mm]
{
\renewcommand{\arraystretch}{1.28}
\begin{tabular}{@{}llrr@{}}
\toprule
Method & Backbone & Linear & Transfer \\
\midrule
LeJEPA & ViT-S/16 & 62.5 & 68.7 \\
DINO   & ViT-S/16 & 70.0 & 75.7 \\
iBOT   & ViT-S/16 & 70.9 & 75.9 \\
\rowcolor{oursblue}
$\lambda$-JEPA & ViT-S/16
& 69.7 & 77.6 \\
\midrule
LeJEPA & ViT-B/16 & 69.7 & 74.2 \\
DINO   & ViT-B/16 & 73.9 & 78.3 \\
iBOT   & ViT-B/16 & 77.0 & 80.7 \\
VISReg & ViT-B/16 & 70.3 & 76.4 \\
\rowcolor{oursblue}
$\lambda$-JEPA & ViT-B/16
& 74.2 & 80.9 \\
\bottomrule
\end{tabular}
}
\end{minipage}
\hfill
\begin{minipage}[t]{0.53\textwidth}
\centering
\phantomsubcaption
\label{tab:long-training-results}
\textbf{(b) Longer training}\\[1mm]
\begin{tabular}{@{}llrrr@{}}
\toprule
Method & Backbone & Ep. & Linear & Transfer \\
\midrule
LeJEPA & ViT-S/16 & 300 & 66.4 & 71.9 \\
DINO   & ViT-S/16 & 300 & 73.8 & 79.4 \\
iBOT   & ViT-S/16 & 300 & 75.2 & 79.5 \\
\rowcolor{oursblue}
$\lambda$-JEPA & ViT-S/16 & 400
& 72.3 & 79.3 \\
\midrule
LeJEPA & ViT-B/16 & 300 & 72.4 & 75.2 \\
DINO   & ViT-B/16 & 300 & 75.2 & 79.3 \\
iBOT   & ViT-B/16 & 300 & 78.6 & 82.3 \\
MoCo v3$^\ddag$ & ViT-B/16 & 300 & 75.9 & 80.5 \\
DINO$^\ddag$    & ViT-B/16 & 400 & 77.2 & 83.1 \\
iBOT$^\ddag$    & ViT-B/16 & 400 & 78.5 & 83.2 \\
VISReg$^\ddag$  & ViT-B/16 & 400 & 74.6 & 79.1 \\
\rowcolor{oursblue}
$\lambda$-JEPA & ViT-B/16 & 400
& 76.0 & 82.0 \\
\bottomrule
\end{tabular}
\end{minipage}
\vspace{-0.5em}
\caption{
ImageNet-1k linear probing and mean linear-probe transfer over eight datasets.
\textbf{(a)} Main 100-epoch comparison.
\textbf{(b)} Longer-training results, including public SSL checkpoints for context.
$^\ddag$ Transfer results are reported by VISReg; all others are evaluated
by us under the matched VISReg protocol.
Detailed results are in Appendix~\ref{app:detailed-results}.
}
\vspace{-1.5em}
\label{tab:image-results}
\end{table*}

% % =========================
% % Main results: 100 epochs
% % =========================
% \begin{table}[t]
% \centering
% \small
% \begin{tabular}{llrr}
% \rowcolor{gray!20}\toprule
% Method & Backbone & Linear & Transfer Avg. \\
% \midrule
% LeJEPA & ViT-S/16 & 62.5 & 68.7 \\
% DINO   & ViT-S/16 & 70.0 & 75.7 \\
% iBOT   & ViT-S/16 & 70.9 & 75.9 \\
% \textbf{$\bm{\lambda}$-JEPA} & \textbf{ViT-S/16}
% & \textbf{69.7} & \textbf{77.6} \\
% \midrule
% LeJEPA & ViT-B/16 & 69.7 & 74.2 \\
% DINO   & ViT-B/16 & 73.9 & 78.3 \\
% iBOT   & ViT-B/16 & 77.0 & 80.7 \\
% VISReg & ViT-B/16 & 70.3 & 76.4 \\
% \textbf{$\bm{\lambda}$-JEPA} & \textbf{ViT-B/16}
% & \textbf{74.2} & \textbf{80.9} \\
% \bottomrule
% \end{tabular}
% \caption{
% ImageNet-1k 100-epoch results. ImageNet-1k linear probing accuracy,
% together with average linear-probe transfer accuracy over DTD, Aircraft,
% Cars, CIFAR10, CIFAR100, Flowers, Food and Pets. See Appendix~\ref{app:detailed-results} for
% detailed transfer results.
% }
% \label{tab:main-results}
% \end{table}
\vspace{-0.5em}
\subsection{Performance on image self-supervised learning}
Table~\ref{tab:main-results} reports our main 100-epoch results on ImageNet-1k linear probing and average transfer across eight classification datasets. We compare most directly with LeJEPA~\citep{lejepa2025} and VISReg~\citep{visreg2026}, the related explicitly regularized JE-SSL methods. We include DINO~\citep{dinocaron2021} and iBOT~\citep{ibotzhou2022} as established SSL baselines for broader comparison which, until now, have significantly outperformed JEPA-based models. At 100 epochs, where training length is matched, $\lambda$-JEPA substantially improves over prior explicitly regularized methods on both ImageNet-1k linear probing and transfer. The gains are particularly strong on transfer, improving over LeJEPA by 8.9 points with ViT-S and over VISReg by 4.5 points with ViT-B. $\lambda$-JEPA is also competitive with DINO and iBOT: its ImageNet-1k performance approaches these methods, while both model sizes achieve the highest average transfer among the 100-epoch results.

%At 100 epochs, where training length is matched, $\lambda$-JEPA substantially improves over the prior explicitly regularized methods. On ImageNet-1k linear probing, our ViT-S improves over LeJEPA by 7.2 points, while our ViT-B improves over LeJEPA and VISReg by 4.5 and 3.9 points, respectively. The gains are larger on the transfer benchmark. $\lambda$-JEPA improves 8.9 points over LeJEPA for ViT-S, and 6.7 and 4.5 points over LeJEPA and VISReg for ViT-B. Performance is also competitive with DINO and iBOT: ViT-S is within 0.3 points of DINO on ImageNet-1k linear probing, ViT-B slightly exceeds DINO, and both models achieve the highest transfer average among the 100-epoch results.

Table~\ref{tab:long-training-results} reports results with extended pretraining: we use the same recipe and hyperparameters as the 100-epoch experiments, without any additional tuning for the longer runs. Extending $\lambda$-JEPA to 400 epochs improves both ImageNet-1k linear probing and transfer performance. Since the available checkpoints use different training lengths, we include them as longer training comparisons rather than strictly matched baselines. With ViT-B, $\lambda$-JEPA reaches 82.0 average transfer, outperforming the 400-epoch VISReg result by 2.9 points and coming within 1.2 points of DINO and iBOT. With ViT-S, it reaches 79.3 average transfer, essentially matching the 300-epoch DINO and iBOT results.

\textbf{Takeaway.} $\lambda$-JEPA substantially improves over prior explicitly regularized JE-SSL methods on both ImageNet-1k classification and transfer. To the best of our knowledge, it is the first JEPA-based method to approach DINO and iBOT on both evaluations, while achieving the highest average transfer performance at 100 epochs.

% % =========================
% % Longer training
% % =========================
% \begin{table}[t]
% \centering
% \small
% \begin{tabular}{llrrr}
% \rowcolor{gray!20}\toprule
% Method & Backbone & Ep. & Linear & Transfer \\
% \midrule
% LeJEPA & ViT-S/16 & 300 & 66.4 & 71.9 \\
% DINO   & ViT-S/16 & 300 & 73.8 & 79.4 \\
% iBOT   & ViT-S/16 & 300 & 75.2 & 79.5 \\
% \textbf{$\bm{\lambda}$-JEPA} & \textbf{ViT-S/16} & \textbf{400}
% & \textbf{72.3} & \textbf{79.3} \\
% \midrule
% LeJEPA & ViT-B/16 & 300 & 72.4 & 75.2 \\
% DINO   & ViT-B/16 & 300 & 75.2 & 79.3 \\
% iBOT   & ViT-B/16 & 300 & 78.6 & 82.3 \\
% MoCo v3$^\ddag$ & ViT-B/16 & 300 & 75.9 & 80.5 \\
% DINO$^\ddag$    & ViT-B/16 & 400 & 77.2 & 83.1 \\
% iBOT$^\ddag$    & ViT-B/16 & 400 & 78.5 & 83.2 \\
% VISReg$^\ddag$  & ViT-B/16 & 400 & 74.6 & 79.1 \\
% \textbf{$\bm{\lambda}$-JEPA} & \textbf{ViT-B/16} & \textbf{400}
% & \textbf{76.0} & \textbf{82.0} \\
% \bottomrule
% \end{tabular}
% \caption{
% Longer-training results. ImageNet-1k linear probing accuracy, together
% with average linear-probe transfer accuracy. Longer-training public SSL
% checkpoints are included as context. $^\ddag$ Transfer numbers are
% reported by VISReg; other transfer results are evaluated by us under the
% matched VISReg protocol. See Appendix~\ref{app:detailed-results} for
% detailed transfer results.
% }
% \label{tab:long-training-results}
% \end{table}
% \CD{Can we put the table next to each other to save space?}

\vspace{-0.5em}
\subsection{Performance on video self-supervised learning}

\label{sec:exp-video}

\begin{wraptable}[21]{r}{0.38\columnwidth}
\vspace{-1.\baselineskip}
\centering
\footnotesize
\setlength{\tabcolsep}{3.5pt}

\begin{tabular}{@{}lrrr@{}}
\toprule
Method & IN-1k & SSv2 & K400 \\
\midrule

\multicolumn{4}{@{}l}{\textit{ViT-S/16, 240 epochs}} \\
\cmidrule(lr){1-4}
LeVJEPA & 39.4 & --- & --- \\
V-JEPA 2 & 38.7 & --- & --- \\
\rowcolor{oursblue}
$\lambda$-JEPA & 46.8 & 37.3 & 40.0 \\

\addlinespace[1pt]
\midrule
\multicolumn{4}{@{}l}{\textit{ViT-B/16, 240 epochs}} \\
\cmidrule(lr){1-4}
VideoMAEv2 & 47.1 & --- & --- \\
V-JEPA 2 & 51.6 & --- & --- \\
LeVJEPA & 50.7 & 30.4 & --- \\
\rowcolor{oursblue}
$\lambda$-JEPA & 52.9 & 43.9 & 44.2 \\

\addlinespace[1pt]
\midrule
\multicolumn{4}{@{}l}{\textit{ViT-B/16, longer training}} \\
\cmidrule(lr){1-4}
VideoMAEv2 & 53.4 & 43.6 & 37.4 \\
V-JEPA 2 & 51.6 & 42.5 & 40.7 \\
LeVJEPA & 61.0 & 40.4 & 44.6 \\
\rowcolor{oursblue}
$\lambda$-JEPA & 57.1 & 48.3 & 45.7 \\
\bottomrule
\end{tabular}

\caption{Video SSL on IN-1k, SSv2, and K400. Longer training LeVJEPA and $\lambda$-JEPA use
1085 epochs.
}
\label{tab:video-main}
\vspace{-0.5\baselineskip}
\end{wraptable}

We apply the same objective to video by replacing the loss inside the LeVJEPA~\citep{levjepakuhn2026} training pipeline while keeping the model, optimizer and data pipeline unchanged. We train these models from scratch and freeze the encoders after pretraining. The frozen encoders are evaluated on three benchmarks with different characteristics: ImageNet-1k for recognition from a single image, Something-Something-v2 (SSv2)~\citep{ssv2goyal2017something} for temporal reasoning and Kinetics-400~\citep{kinetics400kay2017} for action recognition.

Table~\ref{tab:video-main} compares $\lambda$-JEPA with LeVJEPA, V-JEPA 2~\citep{vjepa2assran2025v} and VideoMAEv2~\citep{vmaewang2023videomae}. At 240 epochs, $\lambda$-JEPA improves over LeVJEPA and V-JEPA 2 on ImageNet-1k for both ViT-S and ViT-B. The largest gains are on SSv2: our ViT-B improves over LeVJEPA by 13.5 points at 240 epochs and by 7.9 points at 1{,}085 epochs. For ImageNet-1k and SSv2, LeVJEPA evaluates frozen features with an attentive probe followed by a linear classifier, whereas on K400 it reports a linear probe on mean-pooled features. In our setting, SACReg is applied to the CLS representation, so we report a CLS probe in Table~\ref{tab:video-main}, evaluating the representation directly regularized during pretraining. It reaches 45.7 at 1{,}085 epochs compared with 44.6 for LeVJEPA's probe over the mean-pooled features. For completeness, we also evaluate the same frozen encoder using mean-pooled features and an attentive probe, obtaining 43.4 and 62.0, respectively. The lower mean-pooled result is consistent with these features not being directly regularized in our setup.

\textbf{Takeaway.} $\lambda$-JEPA extends effectively to video SSL, with its clearest gains on temporal evaluations, showing that the benefits of backbone spectral regularization transfer beyond image pretraining.

\begin{figure*}[t]
    \centering
    \includegraphics[width=0.8\textwidth]{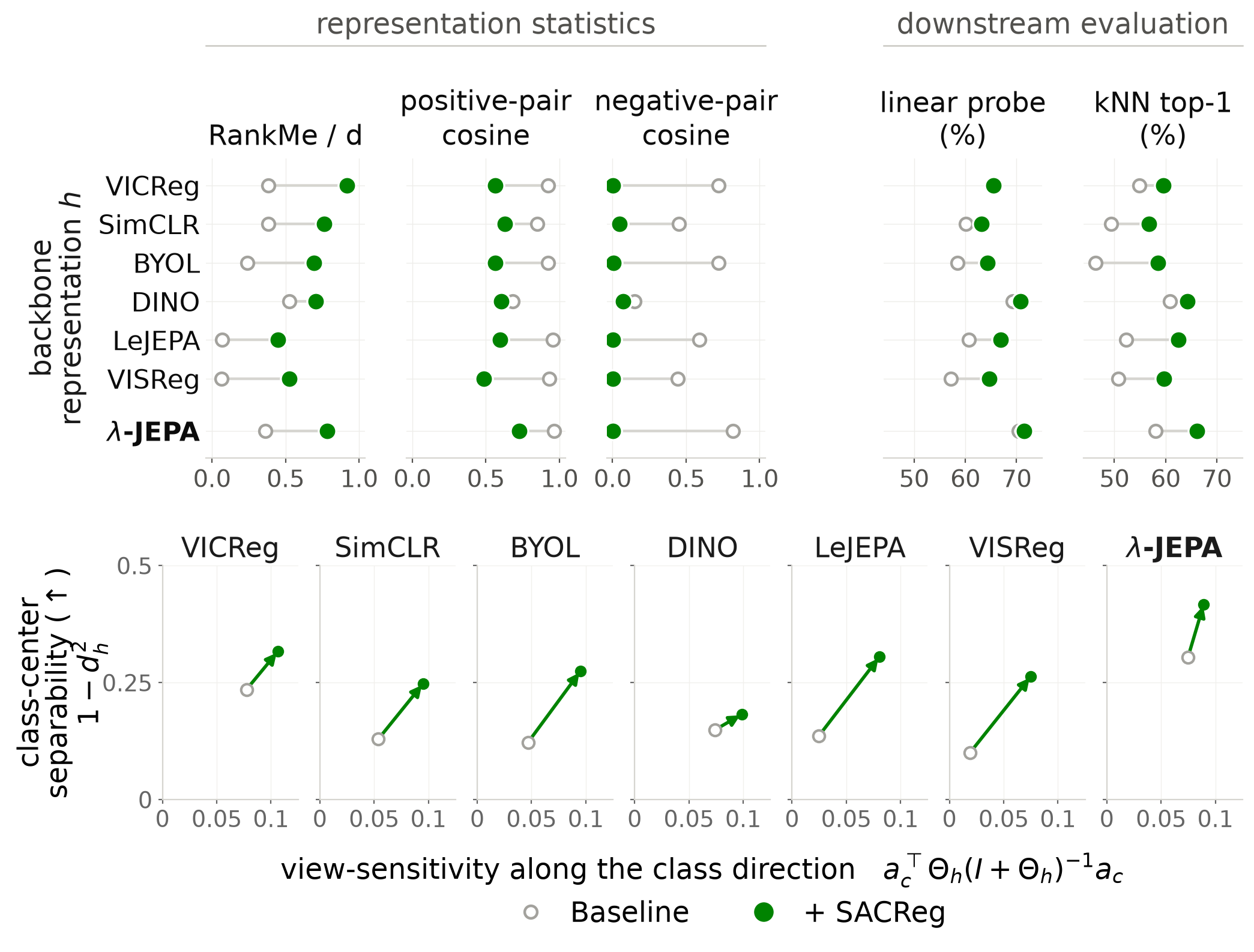}
    \caption{
    Effect of backbone SACReg across SSL objectives on ImageNet-100.
    \textbf{Top:} backbone statistics and downstream evaluation.
    \textbf{Bottom:} class-center separability versus view sensitivity along class-prediction directions.
    Open and green markers denote baseline and +SACReg, respectively; arrows show their change.
    Detailed results are in Appendix~\ref{app:detailed-results}.
    }
    \label{fig:in100-audit-org}
\end{figure*}

\subsection{Controlled experiments on ImageNet-100}
\label{sec:exp-in100}
We next use controlled ImageNet-100 experiments to better understand the effects of SACReg across SSL objectives. Our analysis focuses on two questions: how SACReg changes representation geometry and downstream performance, and what variation is recovered in the additional representation directions. To this end, we apply SACReg to the backbone representations of six existing augmentation-based SSL methods on ImageNet-100~\citep{inet100tian2020contrastive}: LeJEPA~\citep{lejepa2025}, VICReg~\citep{vicreg2022}, VISReg~\citep{visreg2026}, SimCLR~\citep{simclr2020}, DINO~\citep{dinocaron2021}, and BYOL~\citep{byolgrill2020}. For each method, we keep its original objective and training recipe unchanged and add only $\beta_{\mathrm{h}}\operatorname{SACReg}$ to the backbone representation. The regularizer weight is set using the gradient based calibration procedure (see Appendix~\ref{app:loss-weights} for details) and is not tuned individually to the methods except DINO. For DINO we tune the weight separately due to its sensitivity to the gradient ratios.

Figure~\ref{fig:in100-audit-org} (top) shows that applying SACReg to the backbones produces a consistent change in the representations across objectives. At the backbone, RankMe increases substantially for every method, while the cosine geometry becomes less concentrated. Negative-pair cosine similarities approach zero while positive-pairs generally decrease as well. In contrast, corresponding statistics in the projection/loss space change much less. This suggests that SACReg primarily reshapes the backbone representation without substantially impeding the learning after the projection. The same intervention also improves downstream performance. kNN accuracy improves across all methods convincingly, while linear probe accuracy also improves, with smaller gains for VICReg. Notably, the gains therefore do not come from making positive pairs more invariant. %; instead, SACReg broadens the backbone representation while preserving additional variation that is useful for downstream prediction.

% \textbf{Augmentation thickness}
% We next distinguish variation between images from variation across augmented views of the same image. Let $Q$ denote a source image and $\mathbf{h}$ its backbone representation under a random augmentation. We define $B_h = \operatorname{Cov}\left(\mathbb{E}[h\mid Q]\right)$ and $A_h = \mathbb{E}\left[\operatorname{Cov}(h\mid Q)\right]$, where $B_h$ captures variation between image centers, with $r_h=\operatorname{rank}(B_h)$, and $A_h$ captures variation across views of the same image. We then define the \emph{augmentation thickness}
% \begin{equation}
%     \Theta_h = B_h^{\dagger/2}A_hB_h^{\dagger/2},
%     \qquad
%     \theta_h(a)
%     =
%     \frac{a^\top A_h a}{a^\top B_h a},
%     \label{eq:thickness-main}
% \end{equation}
% so thickness measures augmentation variation relative to the separation of image centers along that direction. Appendix~\ref{app:thickness} shows that thickness has a two-sided role: too little can discard meaningful variation across augmentations, while too much allows within-image variation to dominate the separation between images.

% \looseness=-1To study thickness, we use the class indicators. For each class $c$, let $a_c$ be its linear prediction from whitened image centers and $d_h(c)^2$ the corresponding error. We measure class-center separability by $1-d_h(c)^2$ and view sensitivity by $a_c^\top \Theta_h (I_{r_h} + \Theta_h)^{-1} a_c$. Details are in Appendix~\ref{app:exp-details}.

\textbf{Augmentation thickness}
We next distinguish variation between images from variation across augmented views of the same image. Let $Q$ denote a source image and $\mathbf{h}$ its backbone representation under a random augmentation. We define $\mathbf{B}_h = \operatorname{Cov}\left(\mathbb{E}[\mathbf{h}\mid Q]\right)$ and $\mathbf{A}_h = \mathbb{E}\left[\operatorname{Cov}(\mathbf{h}\mid Q)\right]$, where $\mathbf{B}_h$ captures variation between image centers, with $r_h=\operatorname{rank}(\mathbf{B}_h)$, and $\mathbf{A}_h$ captures variation across views of the same image. We then define the \emph{augmentation thickness}
\begin{equation}
    \boldsymbol{\Theta}_h = \mathbf{B}_h^{\dagger/2}\mathbf{A}_h\mathbf{B}_h^{\dagger/2},
    \qquad
    \theta_h(\mathbf{a})
    =
    \frac{\mathbf{a}^\top \mathbf{A}_h \mathbf{a}}{\mathbf{a}^\top \mathbf{B}_h \mathbf{a}},
    \label{eq:thickness-main}
\end{equation}
so thickness measures augmentation variation relative to the separation of image centers along that direction. Appendix~\ref{app:thickness} shows that thickness has a two-sided role: too little can discard meaningful variation across augmentations, while too much allows within-image variation to dominate the separation between images.

\looseness=-1To study thickness, we use the class indicators. For each class $c$, let $\mathbf{a}_c$ be its linear prediction from whitened image centers and $d_h(c)^2$ the corresponding error. We measure class-center separability by $1-d_h(c)^2$ and view sensitivity by $\mathbf{a}_c^\top \boldsymbol{\Theta}_h (\mathbf{I}_{r_h} + \boldsymbol{\Theta}_h)^{-1} \mathbf{a}_c$. Details are in Appendix~\ref{app:exp-details}.

Figure~\ref{fig:in100-audit-org} (bottom) shows that, averaged across classes, all six SSL methods have greater class-center separability and greater view sensitivity under SACReg. Therefore, SACReg's downstream gains are not obtained by suppressing augmentation-dependent variation. Specifically, the more consistent effect is an improvement in how image centers are organized along downstream-relevant directions.  For $\lambda$-JEPA, class-center separability improves for 96 of 100 classes, whereas view sensitivity moves in both directions. SACReg therefore consistently improves class-center separation without uniformly moving the representation toward either invariance or augmentation sensitivity.

\textbf{Takeaway.} Across diverse SSL objectives, SACReg consistently increases representation rank and improves downstream performance. The augmentation thickness analysis shows that these gains are not explained by increased augmentation invariance. SACReg improves class-center separation while preserving augmentation variation.

\section{Conclusion}
In this paper, we targeted the asymmetry between anti-collapse regularization in joint embedding self-supervised models and the representation used for downstream tasks. First, we showed that preventing collapse in the projected representation does not guarantee a high rank backbone representation. This matters because a low rank backbone reduces the representational capacity, which can harm downstream performance. Our solution is directly motivated by a linear analysis of negative layer balance, which is known as $\lambda$ balance in the feature learning literature. From this, we introduced SACReg, which directly regularizes the representation covariance, and used it to build $\lambda$-JEPA. In controlled experiments, SACReg consistently increases backbone rank and improves downstream performance while preserving augmentation-dependent variation. At scale, $\lambda$-JEPA narrows the gap between explicitly regularized JEPA methods and DINO, while improving frozen transfer across image and video SSL benchmarks.

\looseness=-1SACReg regularizes CLS tokens and uses global views. Extending it to patch-level features and local crops is non-trivial: local views introduce substantially larger augmentation-induced variation, making the per-image view centers noisier and the covariance across centers harder to regularize reliably. In our experiments, this led to unstable training. Understanding how different augmentations shape this variation and which augmentation-sensitive directions are useful for downstream tasks remains an open direction for future work. Developing a formulation that more robustly regularizes spatially local representations while separating augmentation variation from between-image variation could further improve performance on dense prediction tasks. Additionally, JEPA-style models have already been connected with world modeling and identifiabilty theory for learning causal representations~\citep{klindt2026does}. This is conceptually natural, as JEPA models optimize the same principles of causal representations: invariance to relevant transformations and anti-collapse~\citep{yao2025unifying}. However, methods in causal representation learning leverage mostly negative samples (and sometimes reconstruction) to implement the anti-collapse properties. We encourage future work to explore the role of our regularizer in the identifiability of world models. %and further connect with the lazy-and-rich learning literature. 

% Good?
% Future works may explor and so their synergy with JEPA and the regularizer discussed in this work remains an open question.%  to be investigated.

\subsection*{AI use statement}
We used generative AI tools as coding assistants to support the implementation and modification of the codebase and for language editing. These tools were not used to autonomously generate research results or replace the authors' scientific judgment. The authors determined the research methodology and experimental design, reviewed and validated all AI-assisted implementations.

\subsection*{Ethics statement}
This work studies methods for self-supervised representation learning using established image and video datasets and does not involve human-subject experiments or the collection of new personal or sensitive data. We are not aware of specific ethical concerns introduced by the proposed method beyond those generally associated with large-scale representation learning and computer vision models. The work is intended as a general-purpose methodological contribution and we encourage appropriate consideration of dataset provenance, bias and privacy risks when deploying learned representations.

\subsection*{Reproducibility statement}
We provide detailed information required to reproduce our theoretical and empirical results throughout the paper and appendices. The assumptions, statements and proofs supporting the theoretical results are provided in Appendix~\ref{app:theory}. For the empirical results, implementation details for SACReg are given in Appendix~\ref{app:slicing}, Appendix~\ref{app:exp-details} includes information about datasets, architectures, training procedures, loss-weight calibration, evaluation pipelines and baselines used in our experiments. Additional analyses and their definitions are provided in Appendices~\ref{app:lejepa} and~\ref{app:thickness}. We reuse public baseline implementations and checkpoints whenever available, and explicitly document differences in training and evaluation protocols when exact matching is not possible. In particular, for video SSL, some baselines use different frozen-feature evaluation protocols (e.g., CLS, mean-pooled, or attentive probes); we report the corresponding protocols and evaluate our representations under multiple probe choices to facilitate comparison. Furthermore, we provide supplementary material containing the code used to run every experiment reported in the paper. These materials are intended to enable reproduction of both the proposed method and the empirical evaluations.

\subsection*{Acknowledgments}
This research was funded in whole or in part by the Austrian Science Fund (FWF) [10.55776/COE12]. CD thank Samuel Lippl and Nicolas Anguita for their constructive discussions.
This research was supported by the ISTA Responsible AI Program, made possible through the generous support of Garrett Camp and the Camp Foundation.  M.F. was supported by the project “Building Energy Systems on causal reasoning (BOSS)“, funded within the “Technologies and Innovations for the Climate-Neutral City” (TIKS) Programme
of the Austrian Research Promotion Agency (FFG).

\bibliography{iclr2027_conference}
\bibliographystyle{iclr2027_conference}

\appendix

\section{A non-collapse regularizer inspired by the feature-learning literature}
\label{app:theory}
\label{sec:exact_learning_dynamics}

\subsection{Linear network}\label{app:linnet}

\subsubsection{Problem setup}

Consider a supervised learning problem with training data
$
\left\{
    \left(\mathbf{x}_n,\mathbf{y}_n\right)
\right\}_{n=1}^{P},
\mathbf{x}_n\in\mathbb{R}^{N_i},
\mathbf{y}_n\in\mathbb{R}^{N_o}.
$
Let
$
\mathbf{X}=
\begin{bmatrix}
\mathbf{x}_1 & \cdots & \mathbf{x}_P
\end{bmatrix}
\in\mathbb{R}^{N_i\times P},$ and $
\mathbf{Y}=
\begin{bmatrix}
\mathbf{y}_1 & \cdots & \mathbf{y}_P
\end{bmatrix}
\in\mathbb{R}^{N_o\times P}.
$
We consider a two-layer linear network
$ \widehat{\mathbf{y}}_n = \wb\wa\mathbf{x}_n,
$
with encoder and decoder matrices
$
\wa\in\mathbb{R}^{N_h\times N_i}$,
$\wb\in\mathbb{R}^{N_o\times N_h}.
$
The corresponding end-to-end mapping is
$
\mathbf{M}=\wb\wa.
$

The empirical mean-squared error is
\begin{equation}
\mathcal{L}_{\mathrm{task}}(\wa,\wb)
=
\frac{1}{2P}
\left\|
    \wb\wa \mathbf{X}-\mathbf{Y}
\right\|_F^2.
\label{eq:task-loss}
\end{equation}
%More generally, the results below apply to any differentiable task loss of the form $\mathcal{L}_{\mathrm{task}}(\wa,\wb)=\ell(\wb\wa)$
%that depends on the parameter\\lambdas only through the end-to-end mapping. 
Throughout this section, we consider continuous-time gradient flow,
with time constant \(\tau>0\):
\begin{equation}
\tau\dot{\wa}
=
-\nabla_{\wa}\mathcal{L}_{\mathrm{task}},
\qquad
\tau\dot{\wb}
=
-\nabla_{\wb}\mathcal{L}_{\mathrm{task}}.
\label{eq:gradient-flow-definition}
\end{equation}

\begin{definition}[\(\lambda\)-balancedness]
\label{def:lambda-balanced}
Define the layer imbalance matrix by
\begin{equation}
\mathbf{\Delta}(\wa,\wb)
=
\wb^\top\wb-\wa\wa^\top
\in\mathbb{R}^{N_h\times N_h}.
\label{eq:imbalance-definition}
\end{equation}
The network is said to be \(\lambda_{\mathrm{bal}}\)-balanced if
$\mathbf{\Delta}(\wa,\wb)=\lambda_{\mathrm{bal}}\mathbf{I}_{N_h}.$
The case \(\lambda_{\mathrm{bal}}=0\) corresponds to standard
balancedness.
\end{definition}

\subsubsection{Balance conservation without explicit regularization}

For completeness we first recall the balance-conservation property of two-layer linear
networks trained without explicit regularization, also derived in \cite{domine2025lazy}.

\begin{lemma}[Conservation of layer imbalance]
\label{lem:unregularised-balance-conservation}
Suppose that the network is trained by gradient flow on a
differentiable loss of the form
$ \mathcal{L}_{\mathrm{task}}(\wa,\wb) = \ell(\wb\wa),$
without weight decay or a log-determinant regularizer. Then
\begin{equation}
\frac{\mathrm{d}}{\mathrm{d}t}
\left(
    \wb^\top\wb-\wa\wa^\top
\right)
=
\mathbf{0}.
\label{eq:unregularised-balance-conservation}
\end{equation}
Consequently, if the network is
\(\lambda_{\mathrm{bal}}\)-balanced at initialization, then it remains
\(\lambda_{\mathrm{bal}}\)-balanced throughout gradient-flow training.
\end{lemma}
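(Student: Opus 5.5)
The plan is to use the chain rule through the end-to-end map and observe that the two contributions to the imbalance derivative cancel exactly. Let $\mathbf{G}=\nabla\ell(\mathbf{M})\in\mathbb{R}^{N_o\times N_i}$ be the gradient of $\ell$ at $\mathbf{M}=\wb\wa$. Differentiating $\ell(\wb\wa)$ gives
\[
\nabla_{\wa}\mathcal{L}_{\mathrm{task}}=\wb^\top\mathbf{G},
\qquad
\nabla_{\wb}\mathcal{L}_{\mathrm{task}}=\mathbf{G}\wa^\top .
\]
Substituting into the gradient flow \eqref{eq:gradient-flow-definition} yields $\tau\dot{\wa}=-\wb^\top\mathbf{G}$ and $\tau\dot{\wb}=-\mathbf{G}\wa^\top$.

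Next I will differentiate each Gram matrix with the product rule. For the decoder,
\[
\tau\frac{\mathrm{d}}{\mathrm{d}t}(\wb^\top\wb)=\tau\dot{\wb}^\top\wb+\tau\wb^\top\dot{\wb}=-\wa\mathbf{G}^\top\wb-\wb^\top\mathbf{G}\wa^\top .
\]
For the encoder,
\[
\tau\frac{\mathrm{d}}{\mathrm{d}t}(\wa\wa^\top)=\tau\dot{\wa}\wa^\top+\tau\wa\dot{\wa}^\top=-\wb^\top\mathbf{G}\wa^\top-\wa\mathbf{G}^\top\wb .
\]
The two expressions are identical, so their difference, which is $\tau\dot{\mathbf{\Delta}}$, vanishes. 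This proves \eqref{eq:unregularised-balance-conservation}. For the MSE loss \eqref{eq:task-loss} specifically, $\mathbf{G}=\frac{1}{P}(\mathbf{M}\mathbf{X}-\mathbf{Y})\mathbf{X}^\top$, but the cancellation uses only the factorized structure of the loss.

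The consequence then follows by integration. Since $\mathbf{\Delta}(t)$ is differentiable with zero derivative on the interval where the flow exists, it equals $\mathbf{\Delta}(0)$ throughout. If $\mathbf{\Delta}(0)=\lambda_{\mathrm{bal}}\mathbf{I}_{N_h}$, the network therefore stays $\lambda_{\mathrm{bal}}$-balanced.

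There is no real obstacle here. The only points needing care are the transpose bookkeeping in the chain rule, so that the cross terms visibly match, and making the statement hold on the maximal existence interval of the flow. For a $C^1$ loss, $\nabla\ell$ is continuous, so the flow is well defined locally, and the conclusion is stated on that interval.
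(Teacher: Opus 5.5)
Your proposal is correct and matches the paper's proof: you write the gradient flow as $\tau\dot{\wa}=-\wb^\top\mathbf{G}$ and $\tau\dot{\wb}=-\mathbf{G}\wa^\top$, expand both Gram-matrix derivatives with the product rule, and observe that the two expressions coincide, so the imbalance is conserved. Your added remark that the conclusion holds on the maximal existence interval of the flow is a small point of care that the paper leaves implicit.
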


\begin{proof}
Let
$
\mathbf{G}
=
\nabla_{\mathbf{M}}\ell(\mathbf{M})
\big|_{\mathbf{M}=\mathbf{W}_2\mathbf{W}_1}.
$
The gradient-flow equations for the unregularized task loss are
\[
\tau\dot{\wa}
=
-\wb^\top \mathbf{G},
\qquad
\tau\dot{\wb}
=
-\mathbf{G}\wa^\top.
\]
Therefore,
\begin{align}
\tau\frac{\mathrm{d}}{\mathrm{d}t}
\left(\wb^\top\wb\right)
&=
-\wa \mathbf{G}^\top\wb
-\wb^\top \mathbf{G}\wa^\top,
\\
\tau\frac{\mathrm{d}}{\mathrm{d}t}
\left(\wa\wa^\top\right)
&=
-\wb^\top \mathbf{G}\wa^\top
-\wa \mathbf{G}^\top\wb.
\end{align}
The two expressions are identical. Subtracting them proves
\eqref{eq:unregularised-balance-conservation}.
\end{proof}

\begin{remark}[Gradient flow versus gradient descent]
The conservation law in
Lemma~\ref{lem:unregularised-balance-conservation} is exact under
continuous-time gradient flow. For finite-step gradient descent,
additional terms of order \(\eta^2\) generally appear, so exact
conservation need not hold for a nonzero learning rate \(\eta\).
\end{remark}

\subsubsection{Non-collapse property of balanced initialization}\label{app:noncollapse}

We first show that a negative layer balance is sufficient to prevent
encoder collapse. Under a positive-definiteness assumption on the input
covariance, it also prevents collapse of the hidden representations.
\begin{lemma}[Non-collapse under negative layer balance]
\label{lem:balanced-non-collapse}
Let $\wa\in\mathbb{R}^{N_h\times N_i}$ and
$\wb\in\mathbb{R}^{N_o\times N_h}$ satisfy
\begin{equation}
\wb^\top\wb-\wa\wa^\top
=
\lambda_{\mathrm{bal}}\mathbf{I}_{N_h},
\qquad \lambda_{\mathrm{bal}}<0.
\label{eq:negative-layer-balance}
\end{equation}
Then
$
\wa\wa^\top
\succeq
-\lambda_{\mathrm{bal}}\mathbf{I}_{N_h}.$
Consequently, $\wa$ has full row rank, with
$\sigma_{\min}(\wa)\geq\sqrt{-\lambda_{\mathrm{bal}}}$.

Furthermore, let
$\bar{\mathbf{x}}=P^{-1}\sum_{n=1}^{P}\mathbf{x}_n$
and suppose that the centered empirical input covariance satisfies $\boldsymbol{\Sigma}_x
=
\frac{1}{P}\sum_{n=1}^{P}
(\mathbf{x}_n-\bar{\mathbf{x}})
(\mathbf{x}_n-\bar{\mathbf{x}})^\top
\succeq
\kappa_x \mathbf{I}_{N_i}
\text{ for some }\kappa_x >0.$
For the hidden representations $\mathbf{h}_n=\wa\mathbf{x}_n$,
let $\bar{\mathbf{h}}=P^{-1}\sum_{n=1}^{P}\mathbf{h}_n$.
Their centered empirical covariance then satisfies
\begin{equation}
\boxed{
\mathbf{C}_h
=
\frac{1}{P}\sum_{n=1}^{P}
(\mathbf{h}_n-\bar{\mathbf{h}})
(\mathbf{h}_n-\bar{\mathbf{h}})^\top
\succeq
-\kappa_x \lambda_{\mathrm{bal}}\mathbf{I}_{N_h}
}.
\label{eq:balanced-representation-non-collapse}
\end{equation}

\end{lemma}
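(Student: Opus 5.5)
The plan is to rearrange the balance identity and then transfer the resulting lower bound through the input covariance by a congruence argument.

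\emph{Step 1: encoder Gram bound.} From \eqref{eq:negative-layer-balance} we have
\[
\wa\wa^\top=\wb^\top\wb-\lambda_{\mathrm{bal}}\mathbf{I}_{N_h}.
\]
Since $\wb^\top\wb\succeq 0$, this gives $\wa\wa^\top\succeq-\lambda_{\mathrm{bal}}\mathbf{I}_{N_h}\succ0$.

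The eigenvalues of $\wa\wa^\top$ are the squared singular values of $\wa$ associated with its $N_h$ rows. The Courant--Fischer characterization, $\lambda_{\min}(\wa\wa^\top)=\min_{\|\mathbf{u}\|=1}\|\wa^\top\mathbf{u}\|^2$, then yields $\sigma_{\min}(\wa)^2\ge-\lambda_{\mathrm{bal}}$. In particular $\wa^\top\mathbf{u}\neq0$ for every unit $\mathbf{u}$, so $\wa$ has full row rank. This also forces $N_h\le N_i$ implicitly.

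\emph{Step 2: express $\mathbf{C}_h$ through $\boldsymbol{\Sigma}_x$.} By linearity, $\bar{\mathbf{h}}=\wa\bar{\mathbf{x}}$, so $\mathbf{h}_n-\bar{\mathbf{h}}=\wa(\mathbf{x}_n-\bar{\mathbf{x}})$. Substituting into the definition gives $\mathbf{C}_h=\wa\boldsymbol{\Sigma}_x\wa^\top$.

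\emph{Step 3: congruence.} Positive-semidefinite ordering is preserved under congruence: $\mathbf{A}\succeq\mathbf{B}$ implies $\wa\mathbf{A}\wa^\top\succeq\wa\mathbf{B}\wa^\top$. Applying this to $\boldsymbol{\Sigma}_x\succeq\kappa_x\mathbf{I}_{N_i}$ gives $\mathbf{C}_h\succeq\kappa_x\wa\wa^\top$. Combining with Step 1 then yields $\mathbf{C}_h\succeq-\kappa_x\lambda_{\mathrm{bal}}\mathbf{I}_{N_h}$, using $\kappa_x>0$ to keep the direction of the inequality.

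None of these steps is a real obstacle; each is a standard matrix inequality. The only point needing care is the identification of $\sigma_{\min}(\wa)$ with the $N_h$ singular values tied to the rows, as opposed to singular values padded by zeros when $N_i>N_h$. I would handle this by stating explicitly that $\sigma_{\min}$ refers to the smallest of the $\min(N_h,N_i)=N_h$ singular values, which is exactly what the quadratic-form argument of Step 1 controls.
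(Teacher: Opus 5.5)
Your proposal is correct and follows the same route as the paper. You rearrange the balance identity using $\wb^\top\wb\succeq0$ to get $\wa\wa^\top\succeq-\lambda_{\mathrm{bal}}\mathbf{I}_{N_h}$, write $\mathbf{C}_h=\wa\boldsymbol{\Sigma}_x\wa^\top$ by linearity, and chain $\mathbf{C}_h\succeq\kappa_x\wa\wa^\top\succeq-\kappa_x\lambda_{\mathrm{bal}}\mathbf{I}_{N_h}$ by congruence. Your remark that $\sigma_{\min}(\wa)$ means the smallest of the $N_h$ row-associated singular values, which implicitly forces $N_h\le N_i$, is a useful precision that the paper leaves implicit.
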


\begin{proof}
Rearranging \eqref{eq:negative-layer-balance} gives
\[
\wa\wa^\top
=
\wb^\top\wb-\lambda_{\mathrm{bal}}\mathbf{I}_{N_h}
\succeq
-\lambda_{\mathrm{bal}}\mathbf{I}_{N_h}
\succ0,
\]
since $\wb^\top\wb\succeq0$ and $\lambda_{\mathrm{bal}}<0$.
This establishes the encoder rank and singular-value bounds.

By linearity, $\bar{\mathbf{h}}=\wa\bar{\mathbf{x}}$, so
$\mathbf{h}_n-\bar{\mathbf{h}}
=\wa(\mathbf{x}_n-\bar{\mathbf{x}})$.
Therefore,
\[
\mathbf{C}_h
= \mathbf{W}_1\boldsymbol{\Sigma}_x\mathbf{W}_1^\top
\succeq \kappa_x\mathbf{W}_1\mathbf{W}_1^\top
\succeq -\kappa_x\lambda_{\mathrm{bal}}\mathbf{I}_{N_h}
\succ 0,
\]
which proves the representation non-collapse bound.
\end{proof}
Thus, the representation covariance is positive definite,
with variance at least $-\kappa_x\lambda_{\mathrm{bal}}>0$
in every unit direction.
This rules out \emph{complete collapse}, in which all hidden
representations coincide ($\mathbf{C}_h=\mathbf{0}$), and, more generally,
\emph{dimensional collapse}, in which the hidden representations lie
in a proper affine subspace of $\mathbb{R}^{N_h}$
($\operatorname{rank}(\mathbf{C}_h)<N_h$).

%\begin{remark}
%The representation guarantee requires neither whitened inputs nor
%zero-mean training data. \mm{this is the type of comments an LLM would make, scratch it or rephrase it}  It requires a strictly positive lower bound
%on the centered input covariance. Without this condition, negative
%layer balance still guarantees that the encoder has full row rank,
%but does not by itself guarantee a full-rank representation covariance.
%\end{remark}

Combining Lemma~\ref{lem:unregularised-balance-conservation} with
Lemma~\ref{lem:balanced-non-collapse}, we conclude that a network
initialized with a negative layer balance retains the corresponding
non-collapse bounds throughout unregularized gradient-flow training.
This guarantee depends on the balance imposed at initialization.

\subsubsection{Constrained optimization characterization}
\label{app:opptimisation}
Lemma~\ref{lem:balanced-non-collapse} identifies negative layer balance
as a sufficient condition for non-collapse. We next show that the
proposed regularizer selects precisely such a balance among factorizations of a fixed end-to-end mapping. The regularizer arises from a constrained minimum-regularization problem.

\begin{theorem}[Balancedness via Constrained Optimization ]
\label{app:thm:encoder-variational-balance}
Fix an end-to-end mapping $\mathbf{M}\in\mathbb R^{N_o\times N_i}$, $\gamma > 0$, $\lambda_{\mathrm{reg}}> 0$
and consider
\begin{equation}
\begin{aligned}
\min_{\wa,\wb}\quad&
\frac{\lambda_{\mathrm{reg}}}{2}
\left(
    \|\wa\|_F^2+\|\wb\|_F^2
\right)
-
\frac{\gamma}{2}
\log\det\left(\wa\wa^\top\right)
\\
\text{subject to}\quad&
\wb\wa=\mathbf{M}.
\end{aligned}
\end{equation}
Let \((\wa^\star,\wb^\star)\) be a constrained minimizer with
\(\wa^\star(\wa^\star)^\top\succ0\). Then
\begin{equation}
\boxed{
(\wb^\star)^\top\wb^\star
-
\wa^\star(\wa^\star)^\top
=
-\frac{\gamma}{\lambda_{\mathrm{reg}}}
\mathbf{I}_{N_h}
}.
\label{eq:encoder-variational-balance}
\end{equation}
\end{theorem}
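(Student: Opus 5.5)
The plan is to avoid Lagrange multipliers on the bilinear constraint $\wb\wa=\mathbf{M}$, whose constraint qualification would need checking. Instead I will use the symmetry of the constraint set under invertible reparametrizations of the hidden layer. For any $\mathbf{A}\in GL(N_h)$, the pair $(\mathbf{A}\wa^\star,\wb^\star\mathbf{A}^{-1})$ still satisfies the constraint, since $\wb^\star\mathbf{A}^{-1}\mathbf{A}\wa^\star=\mathbf{M}$. Moreover, $\mathbf{A}\wa^\star(\wa^\star)^\top\mathbf{A}^\top\succ0$ stays in the domain of the log-determinant. So $(\wa^\star,\wb^\star)$ minimizes the objective along the curve $\mathbf{A}(s)=\exp(s\mathbf{S})$ for every matrix $\mathbf{S}\in\mathbb{R}^{N_h\times N_h}$. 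Hence the derivative at $s=0$ must vanish, and this one-parameter family yields the stationarity condition directly.

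Next I compute that derivative term by term along $\wa(s)=e^{s\mathbf{S}}\wa^\star$ and $\wb(s)=\wb^\star e^{-s\mathbf{S}}$. Writing $\mathbf{K}_1=\wa^\star(\wa^\star)^\top$ and $\mathbf{K}_2=(\wb^\star)^\top\wb^\star$:
\[
\frac{\mathrm{d}}{\mathrm{d}s}\Big|_{0}\|\wa\|_F^2=2\operatorname{tr}(\mathbf{S}\mathbf{K}_1),\qquad \frac{\mathrm{d}}{\mathrm{d}s}\Big|_{0}\|\wb\|_F^2=-2\operatorname{tr}(\mathbf{S}\mathbf{K}_2).
\]
For the log-determinant term, $\log\det(e^{s\mathbf{S}}\mathbf{K}_1e^{s\mathbf{S}^\top})=\log\det\mathbf{K}_1+2s\operatorname{tr}(\mathbf{S})$, which holds exactly, using $\det e^{s\mathbf{S}}=e^{s\operatorname{tr}\mathbf{S}}$. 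Setting the total derivative to zero gives
\[
\lambda_{\mathrm{reg}}\operatorname{tr}\big(\mathbf{S}(\mathbf{K}_1-\mathbf{K}_2)\big)-\gamma\operatorname{tr}(\mathbf{S})=0\quad\text{for all }\mathbf{S}.
\]
Therefore $\lambda_{\mathrm{reg}}(\mathbf{K}_1-\mathbf{K}_2)-\gamma\mathbf{I}_{N_h}$ is orthogonal, in the Frobenius inner product, to every matrix, so it is zero. Rearranging gives $\mathbf{K}_2-\mathbf{K}_1=-\frac{\gamma}{\lambda_{\mathrm{reg}}}\mathbf{I}_{N_h}$, which is \eqref{eq:encoder-variational-balance}.

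The only delicate point is justifying that a constrained minimizer is a critical point along these curves. The curves are smooth and stay feasible, and the objective is smooth on the open set $\{\wa\wa^\top\succ0\}$, which contains a neighborhood of $s=0$ by the hypothesis $\wa^\star(\wa^\star)^\top\succ0$. A one-variable minimum at $s=0$ therefore forces a zero derivative. Using all matrices $\mathbf{S}$, not just symmetric ones, gives a full matrix identity. Since $\mathbf{K}_1-\mathbf{K}_2$ is symmetric anyway, even symmetric $\mathbf{S}$ would suffice.
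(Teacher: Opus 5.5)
Your proof is correct, and it reaches the identity by a different route from the paper's. The paper introduces a multiplier $\mathbf{\Lambda}$ for $\wb\wa=\mathbf{M}$ and writes the two stationarity equations. It then right-multiplies the $\wa$-equation by $\wa^\top$, left-multiplies the $\wb$-equation by $\wb^\top$, and subtracts, so that $\wb^\top\mathbf{\Lambda}\wa^\top$ cancels.

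That elimination amounts to testing stationarity against the directions $(\mathbf{S}\wa,-\wb\mathbf{S})$. These are exactly the tangent directions of your $GL(N_h)$ orbit, and they lie in the kernel of the linearized constraint $(\mathbf{D}_1,\mathbf{D}_2)\mapsto\wb\mathbf{D}_1+\mathbf{D}_2\wa$. So both proofs extract the same conditions, but you never need $\mathbf{\Lambda}$.

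This is a real gain in rigor. The paper assumes a multiplier exists without justification. Yet the constraint Jacobian above is not surjective whenever $N_h<\min\{N_i,N_o\}$: its range misses a subspace of dimension at least $(N_o-N_h)(N_i-N_h)$. So LICQ is unavailable, and one would have to show separately that the tangent space of the feasible set equals that kernel. It does at points with $\wa\wa^\top\succ0$, but that needs its own argument.

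Your curves are exactly feasible, so a one-variable first-order condition suffices. The argument therefore applies verbatim to local minimizers, and to any point that is merely stationary along the orbit. It also shows the balance identity as a consequence of the hidden-layer reparametrization symmetry, the same symmetry behind Lemma~\ref{lem:unregularised-balance-conservation}. The log-determinant contributes the exact shift $2s\operatorname{tr}\mathbf{S}$.

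In return, the KKT route gives the full first-order system (e.g.\ $\lambda_{\mathrm{reg}}\wb=-\mathbf{\Lambda}\wa^\top$). Its computation also parallels the cross-term cancellation in the gradient-flow proof of Theorem~\ref{thm:encoder-balance-dynamics}. Neither is needed for the stated conclusion.
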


\begin{proof}
Introduce a Lagrange multiplier
$\mathbf{\Lambda}\in\mathbb{R}^{N_o\times N_i}$
for the constraint \(\wb\wa=\mathbf{M}\). The Lagrangian is
\begin{equation}
\begin{aligned}
\mathscr{L}(\wa,\wb,\mathbf{\Lambda})
={}&
\frac{\lambda_{\mathrm{reg}}}{2}
\left(
    \|\wa\|_F^2+\|\wb\|_F^2
\right)
-
\frac{\gamma}{2}
\log\det\left(\wa\wa^\top\right)
\\
&+
\left\langle
    \mathbf{\Lambda},\wb\wa-\mathbf{M}
\right\rangle_F.
\end{aligned}
\label{eq:encoder-constrained-lagrangian}
\end{equation}
The first-order stationarity conditions are
\begin{align}
\lambda_{\mathrm{reg}}\wa
-
\gamma
\left(\wa\wa^\top\right)^{-1}\wa
+
\wb^\top\mathbf{\Lambda}
&=\mathbf{0},
\label{eq:encoder-kkt-wa}
\\
\lambda_{\mathrm{reg}}\wb
+
\mathbf{\Lambda}\wa^\top
&=\mathbf{0}.
\label{eq:encoder-kkt-wb}
\end{align}
Multiplying \eqref{eq:encoder-kkt-wa} on the right by \(\wa^\top\)
gives
\begin{equation}
\lambda_{\mathrm{reg}}\wa\wa^\top
-
\gamma\mathbf{I}_{N_h}
+
\wb^\top\mathbf{\Lambda}\wa^\top
=\mathbf{0}.
\label{eq:encoder-kkt-gram-wa}
\end{equation}
Multiplying \eqref{eq:encoder-kkt-wb} on the left by \(\wb^\top\)
gives
\begin{equation}
\lambda_{\mathrm{reg}}\wb^\top\wb
+
\wb^\top\mathbf{\Lambda}\wa^\top
=\mathbf{0}.
\label{eq:encoder-kkt-gram-wb}
\end{equation}
Eliminating the common multiplier term
\(\wb^\top\mathbf{\Lambda}\wa^\top\) yields
$
\lambda_{\mathrm{reg}}
\left(
    \wa\wa^\top-\wb^\top\wb
\right)
=
\gamma\mathbf{I}_{N_h},
$
which is equivalent to
\eqref{eq:encoder-variational-balance}.
\end{proof}

\begin{corollary}[Non-collapse at constrained regularizer minima]
\label{cor:encoder-non-collapse}
Under the assumptions of
Theorem~\ref{app:thm:encoder-variational-balance},
\begin{equation}
\boxed{
\wa^\star(\wa^\star)^\top
\succeq
\frac{\gamma}{\lambda_{\mathrm{reg}}}
\mathbf{I}_{N_h}
}.
\label{eq:encoder-anti-collapse-bound}
\end{equation}
Consequently, the encoder has full row rank, with
$\sigma_{\min}(\wa^\star)
\geq
\sqrt{\frac{\gamma}{\lambda_{\mathrm{reg}}}}.$

If, in addition, the centered empirical input covariance $\boldsymbol{\Sigma}_x$
defined in Lemma~\ref{lem:balanced-non-collapse} satisfies
$\boldsymbol{\Sigma}_x \succeq \kappa_x\mathbf{I}_{N_i}$ for some $\kappa_x >0$,
then the centered empirical covariance of the hidden representations
$\mathbf{h}_n^\star=\wa^\star\mathbf{x}_n$ satisfies
\begin{equation}
\boxed{
\mathbf{C}_h^\star
=
\wa^\star\boldsymbol{\Sigma}_x(\wa^\star)^\top
\succeq
\kappa_x \frac{\gamma}{\lambda_{\mathrm{reg}}}
\mathbf{I}_{N_h}
}.
\label{eq:representation-anti-collapse-bound}
\end{equation}
\end{corollary}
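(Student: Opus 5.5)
The corollary follows by combining Theorem~\ref{app:thm:encoder-variational-balance} with Lemma~\ref{lem:balanced-non-collapse}, so no new analysis is required.

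First, by Theorem~\ref{app:thm:encoder-variational-balance}, any constrained minimizer $(\wa^\star,\wb^\star)$ with $\wa^\star(\wa^\star)^\top\succ0$ satisfies
\[
(\wb^\star)^\top\wb^\star-\wa^\star(\wa^\star)^\top=-\tfrac{\gamma}{\lambda_{\mathrm{reg}}}\mathbf{I}_{N_h}.
\]
Hence the pair is $\lambda_{\mathrm{bal}}$-balanced in the sense of Definition~\ref{def:lambda-balanced}, with $\lambda_{\mathrm{bal}}=-\gamma/\lambda_{\mathrm{reg}}$. This is strictly negative because $\gamma>0$ and $\lambda_{\mathrm{reg}}>0$, so the hypothesis \eqref{eq:negative-layer-balance} of Lemma~\ref{lem:balanced-non-collapse} holds.

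Second, apply Lemma~\ref{lem:balanced-non-collapse} directly. Its first conclusion gives
\[
\wa^\star(\wa^\star)^\top\succeq-\lambda_{\mathrm{bal}}\mathbf{I}_{N_h}=\tfrac{\gamma}{\lambda_{\mathrm{reg}}}\mathbf{I}_{N_h},
\]
which is \eqref{eq:encoder-anti-collapse-bound}. The singular-value bound comes from the same lemma: the eigenvalues of $\wa^\star(\wa^\star)^\top$ are the squared singular values of $\wa^\star$, so $\sigma_{\min}(\wa^\star)\geq\sqrt{\gamma/\lambda_{\mathrm{reg}}}$ and $\wa^\star$ has full row rank.

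Under the additional covariance assumption $\boldsymbol{\Sigma}_x\succeq\kappa_x\mathbf{I}_{N_i}$, the second part of the lemma gives
\[
\mathbf{C}_h^\star=\wa^\star\boldsymbol{\Sigma}_x(\wa^\star)^\top\succeq\kappa_x\wa^\star(\wa^\star)^\top\succeq\kappa_x\tfrac{\gamma}{\lambda_{\mathrm{reg}}}\mathbf{I}_{N_h},
\]
which is \eqref{eq:representation-anti-collapse-bound}. The identity $\mathbf{C}_h^\star=\wa^\star\boldsymbol{\Sigma}_x(\wa^\star)^\top$ holds by linearity, since the hidden representations are $\mathbf{h}^\star_n=\wa^\star\mathbf{x}_n$ and centering commutes with $\wa^\star$. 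The first inequality uses congruence preservation of the Loewner order: $\boldsymbol{\Sigma}_x-\kappa_x\mathbf{I}\succeq0$ implies $\wa^\star(\boldsymbol{\Sigma}_x-\kappa_x\mathbf{I})(\wa^\star)^\top\succeq0$.

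There is no real obstacle here; the argument only needs the sign of $\lambda_{\mathrm{bal}}$. The one point to state explicitly is that the positive-definiteness hypothesis $\wa^\star(\wa^\star)^\top\succ0$ is inherited from the theorem. That hypothesis is what makes the $\log\det$ term differentiable at the minimizer and so makes the stationarity conditions, and hence the balance identity, valid.
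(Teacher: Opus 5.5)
Your proposal is correct and matches the paper's proof: Theorem~\ref{app:thm:encoder-variational-balance} gives the balance identity with $\lambda_{\mathrm{bal}}=-\gamma/\lambda_{\mathrm{reg}}<0$, and Lemma~\ref{lem:balanced-non-collapse} then yields both bounds. The extra details you write out, the Loewner congruence step and reading $\sigma_{\min}$ off the eigenvalues of the Gram matrix, are simply the content of that lemma's own proof.
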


\begin{proof}
Theorem~\ref{app:thm:encoder-variational-balance} gives
\[
(\wb^\star)^\top\wb^\star
-
\wa^\star(\wa^\star)^\top
=
-\frac{\gamma}{\lambda_{\mathrm{reg}}}
\mathbf{I}_{N_h}.
\]
Since $\gamma>0$ and $\lambda_{\mathrm{reg}}>0$, this is a
negative layer balance. Both bounds therefore follow directly from
Lemma~\ref{lem:balanced-non-collapse} with
$\lambda_{\mathrm{bal}}=-\gamma/\lambda_{\mathrm{reg}}<0$.  
\end{proof}

Thus, the representation covariance is positive definite, preventing
both complete and dimensional collapse.

\subsubsection{Balance selected by the encoder-side regularizer}\label{app:balsel}

For arbitrary $\mathbf{\Delta}$, the regularizer exponentially drives the network toward the selected balance.

\begin{theorem}[Global existence and exponential convergence of layer balance]
\label{thm:encoder-balance-dynamics}
Consider gradient flow with time scale $\tau>0$ on the regularized
mean-squared-error objective in
\eqref{eq:encoder-regularised-objective}, with
$\lambda_{\mathrm{reg}}>0$ and $\gamma>0$.
If $\wa(0)\wa(0)^\top\succ0$, then the solution exists for all
$t\geq0$ and the encoder remains full row rank.

Define
\[
\mathbf{\Delta}(t)=\wb(t)^\top\wb(t)-\wa(t)\wa(t)^\top,
\qquad
\mathbf{\Delta}_\star
=
-\frac{\gamma}{\lambda_{\mathrm{reg}}}\mathbf{I}_{N_h}.
\]
The imbalance satisfies
\[
\tau\dot{\mathbf{\Delta}}(t)
=
-2\lambda_{\mathrm{reg}}\bigl(\mathbf{\Delta}(t)-\mathbf{\Delta}_\star\bigr),
\]
and hence
\begin{equation}
\boxed{
\mathbf{\Delta}(t)-\mathbf{\Delta}_\star
=
e^{-2\lambda_{\mathrm{reg}}t/\tau}
\bigl(\mathbf{\Delta}(0)-\mathbf{\Delta}_\star\bigr)
}.
\label{eq:encoder-balance-solution} 
\end{equation}
Thus, for every admissible initialization, the layer imbalance
converges exponentially to $\mathbf{\Delta}_\star$ at rate
$2\lambda_{\mathrm{reg}}/\tau$, selecting the asymptotic balance
$\lambda_{\mathrm{bal}}^{\mathrm{enc}}
=-\gamma/\lambda_{\mathrm{reg}}$.
\end{theorem}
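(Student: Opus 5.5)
Two things need proving: the imbalance ODE with its closed-form solution, and global existence together with full row rank of the encoder. I will do the ODE first, assuming a solution exists on a maximal interval $[0,T)$ on which $\wa\wa^\top\succ0$. On that set the objective $\mathcal{L}_{\mathrm{task}}+\mathcal{R}_{\mathrm{LSAC}}$ is smooth, so Picard--Lindelöf gives a unique local solution.

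\textbf{Imbalance dynamics.} Write $\mathbf{G}=\nabla_{\mathbf{M}}\ell$ at $\mathbf{M}=\wb\wa$. The flow is
\[
\tau\dot{\wa}=-\wb^\top\mathbf{G}-\lambda_{\mathrm{reg}}\wa+\gamma(\wa\wa^\top)^{-1}\wa,
\qquad
\tau\dot{\wb}=-\mathbf{G}\wa^\top-\lambda_{\mathrm{reg}}\wb .
\]
I will compute $\tau\frac{\mathrm d}{\mathrm dt}(\wb^\top\wb)$ and $\tau\frac{\mathrm d}{\mathrm dt}(\wa\wa^\top)$ exactly as in Lemma~\ref{lem:unregularised-balance-conservation}; the task terms $-\wa\mathbf{G}^\top\wb-\wb^\top\mathbf{G}\wa^\top$ coincide and cancel in the difference. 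The weight decay contributes $-2\lambda_{\mathrm{reg}}\wb^\top\wb$ and $-2\lambda_{\mathrm{reg}}\wa\wa^\top$. The log-det term contributes $\gamma(\wa\wa^\top)^{-1}\wa\wa^\top+\gamma\wa\wa^\top(\wa\wa^\top)^{-1}=2\gamma\mathbf{I}_{N_h}$ to $\tau\frac{\mathrm d}{\mathrm dt}(\wa\wa^\top)$. Subtracting gives
\[
\tau\dot{\mathbf\Delta}=-2\lambda_{\mathrm{reg}}\mathbf\Delta-2\gamma\mathbf{I}=-2\lambda_{\mathrm{reg}}(\mathbf\Delta-\mathbf\Delta_\star).
\]
This is a linear constant-coefficient matrix ODE, so the boxed solution follows on $[0,T)$.

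\textbf{Rank preservation and global existence.} This is the main obstacle, since the log-det gradient blows up as $\wa\wa^\top$ becomes singular and the closed form for $\mathbf\Delta$ does not by itself bound $\wa\wa^\top$ from below. My first approach is a direct Lyapunov-type argument on the barrier, using the fact that the objective decreases along gradient flow:
\[
\mathcal{L}_{\mathrm{LSAC}}(t)\le\mathcal{L}_{\mathrm{LSAC}}(0).
\]
From this decrease I will get the following.
\begin{enumerate}
\item \emph{Bounded weights.} Since $\mathcal{L}_{\mathrm{task}}\ge0$, the decrease gives $\frac{\lambda_{\mathrm{reg}}}{2}\|\wa\|_F^2-\frac{\gamma}{2}\log\det(\wa\wa^\top)\le C$. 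Because $\log\det(\wa\wa^\top)\le N_h\log(\|\wa\|_F^2/N_h)$ grows only logarithmically, $\|\wa\|_F$ is bounded. The same bound, with the $\|\wb\|_F^2$ term included, bounds $\|\wb\|_F$.
\item \emph{Log-det bounded below.} With $\|\wa\|_F^2\le K$, each eigenvalue of $\wa\wa^\top$ is at most $K$. The decrease inequality then gives $\log\det(\wa\wa^\top)\ge -2C/\gamma$.
\item \emph{Smallest eigenvalue bounded below.} Hence $\lambda_{\min}(\wa\wa^\top)\ge e^{-2C/\gamma}K^{-(N_h-1)}>0$, uniformly on $[0,T)$.
\end{enumerate}
The trajectory therefore stays in a compact subset of the open domain $\{\wa\wa^\top\succ0\}$, on which the vector field is locally Lipschitz. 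The standard extension theorem then forces $T=\infty$, and the encoder keeps full row rank for all $t\ge0$.

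\textbf{Assumption $N_h\le N_i$.} This is exactly what makes $\wa(0)\wa(0)^\top\succ0$ possible, and I note it in the write-up. The argument also works for any task loss $\ell\ge0$ that depends only on $\wb\wa$, because the task terms cancel in $\dot{\mathbf\Delta}$.
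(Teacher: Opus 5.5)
Your proposal is correct and follows essentially the same route as the paper. Both arguments use the Gram-matrix computation in which the task terms cancel and the log-determinant contributes $2\gamma\mathbf{I}_{N_h}$, followed by a linear ODE solve. Both also use the same sublevel-set/compactness argument for global existence. Your explicit bounds (AM--GM on $\det(\wa\wa^\top)$, then $\lambda_{\min}\ge \det/\lambda_{\max}^{N_h-1}$) just spell out the paper's observation that each summand $\frac{\lambda_{\mathrm{reg}}}{2}\sigma_i^2-\gamma\log\sigma_i$ is bounded below and diverges at $0$ and $\infty$.
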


\begin{proof}
We first establish global existence and preservation of the encoder
full row rank. Along gradient flow,
\[
\frac{\mathrm{d}}{\mathrm{d}t}\mathcal{L}_{\mathrm{LSAC}}
=
-\frac{1}{\tau}
\left(
\|\nabla_{\wa}\mathcal{L}_{\mathrm{LSAC}}\|_F^2
+
\|\nabla_{\wb}\mathcal{L}_{\mathrm{LSAC}}\|_F^2
\right)
\leq0.
\]
Writing $\sigma_i$ for the encoder singular values, the regularizer is
\[
\mathcal{R}_{\mathrm{LSAC}}
=
\frac{\lambda_{\mathrm{reg}}}{2}\|\wb\|_F^2
+
\sum_{i=1}^{N_h}
\left(
\frac{\lambda_{\mathrm{reg}}}{2}\sigma_i^2
-\gamma\log\sigma_i
\right).
\]
Each scalar summand is bounded below and diverges at both zero
and infinity. Since the mean-squared-error task loss is nonnegative,
the initial objective sublevel set bounds the weights and keeps
every encoder singular value away from zero. The trajectory thus
remains in a compact subset of the full-row-rank domain, where the
gradient field is smooth. Standard ODE continuation gives global
existence and $\wa(t)\wa(t)^\top\succ0$ for all $t\geq0$.

Now define
\[
\mathbf{G}=\nabla_\mathbf{M}\ell(\mathbf{M})\big|_{\mathbf{M}=\wb\wa},
\qquad
\mathbf{H}=\wa\wa^\top.
\]
Since $\mathbf{H}\succ0$, it is symmetric and invertible. The gradient of the
encoder-side log-determinant term is
\begin{equation}
\nabla_{\wa}
\left[
-\frac{\gamma}{2}
\log\det\left(\wa\wa^\top\right)
\right]
=
-\gamma \mathbf{H}^{-1}\wa.
\label{eq:encoder-logdet-gradient}
\end{equation}
The gradient-flow equations are therefore
\begin{align}
\tau\dot{\wa}
&=
-\wb^\top \mathbf{G}
-\lambda_{\mathrm{reg}}\wa
+\gamma \mathbf{H}^{-1}\wa,
\label{eq:encoder-wa-flow}
\\
\tau\dot{\wb}
&=
-\mathbf{G}\wa^\top
-\lambda_{\mathrm{reg}}\wb.
\label{eq:encoder-wb-flow}
\end{align}

We first compute the dynamics of the decoder Gram matrix.
By the product rule,
\[
\tau\frac{\mathrm{d}}{\mathrm{d}t}
\left(\wb^\top\wb\right)
=
\left(\tau\dot{\wb}\right)^\top\wb
+
\wb^\top\left(\tau\dot{\wb}\right).
\]
Substituting \eqref{eq:encoder-wb-flow} gives
\begin{align}
\tau\frac{\mathrm{d}}{\mathrm{d}t}
\left(\wb^\top\wb\right)
&=
\left(
-\mathbf{G}\wa^\top-\lambda_{\mathrm{reg}}\wb
\right)^\top\wb
+
\wb^\top
\left(
-\mathbf{G}\wa^\top-\lambda_{\mathrm{reg}}\wb
\right)
\nonumber\\
&=
-\wa \mathbf{G}^\top\wb
-\wb^\top \mathbf{G}\wa^\top
-2\lambda_{\mathrm{reg}}\wb^\top\wb.
\label{eq:decoder-gram-dynamics}
\end{align}

Similarly, the encoder Gram matrix satisfies
\[
\tau\frac{\mathrm{d}}{\mathrm{d}t}
\left(\wa\wa^\top\right)
=
\left(\tau\dot{\wa}\right)\wa^\top
+
\wa\left(\tau\dot{\wa}\right)^\top.
\]
Substituting \eqref{eq:encoder-wa-flow}, we obtain
\begin{align}
\tau\frac{\mathrm{d}}{\mathrm{d}t}
\left(\wa\wa^\top\right)
={}&
\left(
-\wb^\top \mathbf{G}
-\lambda_{\mathrm{reg}}\wa
+\gamma \mathbf{H}^{-1}\wa
\right)\wa^\top
\nonumber\\
&+
\wa
\left(
-\wb^\top \mathbf{G}
-\lambda_{\mathrm{reg}}\wa
+\gamma \mathbf{H}^{-1}\wa
\right)^\top
\nonumber\\
={}&
-\wb^\top \mathbf{G}\wa^\top
-\wa \mathbf{G}^\top\wb
-2\lambda_{\mathrm{reg}}\wa\wa^\top
\nonumber\\
&+
\gamma \mathbf{H}^{-1}\wa\wa^\top
+
\gamma\wa\wa^\top \mathbf{H}^{-1}.
\end{align}
Because $\mathbf{H}=\wa\wa^\top$, we have
\[
\mathbf{H}^{-1}\wa\wa^\top=\mathbf{I}_{N_h},
\qquad
\wa\wa^\top \mathbf{H}^{-1}=\mathbf{I}_{N_h}.
\]
Therefore,
\begin{equation}
\tau\frac{\mathrm{d}}{\mathrm{d}t}
\left(\wa\wa^\top\right)
=
-\wb^\top \mathbf{G}\wa^\top
-\wa \mathbf{G}^\top\wb
-2\lambda_{\mathrm{reg}}\wa\wa^\top
+2\gamma\mathbf{I}_{N_h}.
\label{eq:encoder-gram-dynamics}
\end{equation}

Recall that
\[
\mathbf{\Delta}=\wb^\top\wb-\wa\wa^\top.
\]
Subtracting \eqref{eq:encoder-gram-dynamics} from
\eqref{eq:decoder-gram-dynamics} yields
\begin{align}
\tau\dot{\mathbf{\Delta}}
={}&
\left(
-\wa \mathbf{G}^\top\wb
-\wb^\top \mathbf{G}\wa^\top
-2\lambda_{\mathrm{reg}}\wb^\top\wb
\right)
\nonumber\\
&-
\left(
-\wb^\top \mathbf{G}\wa^\top
-\wa \mathbf{G}^\top\wb
-2\lambda_{\mathrm{reg}}\wa\wa^\top
+2\gamma\mathbf{I}_{N_h}
\right)
\nonumber\\
={}&
-2\lambda_{\mathrm{reg}}
\left(
\wb^\top\wb-\wa\wa^\top
\right)
-2\gamma\mathbf{I}_{N_h}.
\end{align}
The task-loss terms cancel exactly, leaving
\[
\boxed{
\tau\dot{\mathbf{\Delta}}
=
-2\lambda_{\mathrm{reg}}\mathbf{\Delta}
-2\gamma\mathbf{I}_{N_h}
}.
\]

It remains to solve this linear matrix differential equation.
Dividing by $\tau$ gives
\[
\dot{\mathbf{\Delta}}
+
\frac{2\lambda_{\mathrm{reg}}}{\tau}\mathbf{\Delta}
=
-\frac{2\gamma}{\tau}\mathbf{I}_{N_h}.
\]
Multiplying both sides by the integrating factor
$e^{2\lambda_{\mathrm{reg}}t/\tau}$ yields
\[
\frac{\mathrm{d}}{\mathrm{d}t}
\left[
e^{2\lambda_{\mathrm{reg}}t/\tau}\mathbf{\Delta}(t)
\right]
=
-\frac{2\gamma}{\tau}
e^{2\lambda_{\mathrm{reg}}t/\tau}
\mathbf{I}_{N_h}.
\]
Integrating from $0$ to $t$, we obtain
\begin{align*}
e^{2\lambda_{\mathrm{reg}}t/\tau}\mathbf{\Delta}(t)-\mathbf{\Delta}(0)
&=
-\frac{2\gamma}{\tau}
\int_0^t
e^{2\lambda_{\mathrm{reg}}s/\tau}
\,\mathrm{d}s\,
\mathbf{I}_{N_h}
\\
&=
-\frac{\gamma}{\lambda_{\mathrm{reg}}}
\left(
e^{2\lambda_{\mathrm{reg}}t/\tau}-1
\right)
\mathbf{I}_{N_h}.
\end{align*}
Multiplying by $e^{-2\lambda_{\mathrm{reg}}t/\tau}$ gives
\[
\mathbf{\Delta}(t)
=
e^{-2\lambda_{\mathrm{reg}}t/\tau}\mathbf{\Delta}(0)
-
\frac{\gamma}{\lambda_{\mathrm{reg}}}
\left(
1-e^{-2\lambda_{\mathrm{reg}}t/\tau}
\right)
\mathbf{I}_{N_h}.
\]
Equivalently, defining
$\mathbf{\Delta}_\star=-(\gamma/\lambda_{\mathrm{reg}})\mathbf{I}_{N_h}$,
\[
\boxed{
\mathbf{\Delta}(t)-\mathbf{\Delta}_\star
=
e^{-2\lambda_{\mathrm{reg}}t/\tau}
\bigl(\mathbf{\Delta}(0)-\mathbf{\Delta}_\star\bigr)
}.
\]
Since $\lambda_{\mathrm{reg}}>0$ and $\tau>0$, the imbalance
converges exponentially to $\mathbf{\Delta}_\star$ at rate
$2\lambda_{\mathrm{reg}}/\tau$.
%No balancedness assumption on $\mathbf{\Delta}(0)$ is required \mm{this is the type of comments an LLM would make, scratch it}.
\end{proof}

\begin{corollary}[Persistence of the selected balance]
\label{cor:selected-balance-persistence}
If the network is initialized such that
$
\mathbf{\Delta}(0)
=
-\frac{\gamma}{\lambda_{\mathrm{reg}}}
\mathbf{I}_{N_h},
$
then
$
\mathbf{\Delta}(t)
=
-\frac{\gamma}{\lambda_{\mathrm{reg}}}
\mathbf{I}_{N_h}
$
for all \(t\geq0\). Thus, the balance condition selected by the
regularizer is invariant under the regularized gradient-flow dynamics.
\end{corollary}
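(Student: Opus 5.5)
This corollary follows almost immediately from the closed-form solution in Theorem~\ref{thm:encoder-balance-dynamics}, so the plan is to invoke that theorem rather than redo any dynamics. The first step is to check that its hypotheses hold under the stated initialization. From $\mathbf{\Delta}(0)=-\frac{\gamma}{\lambda_{\mathrm{reg}}}\mathbf{I}_{N_h}$ we get
\[
\wa(0)\wa(0)^\top=\wb(0)^\top\wb(0)+\frac{\gamma}{\lambda_{\mathrm{reg}}}\mathbf{I}_{N_h}\succ 0.
\]
This is exactly the argument of Lemma~\ref{lem:balanced-non-collapse} with $\lambda_{\mathrm{bal}}=-\gamma/\lambda_{\mathrm{reg}}<0$. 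Hence the admissibility assumption $\wa(0)\wa(0)^\top\succ0$ holds automatically, and Theorem~\ref{thm:encoder-balance-dynamics} guarantees that the regularized gradient-flow solution exists for all $t\geq0$ with the encoder remaining full row rank. In particular, $\mathbf{\Delta}(t)$ is well defined on all of $[0,\infty)$.

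The second step is to substitute into the boxed identity \eqref{eq:encoder-balance-solution}. Since $\mathbf{\Delta}(0)-\mathbf{\Delta}_\star=\mathbf{0}$, the right-hand side vanishes identically. This gives $\mathbf{\Delta}(t)=\mathbf{\Delta}_\star=-\frac{\gamma}{\lambda_{\mathrm{reg}}}\mathbf{I}_{N_h}$ for every $t\geq0$.

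Alternatively, one can observe directly that $\mathbf{\Delta}_\star$ is an equilibrium of the linear ODE $\tau\dot{\mathbf{\Delta}}=-2\lambda_{\mathrm{reg}}(\mathbf{\Delta}-\mathbf{\Delta}_\star)$. Uniqueness of solutions for this linear equation then forces the trajectory to stay there. This version also makes the invariance interpretation in the statement explicit.

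There is no real obstacle. The only point requiring care is the first step: the closed form in Theorem~\ref{thm:encoder-balance-dynamics} is only valid along a trajectory that exists and keeps $\wa\wa^\top$ invertible, since the log-determinant gradient $\gamma\mathbf{H}^{-1}\wa$ must be defined. That is why we verify positive definiteness of $\wa(0)\wa(0)^\top$ from the balance condition before invoking the theorem. As a by-product, combining the result with Lemma~\ref{lem:balanced-non-collapse} gives $\wa(t)\wa(t)^\top\succeq\frac{\gamma}{\lambda_{\mathrm{reg}}}\mathbf{I}_{N_h}$ for all $t$.
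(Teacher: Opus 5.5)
Your proposal is correct and takes the same approach as the paper, whose proof simply substitutes $\mathbf{\Delta}(0)=\mathbf{\Delta}_\star$ into the closed-form solution \eqref{eq:encoder-balance-solution}. Your additional check, that the balance condition itself forces $\wa(0)\wa(0)^\top\succ0$ so that Theorem~\ref{thm:encoder-balance-dynamics} applies, is a worthwhile detail that the paper leaves implicit.
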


\begin{proof}
Substituting the assumed initial condition into
\eqref{eq:encoder-balance-solution} gives the result immediately.
\end{proof}
% Requires \usepackage{amsmath,amssymb,amsthm}

\subsubsection{Singular-mode characterization}
\label{app:singular}
The previous results establish the selected balance and the anti-collapse
property without requiring an explicit factorization. For completeness,
we now give the corresponding allocation of each end-to-end singular
mode across the two layers, also derived in \cite{domine2025lazy}.  
We later show that the regularization leads to the balanced factorization (see Appendix \ref{app:rich_lazy}).

\medskip

\begin{theorem}[Singular-mode allocation]
\label{thm:singular}
Assume \(N_h \leq \min\{N_i,N_o\}\), and let
\(\mathbf{M} = \mathbf{USV}^\top\), where $\mathbf{S} = \operatorname{diag}(s_1,\ldots,s_{N_h}),
$with $ s_i \geq 0,$
defines an \(N_h\)-mode singular value decomposition, padded with zero
singular values when \(\operatorname{rank}(\mathbf{M}) < N_h\).
Define $\lambda_{bal} = -\frac{\gamma}{\lambda_{\mathrm{reg}}}.$
Up to an orthogonal transformation
\(\mathbf{R} \in \mathbb{R}^{N_h \times N_h}\) of the hidden layer and rotations
within degenerate singular subspaces, a  minimizer of Theorem \ref{app:thm:encoder-variational-balance}
can be written as $\mathbf{W}_1^\star = \mathbf{RAV}^\top, $ $
\mathbf{W}_2^\star = \mathbf{UBR}^\top,
$ 
where $\mathbf{A} = \operatorname{diag}(a_1,\ldots,a_{N_h}),
$ $
\mathbf{B} = \operatorname{diag}(b_1,\ldots,b_{N_h}),$ 
and
\begin{align}
a_i^2
&= \frac{-\lambda_{bal} + \sqrt{\lambda_{bal}^2 + 4s_i^2}}{2},
\qquad
b_i^2 = \frac{\lambda_{bal} + \sqrt{\lambda_{bal}^2 + 4s_i^2}}{2}.
\label{eq:singular-mode-b}
\end{align}
In particular,
$ a_i b_i = s_i,
$ and $
b_i^2 - a_i^2 = \lambda_{bal}.$
\end{theorem}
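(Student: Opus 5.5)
We need to prove the singular-mode allocation: a minimizer of the constrained problem in Theorem~\ref{app:thm:encoder-variational-balance} can be written, up to a hidden-layer rotation, as $\mathbf{W}_1^\star=\mathbf{RAV}^\top$, $\mathbf{W}_2^\star=\mathbf{UBR}^\top$ with the stated $a_i,b_i$. The plan is to use the balance identity from Theorem~\ref{app:thm:encoder-variational-balance} together with the constraint. This pins down the Gram matrices of both layers, and from those the factorization.

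\textbf{Step 1: pin down the decoder Gram matrix.} At a minimizer with $\wa^\star(\wa^\star)^\top\succ0$, the balance identity gives $\wa\wa^\top=\wb^\top\wb-\lambda_{bal}\mathbf{I}_{N_h}$. Set $\mathbf{K}=\wb^\top\wb$ and compute
\[
\mathbf{M}^\top\mathbf{M}=\wa^\top\mathbf{K}\wa .
\]
It is cleaner to use the hidden-space matrix $\mathbf{N}=\wb\wa\wa^\top\wb^\top=\mathbf{M}\mathbf{M}^\top$ restricted appropriately. Substituting the balance identity gives
\[
\mathbf{M}\mathbf{M}^\top=\wb\bigl(\mathbf{K}-\lambda_{bal}\mathbf{I}\bigr)\wb^\top .
\]

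\textbf{Step 2: diagonalize jointly.} Take an SVD $\wb=\mathbf{U}'\mathbf{B}\mathbf{R}^\top$ with $\mathbf{R}$ orthogonal. Then $\mathbf{K}=\mathbf{R}\mathbf{B}^2\mathbf{R}^\top$ and
\[
\mathbf{M}\mathbf{M}^\top=\mathbf{U}'\,\mathbf{B}^2(\mathbf{B}^2-\lambda_{bal}\mathbf{I})\,\mathbf{U}'^\top .
\]
So the left singular vectors of $\mathbf{M}$ are the columns of $\mathbf{U}'$, up to rotations within degenerate subspaces, and
\[
s_i^2=b_i^2(b_i^2-\lambda_{bal}).
\]
Solving this quadratic in $b_i^2\ge0$ yields $b_i^2=\tfrac{\lambda_{bal}+\sqrt{\lambda_{bal}^2+4s_i^2}}{2}$. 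Then $a_i^2=b_i^2-\lambda_{bal}$ gives the stated formula. Since $\mathbf{R}$ diagonalizes $\mathbf{K}$, it also diagonalizes $\wa\wa^\top=\mathbf{R}\mathbf{A}^2\mathbf{R}^\top$. Hence $\wa=\mathbf{R}\mathbf{A}\mathbf{Q}^\top$ for some $\mathbf{Q}$ with orthonormal columns, which requires $N_h\le N_i$ and uses $\mathbf{A}\succ0$.

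\textbf{Step 3: match right singular vectors.} From the constraint,
\[
\mathbf{M}=\wb\wa=\mathbf{U}'\mathbf{B}\mathbf{A}\mathbf{Q}^\top .
\]
This is an SVD with singular values $a_ib_i$. Checking $a_i^2b_i^2=b_i^2(b_i^2-\lambda_{bal})=s_i^2$ confirms $a_ib_i=s_i$. Uniqueness of the SVD up to degenerate rotations lets us identify $\mathbf{U}'=\mathbf{U}$ and $\mathbf{Q}=\mathbf{V}$ after choosing compatible bases.

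\textbf{Main obstacle.} The hardest part is this identification when singular values are repeated or zero. If $s_i=0$, then $b_i=0$ and $\mathbf{U}'$ is arbitrary on that block. The encoder direction $\mathbf{Q}$ there is also unconstrained by $\mathbf{M}$, but it can be absorbed into the padded $\mathbf{V}$, which requires $N_h\le N_i$. I would handle this by treating each block of equal $s_i$ separately and absorbing the block-orthogonal freedom into $\mathbf{U},\mathbf{V}$, which the statement explicitly allows ("rotations within degenerate singular subspaces"). The nonnegative root must be chosen carefully: $b_i^2\ge0$ rules out the other root since $\lambda_{bal}<0$.
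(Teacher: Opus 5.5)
Your proposal is correct and takes essentially the same route as the paper. The balance identity forces both Gram matrices onto a shared eigenbasis $\mathbf{R}$ with $\mathbf{B}^2-\mathbf{A}^2=\lambda_{bal}\mathbf{I}_{N_h}$, the constraint $\mathbf{W}_2\mathbf{W}_1=\mathbf{M}$ gives $\mathbf{B}\mathbf{A}=\mathbf{S}$, and taking the nonnegative root of the quadratic in $b_i^2$ gives the formulas. Your version is more careful than the paper's sketch: you construct $\mathbf{Q}=\mathbf{W}_1^\top\mathbf{R}\mathbf{A}^{-1}$ explicitly and justify $\mathbf{U}'=\mathbf{U}$, $\mathbf{Q}=\mathbf{V}$ via SVD uniqueness, including the degenerate and zero-padded blocks, whereas the paper simply asserts the aligned representation.
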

\begin{proof}
According to Eq. \ref{eq:encoder-variational-balance} $
(\mathbf{W}_2^\star)^\top  \mathbf{W}_2^\star-  \mathbf{W}_1^\star   (\mathbf{W}_1^\star)^\top=
\lambda_{bal}\mathbf{I}_{N_h}.$
The two Gram matrices therefore commute and share an orthogonal
eigenbasis \(R\). Consequently, the factors admit the aligned
representation with $\mathbf{B}^2 -  \mathbf{A}^2 = \lambda_{bal} \mathbf{I}_{N_h}.$

The product constraint gives
\[
\mathbf{W}_2^\star \mathbf{W}_1^\star
=
\mathbf{UBAV}^\top
=
\mathbf{USV}^\top,
\]
and hence \(\mathbf{BA = S}\). For each singular direction \(i\), we therefore have
\[
a_i b_i = s_i,
\qquad
b_i^2 - a_i^2 = \lambda_{bal}.
\]
Setting \(x_i = b_i^2\) gives $
x_i(x_i-\lambda_{bal}) = s_i^2.$ 
The solution is
\[
b_i^2 = \frac{\lambda_{bal} + \sqrt{\lambda_{bal}^2 + 4s_i^2}}{2},
\]
and subtracting \(\lambda_{bal}\) gives
\[
a_i^2 = \frac{-\lambda_{bal} + \sqrt{\lambda_{bal}^2 + 4s_i^2}}{2}.
\]
\end{proof}

\subsubsection{Rich and lazy regime}
\label{app:rich_lazy}

The ratio $-\gamma/\lambda_{\mathrm{reg}}$ determines the selected layer balance and thereby influences the learning regime. For a fixed end-to-end mapping, it governs how each singular mode is distributed between the encoder and decoder. Zero balance ($\lambda_{\mathrm{bal}}=0$) corresponds to equal layer singular values, whereas nonzero balance introduces an asymmetry. The effect of this balance on learning depends on the architecture, initialization scale, and sign of $\lambda_{\mathrm{bal}}$~\citep{domine2025lazy}. In the linear setting considered here, sufficiently strong negative imbalance can favor lazy dynamics while preserving the anti-collapse guarantee. By contrast, zero-balanced initialization can support rich dynamics at sufficiently small initialization scales. The relationship between imbalance and learning regime is more complex in nonlinear networks, where full matrix imbalance is not conserved and does not necessarily induce lazy learning. Appropriate asymmetric parameter scalings can instead accelerate the onset of rich feature-learning dynamics while preserving the non-collapse property~\citep{anguita2026theory,kumar2023grokking,kunin2024get}. We discuss this distinction further in Appendix~\ref{app:Two-layer}.

\subsubsection{Experimental verification}
We empirically verify Theorem~\ref{thm:encoder-variational-balance} (and its
dynamical counterpart, Theorem~\ref{thm:encoder-balance-dynamics}) in a
minimal two-layer linear network with no architectural bottleneck,
$N_i=N_h=N_o=8$, so that $\mathbf{W}_1,\mathbf{W}_2\in\mathbb{R}^{8\times8}$ are both square
and (generically) invertible.

\emph{Task.} We use an orthogonal input design $\mathbf{X}=\sqrt{N_i}\,\mathbf{I}_{N_i}$ and a
fixed $\{0,\pm1/\sqrt{N_i}\}$-valued target matrix $\mathbf{Y}$ with hierarchical
block structure. The resulting least-squares mapping has
three distinct singular values (multiplicities $2,2,4$, reflecting the
hierarchy), letting us check the predicted balance across multiple
directions at once.

\emph{Training protocol.} In Fig.~\ref{appfig:linearreg} (A), both layers are initialized with unbalanced i.i.d.\
Gaussian weights, $\mathbf{W}_1(0),\mathbf{W}_2(0)\sim\mathcal N(0,\sigma^2)$, $\sigma=0.5$. The network is
trained by full-batch gradient descent on
$\mathcal{L}_{\mathrm{reg}}=\mathcal{L}_{\mathrm{task}}+\mathcal{R}_{\mathrm{LSAC}}$
($\eta=10^{-3}$, $T=3\times10^4$ steps), with the log-determinant term on the
\emph{encoder} Gram matrix, $-\gamma \log\det(\mathbf{W}_1\mathbf{W}_1^T)$, and weight decay
$\lambda_{reg}$ on both layers ($(\lambda_{reg},\gamma)=(0.1,0.2))$.

\emph{Verification protocol.} At $t=T$, we compare three block Gram matrices $
\mathbf{Q}=
\begin{pmatrix}
\wa^\top\wa & \boldsymbol{M}^\top\\
\boldsymbol{M} & \wb\wb^\top
\end{pmatrix}$,
 where $\boldsymbol{M}=\wb\wa,
$
 \textbf{(A)} $\mathbf{Q}_{\mathrm{emp}}$ from the regularized network;
\textbf{(B)} $\mathbf{Q}_{\mathrm{theory}}$, obtained by Theorem \ref{thm:singular}
Appendix~\ref{app:singular}, with
$\lambda_{\mathrm{bal}}=-\gamma/\lambda_{reg}=-2$; and \textbf{(C)}
$\mathbf{Q}_{\mathrm{bal}}$, from a network initialized exactly at this
$\lambda_{\mathrm{bal}}$-balanced solution and trained unregularized
($\lambda_{reg}=\gamma=0$), whose balance is conserved by construction
under unregularized gradient flow.

Fig.~\ref{appfig:linearreg} shows near-perfect agreement across all
three panels. The empirical imbalance eigenvalues concentrate tightly around
the predicted, negative fixed point,
$\lambda_i(\mathbf{\Delta}(T))\in[-2.005,-1.984]\approx\lambda_{\mathrm{bal}}$, and
the small relative error,
\[
e_{rel}
=
\frac{\lVert \mathbf{Q}_{\mathrm{emp}} - \mathbf{Q}_{\mathrm{theory}} \rVert_F}
{\lVert \mathbf{Q}_{\mathrm{theory}} \rVert_F}
=
0.0095,
\]
confirms that the encoder-side log-determinant regularizer drives the
network towards the predicted, collapse-preventing balanced fixed point,
without impeding task fit (final MSE $\approx5.47\times10^{-4}$).

\begin{figure}
\begin{center}
\includegraphics[width=0.9\textwidth]{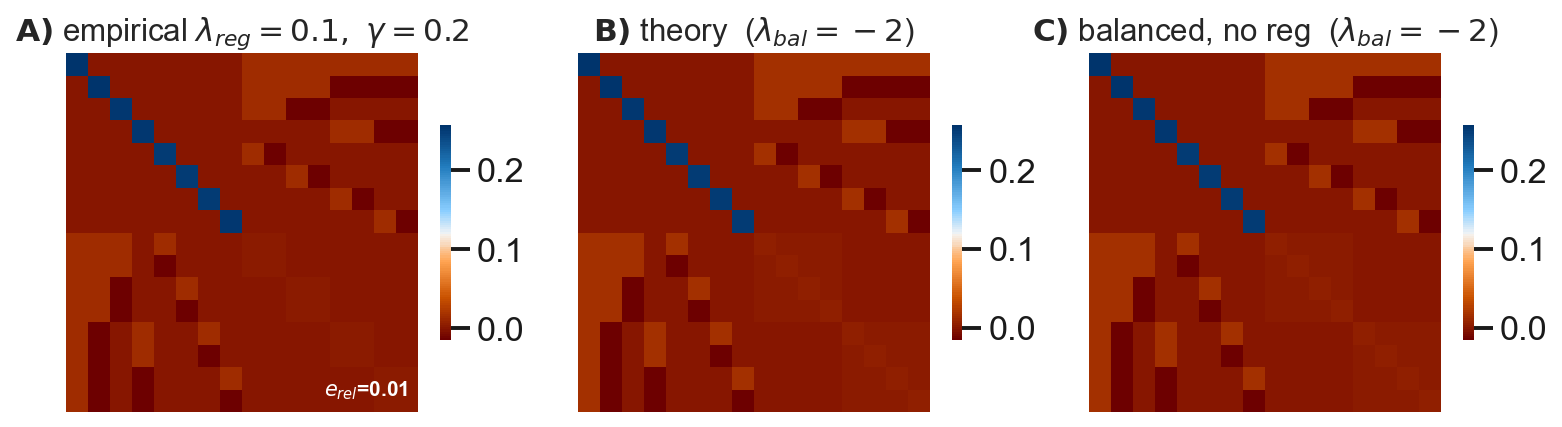}
\end{center}
\caption{
Block Gram matrix
$
\mathbf{Q}=
\begin{pmatrix}
\wa^\top\wa & \boldsymbol{M}^\top\\
\boldsymbol{M} & \wb\wb^\top
\end{pmatrix}$,
$\boldsymbol{M}=\wb\wa,
$
for a two-layer linear network.
\textbf{(A)} Training with symmetric $L_2$ weight decay
($\lambda_{\mathrm{reg}}=0.1$) and an encoder-side
negative log-determinant penalty ($\gamma=0.2$).
The regularizer drives the layer imbalance
$\mathbf{\Delta}=\wb^\top\wb-\wa\wa^\top$
towards
$\mathbf{\Delta}_\star=-(\gamma/\lambda_{\mathrm{reg}})
\mathbf{I}_{N_h}=-2\mathbf{I}_{N_h}$,
corresponding to $\lambda_{\mathrm{bal}}=-2$.
\textbf{(B)} Closed-form prediction from
Appendix~\ref{app:singular}.
\textbf{(C)} Unregularized control network initialized at
$\lambda_{\mathrm{bal}}=-2$.
This balance is conserved under gradient flow, providing a
comparison with the balance selected dynamically in~(A).
}
\label{appfig:linearreg}
\end{figure}

\subsection{From linear to nonlinear covariance regularization}

\subsubsection{From linear balance to nonlinear covariance regularization}
\label{sec:nonlinear-covariance-regularisation}

%The preceding linear analysis shows that constrained minimisers of
%the encoder-side regulariser satisfy
%\[
%\\wb^\top\wb-\wa\wa^\top
%\=
%\-\frac{\gamma}{\lambda_{\mathrm{reg}}}\mathbf{I}_{N_h},
%\\]
%\and therefore
%\\[
%\\wa\wa^\top
%\\succeq
%\\frac{\gamma}{\lambda_{\mathrm{reg}}}\mathbf{I}_{N_h}.
%\\]
%\Regularised gradient flow also converges towards this selected
%\balance. 

%These results motivate applying the same spectral
%regularisation directly to the covariance of nonlinear
%representations. We additionally penalise the representation mean,
%which need not vanish even when the inputs are centred.

For a linear encoder with centered, whitened inputs and $\varepsilon=0$, the covariance penalty reduces exactly to the encoder portion of the linear regularizer, with $\lambda_{\mathrm{cov}}=\lambda_{\mathrm{reg}}$. 

\begin{lemma}[Linear encoder mean and covariance]
\label{lem:linear-gram-covariance}
Let $\mathbf{x}\in\mathbb{R}^{N_i}$ have finite second moments, mean
$\mathbf{\boldsymbol{\mu}}_x=\mathbb{E}[\mathbf{x}]$, and covariance
$\boldsymbol{\Sigma}_x=\mathbb{E}[(\mathbf{x}-\boldsymbol{\mu}_x)(\mathbf{x}-\boldsymbol{\mu}_x)^\top]$.
For the bias-free linear representation $\mathbf{h}=\wa \mathbf{x}$, its mean
and covariance satisfy $\boldsymbol{\mu}=\wa\boldsymbol{\mu}_x, $
\text{}$
\mathbf{C}_h=\wa\boldsymbol{\Sigma}_x\wa^\top.$

In particular, for centered, whitened inputs,
$\boldsymbol{\mu}_x=\mathbf{0}$ and $\boldsymbol{\Sigma}_x=\mathbf{I}_{N_i}$, so $\boldsymbol{\mu}=\mathbf{0},
$ $
\mathbf{C}_h=\wa\wa^\top,$ and $
\operatorname{tr}(\mathbf{C}_h)=\|\wa\|_F^2.$
\end{lemma}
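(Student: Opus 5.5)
The plan is a direct computation using only linearity of expectation. I will use no earlier result beyond the definitions of mean and covariance. First, since $\mathbf{h}=\wa\mathbf{x}$ with $\wa$ a fixed (deterministic) matrix, linearity of expectation gives $\boldsymbol{\mu}=\mathbb{E}[\wa\mathbf{x}]=\wa\mathbb{E}[\mathbf{x}]=\wa\boldsymbol{\mu}_x$. Finite second moments of $\mathbf{x}$ ensure that all the expectations involved exist, because each entry of $\mathbf{h}$ and of $\mathbf{h}\mathbf{h}^\top$ is a finite linear combination of entries of $\mathbf{x}$ and $\mathbf{x}\mathbf{x}^\top$.

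Second, I center the representation: $\mathbf{h}-\boldsymbol{\mu}=\wa(\mathbf{x}-\boldsymbol{\mu}_x)$. Hence
\[
\mathbf{C}_h=\mathbb{E}\bigl[\wa(\mathbf{x}-\boldsymbol{\mu}_x)(\mathbf{x}-\boldsymbol{\mu}_x)^\top\wa^\top\bigr]=\wa\boldsymbol{\Sigma}_x\wa^\top,
\]
where the matrix factors are pulled outside the expectation entrywise by linearity. This is the same computation as in the proof of Lemma~\ref{lem:balanced-non-collapse}, now carried out for a population expectation rather than an empirical average.

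Third, I specialize to $\boldsymbol{\mu}_x=\mathbf{0}$ and $\boldsymbol{\Sigma}_x=\mathbf{I}_{N_i}$. This gives $\boldsymbol{\mu}=\mathbf{0}$ and $\mathbf{C}_h=\wa\wa^\top$. Then $\operatorname{tr}(\mathbf{C}_h)=\operatorname{tr}(\wa\wa^\top)=\sum_{i,j}(\wa)_{ij}^2=\|\wa\|_F^2$.

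There is no real obstacle here. The only point needing care is justifying the exchange of expectation with multiplication by $\wa$ and $\wa^\top$, and finite second moments handle this. As a remark, substituting these identities into $\mathcal{R}_{\mathrm{SAC}}$ with $\varepsilon=0$ recovers $\frac{\lambda_{\mathrm{cov}}}{2}\|\wa\|_F^2-\frac{\gamma}{2}\log\det(\wa\wa^\top)$. This is the encoder part of $\mathcal{R}_{\mathrm{LSAC}}$ with $\lambda_{\mathrm{cov}}=\lambda_{\mathrm{reg}}$, as the text claims.
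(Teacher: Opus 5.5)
Your proposal is correct and follows the paper's proof essentially line for line: linearity of expectation gives the mean, the centering identity $\mathbf{h}-\boldsymbol{\mu}=\wa(\mathbf{x}-\boldsymbol{\mu}_x)$ yields $\mathbf{C}_h=\wa\boldsymbol{\Sigma}_x\wa^\top$, and the centered, whitened case follows by specialization and taking the trace. Your remark that finite second moments justify pulling the matrix factors out of the expectation is a small piece of rigor that the paper leaves implicit.
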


\begin{proof}
By linearity of expectation, $\boldsymbol{\mu}=\mathbb{E}[\wa \mathbf{x}]=\wa\boldsymbol{\mu}_x.$
Consequently, $\mathbf{h}-\boldsymbol{\mu}=\wa(\mathbf{x}-\boldsymbol{\mu}_x)$, giving $\mathbf{C}_h
=
\mathbb{E}[(\mathbf{h}-\boldsymbol{\mu})(\mathbf{h}-\boldsymbol{\mu})^\top]
=
\wa\boldsymbol{\Sigma}_x\wa^\top.$
For centered, whitened inputs, these identities reduce to
$\boldsymbol{\mu}=0$ and $\mathbf{C}_h=\wa\wa^\top$. Taking the trace yields
$\operatorname{tr}(\mathbf{C}_h)=\|\wa\|_F^2$.
\end{proof}

\paragraph{Recovery of the linear encoder regularizer.}
Recall that the nonlinear regularizer is
\[
\mathcal{R}_{\mathrm{SAC}}(\boldsymbol{\mu},\mathbf{C}_h)
=
\frac{\lambda_{\mathrm{mean}}}{2}\|\boldsymbol{\mu}\|_2^2
+
\frac{\lambda_{\mathrm{cov}}}{2}\operatorname{tr}(\mathbf{C}_h)
-
\frac{\gamma}{2}
\log\det(\mathbf{C}_h+\varepsilon\mathbf{I}_{N_h}).
\]
Assume $\wa\wa^\top\succ0$, centered and whitened inputs,
$\varepsilon=0$, and
$\lambda_{\mathrm{cov}}=\lambda_{\mathrm{reg}}$.
Lemma~\ref{lem:linear-gram-covariance} then gives
\[
\mathcal{R}_{\mathrm{SAC}}(\boldsymbol{\mu},\mathbf{C}_h)
=
\frac{\lambda_{\mathrm{reg}}}{2}\|\wa\|_F^2
-
\frac{\gamma}{2}\log\det(\wa\wa^\top).
\]
The mean penalty vanishes because $\boldsymbol{\mu}=\mathbf{0}$, regardless of the
positive value of $\lambda_{\mathrm{mean}}$.
Thus, the nonlinear regularizer reduces exactly to the encoder
portion of the linear regularizer:
\[
\mathcal{R}_{\mathrm{LSAC}}(\wa,\wb)
=
\mathcal{R}_{\mathrm{SAC}}(\boldsymbol{\mu},\mathbf{C}_h)
+
\frac{\lambda_{\mathrm{reg}}}{2}\|\wb\|_F^2.
\]

\paragraph{Role of the mean penalty.}
For nonlinear encoders, centered inputs need not produce centered
representations. Moreover, covariance is invariant under a common
translation of all representations, so neither the trace nor the
log-determinant term controls their mean. The additional
$\|\boldsymbol{\mu}\|_2^2$ penalty controls this translation by favoring
zero-mean representations. It therefore extends the regularizer
without changing its reduction to the bias-free linear setting
with centered, whitened inputs. The same identities hold for empirical means and covariances.

\subsubsection{Log-determinant covariance regularizer with the mean }
\label{app:logdet-covariance-regulariser}

Let $\mathbf{h}=f_\theta(\mathbf{x})\in\mathbb{R}^{N_h}$ denote the representation produced by a possibly nonlinear
encoder. For a minibatch of size \(B\), collect the representations row-wise in $\mathbf{H}\in\mathbb{R}^{B\times N_h}.$
Define the empirical mean, centering matrix, centered representations, and empirical covariance as 
$
\boldsymbol{\mu}=\frac{1}{B}\mathbf{H}^\top\mathbf{1},
$ $
\mathbf{P}_B
=
\mathbf{I}_B-\frac{1}{B}\mathbf{1}\mathbf{1}^\top,
$ $
\widetilde{\mathbf{H}}=\mathbf{P}_B\mathbf{H},
$ $
\mathbf{C}_h
=
\frac{1}{B}\widetilde{\mathbf{H}}^\top\widetilde{\mathbf{H}}
\succeq0.$ The linear analysis never needs to control the
representation's mean: for a bias-free linear encoder $\mathbf{h}=\wa \mathbf{x}$ with centered input, $\mathbb
E[\mathbf{h}]=\wa\,\mathbb E[\mathbf{x}]=\mathbf{0}$ identically, so the mean carries no independent degree of freedom for
$\mathcal R_{\mathrm{SAC}}$ to regularize. This guarantee does not survive the transfer to a
general nonlinear (or biased) encoder: a bias term, or the nonlinearity itself (e.g.\ a ReLU
clipping one side of an otherwise symmetric pre-activation), can give $\mathbf{h}$ a nonzero mean that
moves independently of $\mathbf{C}_h$ -- invisible to both $\operatorname{tr}(\mathbf{C}_h)$ and $\log\det(\mathbf{C}_h)$,
since $\mathbf{C}_h$ is computed from the \emph{centered} representation by construction. We therefore
regularize $\boldsymbol{\mu}$ directly alongside $\mathbf{C}_h$. Motivated by the linear encoder regularizer, we give the following definition.

\begin{definition}[Nonlinear SACReg]
Let $\lambda_{\mathrm{mean}}>0$,
$\lambda_{\mathrm{cov}}>0$, $\gamma>0$, and $\varepsilon\geq0$.
Assume that $\mathbf{C}_h+\varepsilon\mathbf{I}_{N_h}\succ0$.
We define
\begin{equation}
\mathcal{R}_{\mathrm{SAC}}(\boldsymbol{\mu},\mathbf{C}_h)
=
\frac{\lambda_{\mathrm{mean}}}{2}\|\boldsymbol{\mu}\|_2^2
+
\frac{\lambda_{\mathrm{cov}}}{2}\operatorname{tr}(\mathbf{C}_h)
-
\frac{\gamma}{2}
\log\det\left(\mathbf{C}_h+\varepsilon\mathbf{I}_{N_h}\right).
\label{eq:nonlinear-covariance-regulariser}
\end{equation}
\end{definition}
 The mean term controls the representation's location, the trace term controls its overall scale, and
the negative log-determinant penalizes small covariance eigenvalues.
The regularizer constrains only the representation mean and covariance; it does not directly constrain higher-order moments or enforce Gaussianity.

To understand the inductive bias of the regularizer, we consider its minimizer in isolation. This does not imply that the mean and covariance of the complete learning objective must attain this value, since the task loss may introduce competing constraints. First, we write the regularizer in spectral form.

\begin{proposition}[Spectral form]
\label{prop:nonlinear-covariance-spectral-form}
Let $\lambda_1(\mathbf{C}_h),\ldots,\lambda_{N_h}(\mathbf{C}_h)$ be the eigenvalues of \(\mathbf{C}_h\). Then
\begin{equation}
\mathcal{R}_{\mathrm{SAC}}(\boldsymbol{\mu},\mathbf{C}_h)
=
\frac{\lambda_{\mathrm{mean}}}{2}\|\boldsymbol{\mu}\|^2
+
\frac{1}{2}
\sum_{i=1}^{N_h}
\left[
    \lambda_{\mathrm{cov}}\lambda_i(\mathbf{C}_h)
    -
    \gamma\log\left(
        \lambda_i(\mathbf{C}_h)+\varepsilon
    \right)
\right].
\label{eq:nonlinear-covariance-eigenvalue-form}
\end{equation}
\end{proposition}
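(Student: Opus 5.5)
The plan is to diagonalize $\mathbf{C}_h$ and rewrite the trace and log-determinant terms through its eigenvalues; the mean term is untouched. Since $\mathbf{C}_h=\frac{1}{B}\widetilde{\mathbf{H}}^\top\widetilde{\mathbf{H}}$ is real symmetric and positive semidefinite, the spectral theorem gives an orthogonal $\mathbf{U}\in\mathbb{R}^{N_h\times N_h}$ and $\mathbf{D}=\operatorname{diag}(\lambda_1(\mathbf{C}_h),\ldots,\lambda_{N_h}(\mathbf{C}_h))$ with $\mathbf{C}_h=\mathbf{U}\mathbf{D}\mathbf{U}^\top$. Each $\lambda_i(\mathbf{C}_h)\geq 0$ is real.

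For the trace term, I will use cyclicity of the trace: $\operatorname{tr}(\mathbf{C}_h)=\operatorname{tr}(\mathbf{U}\mathbf{D}\mathbf{U}^\top)=\operatorname{tr}(\mathbf{D}\mathbf{U}^\top\mathbf{U})=\operatorname{tr}(\mathbf{D})=\sum_i\lambda_i(\mathbf{C}_h)$.

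For the log-determinant term, the key observation is that $\mathbf{U}$ also diagonalizes the shifted matrix, since $\mathbf{U}\mathbf{U}^\top=\mathbf{I}_{N_h}$:
\[
\mathbf{C}_h+\varepsilon\mathbf{I}_{N_h}=\mathbf{U}(\mathbf{D}+\varepsilon\mathbf{I}_{N_h})\mathbf{U}^\top .
\]
Multiplicativity of the determinant together with $\det\mathbf{U}\det\mathbf{U}^\top=1$ then gives $\det(\mathbf{C}_h+\varepsilon\mathbf{I}_{N_h})=\prod_i(\lambda_i(\mathbf{C}_h)+\varepsilon)$. The standing assumption $\mathbf{C}_h+\varepsilon\mathbf{I}_{N_h}\succ0$ from the definition of $\mathcal{R}_{\mathrm{SAC}}$ guarantees that every factor $\lambda_i(\mathbf{C}_h)+\varepsilon$ is strictly positive. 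This makes each logarithm well defined, so the logarithm of the product splits into $\sum_i\log(\lambda_i(\mathbf{C}_h)+\varepsilon)$.

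Substituting both identities into the definition and grouping the two sums under the common factor $\tfrac12$ yields \eqref{eq:nonlinear-covariance-eigenvalue-form}. There is no real obstacle here. The only point needing care is the positivity of the shifted eigenvalues, which is exactly what the standing assumption supplies; when $\varepsilon=0$, this means $\mathbf{C}_h$ must be strictly positive definite. I would also note that the resulting expression is separable across eigenvalues. This separability is what the later optimal-moment result (Theorem~\ref{thm:nonlinear-optimal-covariance_main}) exploits: it reduces to minimizing the scalar function $t\mapsto\lambda_{\mathrm{cov}}t-\gamma\log(t+\varepsilon)$ over $t\ge0$ for each eigenvalue independently.
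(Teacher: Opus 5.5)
Your proposal is correct and follows the paper's proof. Both eigendecompose $\mathbf{C}_h$, express the trace and $\log\det(\mathbf{C}_h+\varepsilon\mathbf{I}_{N_h})$ as sums over eigenvalues, and substitute, leaving the mean term unchanged. You additionally state explicitly that the assumption $\mathbf{C}_h+\varepsilon\mathbf{I}_{N_h}\succ0$ is what makes each logarithm well defined.
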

\begin{proof}
Because \(\mathbf{C}_h\) is symmetric and positive semidefinite, it admits an eigendecomposition
$\mathbf{C}_h=\mathbf{Q}\operatorname{diag}(\lambda_1,\ldots,\lambda_{N_h})\mathbf{Q}^\top$. Consequently,
$\operatorname{tr}(\mathbf{C}_h)=\sum_{i=1}^{N_h}\lambda_i$ and $\log\det(\mathbf{C}_h+\varepsilon \mathbf{I}_{N_h})
=\sum_{i=1}^{N_h}\log(\lambda_i+\varepsilon)$. The mean term does not depend on $\mathbf{C}_h$. Substitution
into \eqref{eq:nonlinear-covariance-regulariser} proves the result.
\end{proof}

We can now write the optimal mean and covariance.

\begin{theorem}[Optimal mean and covariance]
\label{thm:nonlinear-optimal-covariance}
The unique minimizer of \(\mathcal{R}_{\mathrm{SAC}}\) over $\boldsymbol{\mu}\in\mathbb R^{N_h},\ \mathbf{C}_h\succeq0$  such that $\mathbf{C}_h+\varepsilon\mathbf{I}_{N_h}\succ 0$ is
\begin{equation}
\boxed{
\boldsymbol{\mu}^\star=\mathbf{0},
\qquad
\mathbf{C}_h^\star
=
\left(
    \frac{\gamma}{\lambda_{\mathrm{cov}}}
    -
    \varepsilon
\right)_{+}\mathbf{I}_{N_h}
},
\label{eq:nonlinear-optimal-covariance}
\end{equation}
where $(a)_{+}=\max\{a,0\}$. Therefore, the covariance-level optimum is full-rank if and only if
$\gamma>\lambda_{\mathrm{cov}}\varepsilon$, independently of $\boldsymbol{\mu}^\star$.
\end{theorem}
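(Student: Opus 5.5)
The plan is to exploit the fact that $\mathcal{R}_{\mathrm{SAC}}$ separates into a term depending only on $\boldsymbol{\mu}$ and a term depending only on $\mathbf{C}_h$. The feasible set is a product: $\boldsymbol{\mu}\in\mathbb{R}^{N_h}$ is unconstrained, and $\mathbf{C}_h$ ranges over $\{\mathbf{C}\succeq0:\mathbf{C}+\varepsilon\mathbf{I}_{N_h}\succ0\}$. Hence a pair minimizes $\mathcal{R}_{\mathrm{SAC}}$ if and only if each component minimizes its own term, and uniqueness of the pair reduces to uniqueness of each component. The mean part $\frac{\lambda_{\mathrm{mean}}}{2}\|\boldsymbol{\mu}\|_2^2$ with $\lambda_{\mathrm{mean}}>0$ is strictly convex with unique minimizer $\boldsymbol{\mu}^\star=\mathbf{0}$. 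This requires no further work.

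For the covariance part I will invoke Proposition~\ref{prop:nonlinear-covariance-spectral-form}, which writes it as $\frac12\sum_i\varphi(\lambda_i(\mathbf{C}_h))$ with the scalar function
\[
\varphi(t)=\lambda_{\mathrm{cov}}t-\gamma\log(t+\varepsilon).
\]
Here $\varphi$ is defined on $t\geq0$, with $t>0$ additionally required when $\varepsilon=0$; this is exactly the eigenvalue-level translation of the feasibility constraint. Since $\varphi''(t)=\gamma/(t+\varepsilon)^2>0$, $\varphi$ is strictly convex, and $\varphi'(t)=\lambda_{\mathrm{cov}}-\gamma/(t+\varepsilon)$. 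I will then split into two cases.
\begin{itemize}
\item If $\gamma/\lambda_{\mathrm{cov}}>\varepsilon$, then $\varphi'$ vanishes at the interior point $t^\star=\gamma/\lambda_{\mathrm{cov}}-\varepsilon>0$. This is the unique global minimizer. This case covers every $\varepsilon=0$ situation, where the log barrier also forces $t>0$.
\item If $\gamma/\lambda_{\mathrm{cov}}\le\varepsilon$, which forces $\varepsilon>0$, then $\varphi'(t)\geq0$ on $[0,\infty)$, and it vanishes at most at $t=0$. So $\varphi$ is strictly increasing and is uniquely minimized at the boundary point $t^\star=0$.
\end{itemize}
Both cases give $t^\star=(\gamma/\lambda_{\mathrm{cov}}-\varepsilon)_+$.

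Next I lower-bound the covariance term by $\frac{N_h}{2}\varphi(t^\star)$ for every feasible $\mathbf{C}_h$. Equality holds if and only if every eigenvalue equals $t^\star$. A symmetric matrix whose eigenvalues all equal $t^\star$ is $\mathbf{Q}(t^\star\mathbf{I})\mathbf{Q}^\top=t^\star\mathbf{I}_{N_h}$, so the minimizer $\mathbf{C}_h^\star=t^\star\mathbf{I}_{N_h}$ is unique. Feasibility is immediate: $t^\star\mathbf{I}\succeq0$, and $t^\star+\varepsilon>0$ in both cases. Finally, $\mathbf{C}_h^\star$ is full rank if and only if $t^\star>0$, that is, if and only if $\gamma>\lambda_{\mathrm{cov}}\varepsilon$. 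This condition does not involve $\boldsymbol{\mu}$.

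The only delicate step is the boundary case $\gamma\le\lambda_{\mathrm{cov}}\varepsilon$. There the minimizer lies on the boundary of the feasible set rather than at a stationary point, so uniqueness must come from the monotonicity of $\varphi$ rather than from the first-order condition. Everything else is a direct application of the spectral form together with the observation that a matrix with a single repeated eigenvalue is a scalar multiple of the identity.
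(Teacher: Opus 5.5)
Your proposal is correct and follows the paper's proof essentially step for step: split off the mean term (uniquely minimized at $\boldsymbol{\mu}^\star=\mathbf{0}$), use the spectral form to reduce the covariance term to a strictly convex scalar function of each eigenvalue, and distinguish the interior case $\gamma/\lambda_{\mathrm{cov}}>\varepsilon$ from the boundary case where $\varepsilon>0$ and $t^\star=0$. Your write-up is slightly more complete than the paper's, since you justify uniqueness at the boundary by strict monotonicity and note that a symmetric matrix whose eigenvalues all equal $t^\star$ must be $t^\star\mathbf{I}_{N_h}$, which turns eigenvalue-wise uniqueness into uniqueness of $\mathbf{C}_h^\star$.
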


\begin{proof}
By Proposition~\ref{prop:nonlinear-covariance-spectral-form}, $\mathcal{R}_{\mathrm{SAC}}$ is a sum
of a term depending only on $\boldsymbol{\mu}$ and a term depending only on $\mathbf{C}_h$, so the two may be minimized
separately. The mean term $\frac{\lambda_{\mathrm{mean}}}{2}\|\boldsymbol{\mu}\|^2$ is uniquely minimized at
$\boldsymbol{\mu}^\star=\mathbf{0}$. For the covariance term, each eigenvalue independently minimizes the strictly
convex function
\[
\phi(s)
=
\frac{\lambda_{\mathrm{cov}}}{2}s
-
\frac{\gamma}{2}\log(s+\varepsilon),
\qquad
s\geq0 \quad , s+\varepsilon \geq 0
\]
Its derivatives are $\phi'(s)=\frac{\lambda_{\mathrm{cov}}}{2}-\frac{\gamma}{2(s+\varepsilon)}$ and
$\phi''(s)=\frac{\gamma}{2(s+\varepsilon)^2}>0$. The unconstrained stationary point satisfies
$\phi'(s)=0 \Leftrightarrow s=\frac{\gamma}{\lambda_{\mathrm{cov}}}-\varepsilon$. If $\gamma/\lambda_{\mathrm{cov}}-\varepsilon>0$, this stationary point is the unique minimizer. Otherwise, $\varepsilon>0$, so $s=0$ is admissible and is the unique minimizer.
\end{proof}

\begin{corollary}[Isotropic non-collapsed target, centered]
\label{cor:nonlinear-isotropic-target}
In the idealized case \(\varepsilon=0\),
\begin{equation}
\boxed{
\boldsymbol{\mu}^\star=\mathbf{0},
\qquad
\mathbf{C}_h^\star
=
\frac{\gamma}{\lambda_{\mathrm{cov}}}\mathbf{I}_{N_h}
}.
\label{eq:nonlinear-isotropic-covariance}
\end{equation}
Thus, the regularizer simultaneously:
\begin{enumerate}
    \item centers the representation at the origin;
    \item prevents the covariance eigenvalues from vanishing;
    \item equalizes the covariance spectrum;
    \item controls the overall representation scale; and
    \item selects a condition number equal to one.
\end{enumerate}

\end{corollary}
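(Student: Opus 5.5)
This is a direct specialization of Theorem~\ref{thm:nonlinear-optimal-covariance} to $\varepsilon=0$, followed by reading off the five listed properties. First I would substitute $\varepsilon=0$ into \eqref{eq:nonlinear-optimal-covariance}. Since $\gamma>0$ and $\lambda_{\mathrm{cov}}>0$, the quantity $\gamma/\lambda_{\mathrm{cov}}-0$ is strictly positive. The positive-part operator therefore acts trivially, which gives $\boldsymbol{\mu}^\star=\mathbf{0}$ and $\mathbf{C}_h^\star=(\gamma/\lambda_{\mathrm{cov}})\mathbf{I}_{N_h}$ as the unique minimizer.

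One point needs care here. With $\varepsilon=0$, the admissible set $\{\mathbf{C}_h\succeq0,\ \mathbf{C}_h\succ0\}$ is not closed. So I would check that the scalar argument in the proof of Theorem~\ref{thm:nonlinear-optimal-covariance} still applies on the open domain $s>0$. The function
\[
\phi(s)=\tfrac{\lambda_{\mathrm{cov}}}{2}s-\tfrac{\gamma}{2}\log s
\]
is strictly convex on $(0,\infty)$ and satisfies $\phi(s)\to+\infty$ both as $s\to0^+$ and as $s\to\infty$. Hence its unique stationary point $s=\gamma/\lambda_{\mathrm{cov}}$ is its unique global minimizer. By Proposition~\ref{prop:nonlinear-covariance-spectral-form}, the covariance part of $\mathcal{R}_{\mathrm{SAC}}$ is a sum of such $\phi(\lambda_i)$ terms. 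It is therefore minimized exactly when every eigenvalue equals $\gamma/\lambda_{\mathrm{cov}}$. A symmetric matrix with all eigenvalues equal to $c$ is $c\mathbf{I}$, so the minimizer is unique as a matrix and not only as a spectrum.

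The five enumerated properties then follow directly from the minimizer:
\begin{enumerate}
\item Centering is $\boldsymbol{\mu}^\star=\mathbf{0}$.
\item Non-vanishing eigenvalues follow from $\lambda_{\min}(\mathbf{C}_h^\star)=\gamma/\lambda_{\mathrm{cov}}>0$.
\item Equalization of the spectrum holds because all eigenvalues coincide.
\item Scale control holds because $\operatorname{tr}(\mathbf{C}_h^\star)=N_h\gamma/\lambda_{\mathrm{cov}}$ is finite and fixed by the ratio $\gamma/\lambda_{\mathrm{cov}}$.
\item The condition number is $\lambda_{\max}/\lambda_{\min}=1$.
\end{enumerate}

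I expect no real obstacle. The only subtle step is the open-domain issue at $\varepsilon=0$, and the coercivity of $\phi$ at both ends handles it.
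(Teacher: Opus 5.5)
Your proposal is correct and matches the paper's route: the corollary is Theorem~\ref{thm:nonlinear-optimal-covariance} specialized to $\varepsilon=0$, where the positive part is inactive because $\gamma/\lambda_{\mathrm{cov}}>0$, and the five properties are read off from $\boldsymbol{\mu}^\star=\mathbf{0}$ and $\mathbf{C}_h^\star=(\gamma/\lambda_{\mathrm{cov}})\mathbf{I}_{N_h}$. Your two extra checks, that the minimization is over the open set $s>0$ and that the minimizer is unique as a matrix and not only as a spectrum, are sound and make explicit what the paper's eigenvalue-wise argument leaves implicit.
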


Next we derive the non-collapse property of the regularizer.
 
\begin{theorem}[Mean boundedness and strict covariance non-collapse]
\label{thm:strict-covariance-non-collapse}
Suppose that \(\varepsilon=0\). Then
\begin{equation}
\lambda_{\min}(\mathbf{C}_h)\to0
\quad\Longrightarrow\quad
\mathcal{R}_{\mathrm{SAC}}(\boldsymbol{\mu},\mathbf{C}_h)\to+\infty,
\label{eq:covariance-collapse-barrier}
\end{equation}
\begin{equation}
\lambda_{\max}(\mathbf{C}_h)\to+\infty
\quad\Longrightarrow\quad
\mathcal{R}_{\mathrm{SAC}}(\boldsymbol{\mu},\mathbf{C}_h)\to+\infty,
\label{eq:covariance-scale-barrier_2}
\end{equation}
\begin{equation}
\|\boldsymbol{\mu}\|\to\infty
\quad\Longrightarrow\quad
\mathcal{R}_{\mathrm{SAC}}(\boldsymbol{\mu},\mathbf{C}_h)\to+\infty.
\label{eq:mean-blowup-barrier}
\end{equation}

Consequently, any fixed finite upper bound on the regularizer
bounds the mean and keeps all covariance eigenvalues bounded
above and away from zero. Specifically, for every finite $K$,
there exist constants $0<m_K\le M_K<\infty$ and
$0\le M'_K<\infty$ such that
\begin{equation}
\label{eq:covariance-scale-barrier}
R_{\mathrm{SAC}}(\boldsymbol{\mu},\mathbf{C}_h)\le K
\quad\Longrightarrow\quad
m_K \mathbf{I}_{N_h}\preceq \mathbf{C}_h\preceq M_K \mathbf{I}_{N_h},
\qquad
\|\boldsymbol{\mu}\|_2\le M'_K.
\end{equation}
\end{theorem}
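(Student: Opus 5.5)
We prove Theorem~\ref{thm:strict-covariance-non-collapse} from the spectral form of Proposition~\ref{prop:nonlinear-covariance-spectral-form}. With $\varepsilon=0$ and $\mathbf{C}_h\succ0$,
\[
\mathcal{R}_{\mathrm{SAC}}(\boldsymbol{\mu},\mathbf{C}_h)=\frac{\lambda_{\mathrm{mean}}}{2}\|\boldsymbol{\mu}\|_2^2+\sum_{i=1}^{N_h}\phi(\lambda_i),
\qquad
\phi(s)=\frac{\lambda_{\mathrm{cov}}}{2}s-\frac{\gamma}{2}\log s .
\]
This is a sum of nonnegative-up-to-constant pieces. From the proof of Theorem~\ref{thm:nonlinear-optimal-covariance}, $\phi$ is strictly convex on $(0,\infty)$ and attains its global minimum at $s^\star=\gamma/\lambda_{\mathrm{cov}}$. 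The minimum value is $\phi_\star=\frac{\gamma}{2}\bigl(1-\log(\gamma/\lambda_{\mathrm{cov}})\bigr)$. In addition, $\phi(s)\to+\infty$ both as $s\downarrow0$ (from the $-\log$ term) and as $s\to\infty$ (since linear growth dominates the logarithm).

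The central step is a one-term lower bound. For any index $j$,
\[
\mathcal{R}_{\mathrm{SAC}}\ \ge\ \frac{\lambda_{\mathrm{mean}}}{2}\|\boldsymbol{\mu}\|_2^2+\phi(\lambda_j)+(N_h-1)\phi_\star .
\]
This holds because every other eigenvalue term is at least $\phi_\star$. The three implications then follow by choosing the term appropriately:

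\begin{itemize}
\item For \eqref{eq:covariance-collapse-barrier}, take $j$ to be the index of $\lambda_{\min}$ and drop the mean term, which is nonnegative. Then $\phi(\lambda_{\min})\to+\infty$.
\item For \eqref{eq:covariance-scale-barrier_2}, take $j$ to be the index of $\lambda_{\max}$ in the same way.
\item For \eqref{eq:mean-blowup-barrier}, bound every $\phi(\lambda_i)$ below by $\phi_\star$. This gives $\mathcal{R}_{\mathrm{SAC}}\ge\frac{\lambda_{\mathrm{mean}}}{2}\|\boldsymbol{\mu}\|^2+N_h\phi_\star\to\infty$.
\end{itemize}

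The stated convergences should be read as holding along arbitrary sequences with the other quantities unrestricted. The uniform lower bounds above make this legitimate: no compensation between terms can occur.

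For the quantitative sublevel statement \eqref{eq:covariance-scale-barrier}, fix $K$ and assume $\mathcal{R}_{\mathrm{SAC}}\le K$. The one-term bound gives $\phi(\lambda_i)\le K-(N_h-1)\phi_\star=:K'$ for every $i$, and necessarily $K'\ge\phi_\star$. By strict convexity and coercivity of $\phi$ at both ends, the set $\{s>0:\phi(s)\le K'\}$ is a compact interval $[m_K,M_K]\subset(0,\infty)$, containing $s^\star$. This yields $m_K\mathbf{I}\preceq\mathbf{C}_h\preceq M_K\mathbf{I}$.

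Similarly, $\|\boldsymbol{\mu}\|_2^2\le 2(K-N_h\phi_\star)/\lambda_{\mathrm{mean}}$, so we can take $M'_K$ to be the square root of the positive part of this quantity.

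Two points need care. First, if $K<N_h\phi_\star$, the sublevel set is empty and the claim holds vacuously with any admissible constants, for instance $m_K=M_K=s^\star$ and $M'_K=0$. Second, the endpoints $m_K,M_K$ of the interval can be described only implicitly, through the two roots of $\phi(s)=K'$; an explicit formula would involve the Lambert $W$ function and is not needed.
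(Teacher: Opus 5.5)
Your proof is correct and follows the paper's argument essentially step for step: the spectral form, the global lower bound $\phi\ge\phi_\star$ attained at $s^\star=\gamma/\lambda_{\mathrm{cov}}$, and one-term lower bounds that isolate $\lambda_{\min}$, $\lambda_{\max}$, or the mean term. The only difference is in the sublevel-set conclusion: the paper argues by contradiction, whereas you extract explicit constants from the compact interval $\{s>0:\phi(s)\le K-(N_h-1)\phi_\star\}$ and the bound $\|\boldsymbol{\mu}\|_2^2\le 2(K-N_h\phi_\star)/\lambda_{\mathrm{mean}}$, and you also handle the empty case, which makes this step a slightly more direct and complete version of the same argument.
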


\begin{proof}
Suppose that \(\varepsilon=0\) and \(\mathbf{C}_h\succ0\). Let $\lambda_1(\mathbf{C}_h),\ldots,\lambda_{N_h}(\mathbf{C}_h)>0$
denote the eigenvalues of \(\mathbf{C}_h\). Since $\operatorname{tr}(\mathbf{C}_h)=\sum_{i=1}^{N_h}\lambda_i(\mathbf{C}_h)$ and
$\log\det(\mathbf{C}_h)=\sum_{i=1}^{N_h}\log\lambda_i(\mathbf{C}_h)$, write
\[
\mathcal{R}_{\mathrm{SAC}}(\boldsymbol{\mu},\mathbf{C}_h) = \frac{\lambda_{\mathrm{mean}}}{2}\|\boldsymbol{\mu}\|^2+\sum_{i=1}^{N_h}
\phi\left(\lambda_i(\mathbf{C}_h)\right),
\qquad
\phi(s)
=
\frac{\lambda_{\mathrm{cov}}}{2}s
-
\frac{\gamma}{2}\log s,
\quad
s>0.
\]
We first show that \(\phi\) is bounded below. Its derivatives are
$\phi'(s)=\frac{\lambda_{\mathrm{cov}}}{2}-\frac{\gamma}{2s}$ and
$\phi''(s)=\frac{\gamma}{2s^2}>0$, so \(\phi\) is strictly convex, with unique stationary point
$s=\gamma/\lambda_{\mathrm{cov}}$ and global minimum
\[
b
:=
\min_{s>0}\phi(s)
=
\phi\!\left(\frac{\gamma}{\lambda_{\mathrm{cov}}}\right)
=
\frac{\gamma}{2}
-
\frac{\gamma}{2}
\log\!\left(\frac{\gamma}{\lambda_{\mathrm{cov}}}\right).
\]
In particular, $\phi(s)\geq b$ for every $s>0$, so $\sum_{i=1}^{N_h}\phi(\lambda_i(\mathbf{C}_h))\geq N_h b$ for
every $\mathbf{C}_h\succ0$.

\emph{Covariance collapse.} Without loss of generality order the eigenvalues so that
$\lambda_1(\mathbf{C}_h)=\lambda_{\min}(\mathbf{C}_h)$. Then
\[
\mathcal{R}_{\mathrm{SAC}}(\boldsymbol{\mu},\mathbf{C}_h)
\;\geq\;
\phi\left(\lambda_{\min}(\mathbf{C}_h)\right)
+
(N_h-1)b.
\]
As $s\downarrow0$, $\frac{\lambda_{\mathrm{cov}}}{2}s\to0$ and $-\frac{\gamma}{2}\log s\to+\infty$,
so $\lim_{s\downarrow0}\phi(s)=+\infty$. Since $(N_h-1)b$ is a fixed finite constant,
$\lambda_{\min}(\mathbf{C}_h)\to0\Rightarrow\mathcal{R}_{\mathrm{SAC}}(\boldsymbol{\mu},\mathbf{C}_h)\to+\infty$, proving
\eqref{eq:covariance-collapse-barrier}.

\emph{Covariance blowup.} As $s\to+\infty$, $\frac{\lambda_{\mathrm{cov}}}{2}s\to+\infty$ dominates
$-\frac{\gamma}{2}\log s$, so $\phi(s)\to+\infty$; the same argument with $\lambda_{\max}(\mathbf{C}_h)$ in
place of $\lambda_{\min}(\mathbf{C}_h)$ proves \eqref{eq:covariance-scale-barrier_2}.

\emph{Mean blowup.}
Since $\phi(s)\geq b$ for every $s>0$, we have
\[
\mathcal R_{\mathrm{SAC}}(\boldsymbol{\mu},\mathbf{C}_h)
\geq
\frac{\lambda_{\mathrm{mean}}}{2}\|\boldsymbol{\mu}\|_2^2+N_hb
\qquad\text{for every }\mathbf{C}_h\succ0.
\]
Because $\lambda_{\mathrm{mean}}>0$ and $b$ is finite,
$\|\boldsymbol{\mu}\|_2\to\infty$ implies
$\mathcal R_{\mathrm{SAC}}(\boldsymbol{\mu},\mathbf{C}_h)\to+\infty$,
even when $\mathbf{C}_h$ varies.
This proves \eqref{eq:mean-blowup-barrier}.

The preceding limits imply that, for any fixed finite $K$,
the condition $\mathcal{R}_{\mathrm{SAC}}(\boldsymbol{\mu},\mathbf{C}_h)\leq K$
keeps the covariance eigenvalues bounded above and away from
zero, and bounds $\|\boldsymbol{\mu}\|$. Otherwise, a sequence satisfying
this condition would approach covariance collapse, unbounded
covariance, or unbounded mean, forcing the regularizer to
diverge and contradicting its upper bound $K$.
This proves \eqref{eq:covariance-scale-barrier}.
\end{proof}

\begin{remark}[Effect of the numerical stabilizer]
The regularizer prevents covariance collapse when \(\varepsilon=0\), but only discourages it when \(\varepsilon>0\). In the latter case, its unique covariance-level minimizer is still full-rank provided \(\gamma>\lambda_{\mathrm{cov}}\varepsilon\) (Theorem~\ref{thm:nonlinear-optimal-covariance}). Mean boundedness remains guaranteed in both cases.
\end{remark}

\begin{remark}[Scope of the nonlinear extension]
The nonlinear covariance regularizer is motivated by the exact linear balance analysis, but it
does not imply that the linear balance identity $\wb^\top\wb-\wa\wa^\top
=-\frac{\gamma}{\lambda_{\mathrm{reg}}}\mathbf{I}_{N_h}$ continues to hold for a general nonlinear encoder. The
nonlinear parameter dynamics are modified by the encoder Jacobian, and there is generally no
conserved difference between the decoder Gram matrix and the representation covariance. 
\end{remark}

% =========================
% Rich feature learning
% =========================
\begin{figure*}[t]
    \centering
\includegraphics[width=.8\textwidth]{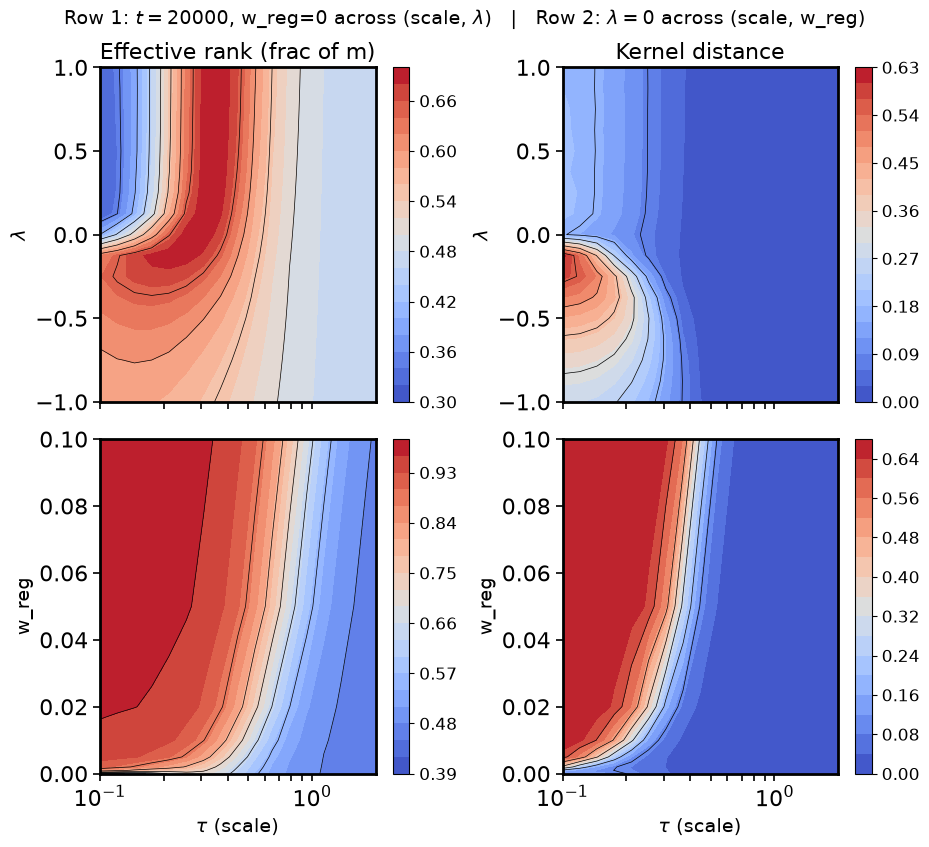}
    \caption{Effective rank and kernel distance of the NTK from initialization. \textbf{Top row:} Both metrics as functions of the scale $\tau$ and relative scale $\lambda$ at step $t = 20{,}000$, without regularization ($w_{\mathrm{reg}} = 0$). \textbf{Bottom row:} The same metrics at the final checkpoint as functions of the scale $\tau$ and regularization weight $w_{\mathrm{reg}}$, with $\lambda = 0$ fixed.    }
    \label{fig:rich-Relu}
\end{figure*}

\subsubsection{Feature learning: Two-layer ReLU experiment}
\label{app:Two-layer}

\paragraph{Task and data.}
We study nonlinear teacher--student regression with a fixed,
untrained two-layer ReLU teacher of hidden width \(N_{h_0}=3\).
The teacher is
\[
f_\star(\mathbf{x})
=
\sum_{j=1}^{N_{h_0}} a^\star_j
\sigma\!\left((\mathbf{w}^\star_j)^\top \mathbf{x}\right),
\qquad
\sigma(z)=\max\{0,z\},
\]
where \(\mathbf{w}^\star_j\) are sampled independently and uniformly
from the unit sphere \(\mathbb{S}^{d-1}\), and \(a^\star_j\) are
independent Rademacher random variables.
For each run, we sample \(n=1{,}000\) training inputs independently
from \(\mathbb{S}^{N_i-1}\) and assign noiseless labels
\(y_i=f_\star(\mathbf{x}_i)\).
A separate test set of \(n_{\mathrm{test}}=10{,}000\) inputs,
generated with seed \(201\), is reused at every checkpoint for
both test-loss and representation-rank measurements.

\paragraph{Student architecture.}
The student is a two-layer, bias-free ReLU network with
hidden width \(N_h=50\):
\[
\mathbf{h}_\theta(\mathbf{x})
=
\sigma(\mathbf{W}\mathbf{x})
\in\mathbb{R}^{N_h},
\qquad
f_\theta(\mathbf{x})
=
\mathbf{a}^\top\mathbf{h}_\theta(\mathbf{x}).
\]
We measure the rank of the post-ReLU representation
\(\mathbf{h}_\theta(\mathbf{x})\), the network's only hidden layer.
The student uses a symmetric initialization: hidden units
form \(N_h/2\) pairs with identical input weights and
opposite readout weights,
\[
\mathbf{w}_{j+N_h/2}=\mathbf{w}_j,
\qquad
a_{j+N_h/2}=-a_j,
\qquad
j=1,\ldots,N_h/2.
\]
This ensures \(f_{\theta_0}(\mathbf{x})=\mathbf{0}\) for every input
while retaining a nontrivial initial neural tangent kernel (NTK).

\paragraph{Initialization.}
Two parameters control initialization: an overall scale \(\tau>0\)
and a per-neuron imbalance \(\lambda\).
For each independently sampled pair, we initialize
\[
\mathbf{w}_j=\frac{\tau}{\alpha}\mathbf{u}_j,
\qquad
a_j=\tau\alpha s_j,
\]
where \(\mathbf{u}_j\) is uniform on \(\mathbb{S}^{N_i-1}\),
\(s_j\in\{-1,+1\}\) is a Rademacher random variable, and
\(\alpha>0\) satisfies
\[
\lambda
=
a_j^2-\|\mathbf{w}_j\|_2^2
=
\tau^2\left(\alpha^2-\alpha^{-2}\right).
\]
Equivalently,
\[
\alpha
=
\left(
\frac{
\lambda/\tau^2+
\sqrt{(\lambda/\tau^2)^2+4}
}{2}
\right)^{1/2}.
\]
Thus, negative \(\lambda\) corresponds to larger
encoder weights, whereas positive \(\lambda\)
corresponds to larger readout weights.

\paragraph{Objective and optimization.}
We train the student using full-batch gradient descent on
the mean squared error,
\[
\mathcal{L}_{\mathrm{MSE}}(\theta)
=
\frac{1}{n}\sum_{i=1}^{n}
\left(f_\theta(\mathbf{x}_i)-y_i\right)^2,
\]
with an additional regularization term in the regularized
experiments.
The learning rate is scaled as
\(\eta=5\times10^{-3}/\tau^2\).
We apply the log-determinant covariance regularizer directly
to the hidden representation of the toy two-layer ReLU
student network,
\[
\mathbf{h}
=
\mathrm{ReLU}(\mathbf{W}_1\mathbf{x})
\in\mathbb{R}^{N_h}
\]
(here \(N_h=50\)), immediately after the nonlinearity and
before the readout layer.
We use the implementation of the regularizer described in
Appendix~\ref{app:slicing-theory}.

We use \(\lambda_{\mathrm{cov}}=\gamma\).
Therefore, the regularized training objective is
\[
\mathcal{L}
=
\mathcal{L}_{\mathrm{task}}
+
w_{\mathrm{reg}}\cdot
\mathcal{R}_{\mathrm{SAC}}(\boldsymbol{\mu},\mathbf{C}_h),
\]
with \(\mathcal{L}_{\mathrm{task}}\) the mean-squared error
between the student's output and the teacher's labels, and
\(w_{\mathrm{reg}}\in\{0,0.001,0.005,0.02,0.1\}\)
swept as the regularizer strength.

\textbf{Feature learning.}
Using the two-layer ReLU network setup of \cite{kunin2024get},
we investigate how the rank effects and the learning regime
of the initial imbalance \(\lambda\) relate to those of the
SAC regularizer. Linear theory suggests that a negative
relative scale should favor the same anti-collapse behavior
promoted by SAC regularization.

\begin{itemize}

\item \textit{Rank.}
There is no exact correspondence between the initial imbalance
\(\lambda\) and the regularization weight \(w_{\mathrm{reg}}\).
The regularizer appears to exert a substantially stronger
effect on rank (see Fig.~\ref{fig:rich-Relu}).
Nevertheless, both negative \(\lambda\)
(relative to \(\lambda\geq0\)) and positive \(w_{\mathrm{reg}}\)
favor higher rank in the Rich regime, supporting a qualitative
connection between initial imbalance and anti-collapse
regularization (see Fig.~\ref{fig:rich-Relu}).
Interestingly, the rank effect of negative \(\lambda\) emerges
during training rather than reflecting a decaying initial
offset. All negative-\(\lambda\) runs with \(w_{\mathrm{reg}}=0\)
begin at approximately the same effective rank fraction,
\(\mathtt{effrank\_frac}=0.467\), consistent with their shared
initialization (see Table~\ref{tab:effective-rank}).
This quantity then increases monotonically throughout training,
indicating that the higher final rank develops through the
training dynamics.

\item \textit{Learning regime.}
We characterize the learning regime by measuring the departure
of the neural tangent kernel (NTK) from its value at
initialization, quantified by kernel distance.
Increasing \(w_{\mathrm{reg}}\) leads to an increase in kernel
distance across the initialization scales considered, indicating
that stronger regularization promotes greater kernel evolution
(see Fig.~\ref{fig:rich-Relu}).
It also extends the range of initialization scales over which
feature learning occurs, although kernel evolution remains
limited at sufficiently large scales
(see Fig.~\ref{fig:rich-Relu}).
At small initialization scales, varying \(\lambda\) alone yields
kernel distance values between \(0.16\) and \(0.58\), comparable
to the range obtained by varying \(w_{\mathrm{reg}}\).
At large initialization scales, however, both parameters have
little effect on kernel evolution.
These results suggest that \(\lambda\) primarily modulates
the degree of feature learning within an already rich regime,
whereas \(w_{\mathrm{reg}}\) can also promote departures from
otherwise lazy dynamics.
Altogether, we find that regularization promotes feature
learning across a range of initialization scales, with stronger
regularization associated with more pronounced feature learning
(see Fig.~\ref{fig:rich-Relu} in Appendix~\ref{app:Two-layer}).
A similar trend emerges in the self-supervised setting.
We show that both the backbone representations and the empirical
NTK move substantially from initialization during training,
consistent with rich learning dynamics
(Fig.~\ref{fig:rich-learning} and Appendix~\ref{app:rich-vit}).

\end{itemize}

\begin{table}[t]
\centering
\caption{Effective rank fraction across training steps for different
values of layer imbalance $\lambda$.}
\label{tab:effective-rank}
\begin{tabular}{r|rrrrrr}
\hline
 & \multicolumn{6}{c}{Training step} \\
$\lambda$ & $0$ & $100$ & $1000$ & $5000$ & $10000$ & $20000$ \\
\hline
$-1.00$ & 0.467 & 0.467 & 0.484 & 0.554 & 0.574 & 0.581 \\
$-0.75$ & 0.467 & 0.467 & 0.487 & 0.563 & 0.587 & 0.595 \\
$-0.50$ & 0.467 & 0.467 & 0.492 & 0.577 & 0.606 & 0.616 \\
$-0.25$ & 0.467 & 0.469 & 0.500 & 0.596 & 0.627 & 0.639 \\
\hline
\end{tabular}
\end{table}

% =========================
% Rich feature learning
% =========================
\begin{figure*}[t]
    \centering
    \includegraphics[width=\textwidth]{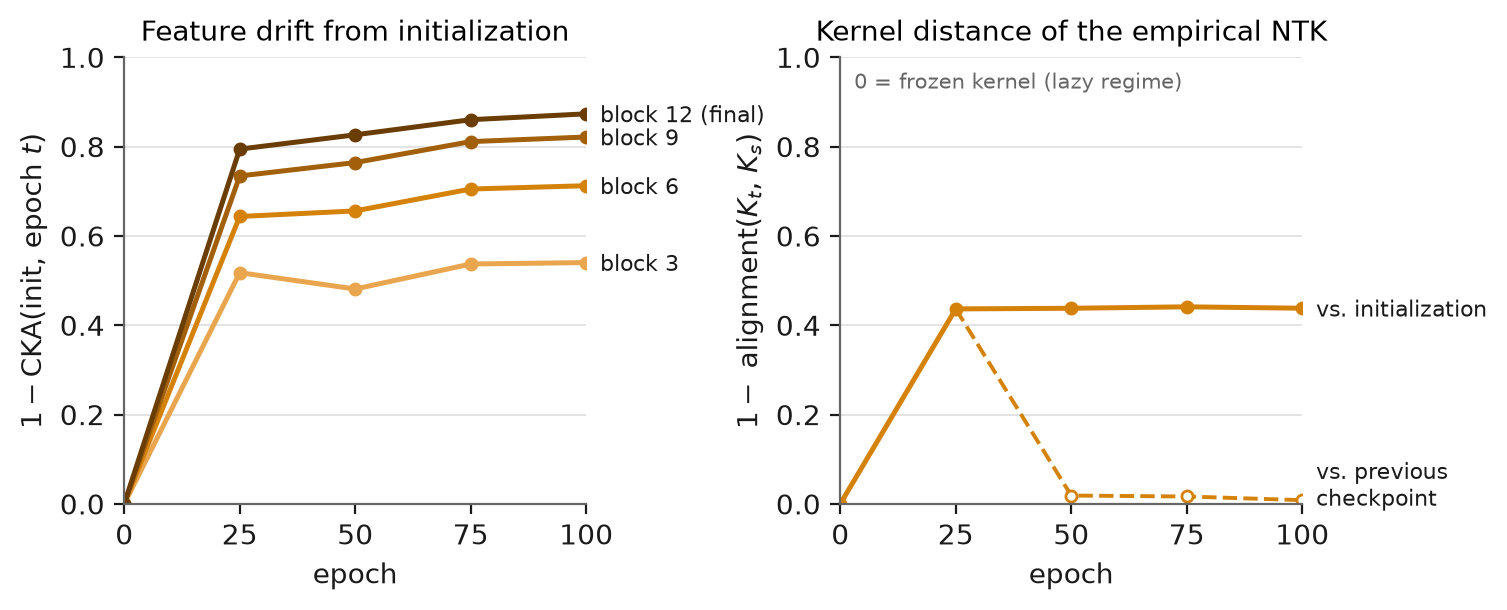}
    % \caption{
    % Training remains in the rich feature-learning regime. Backbone features move substantially from initialization, and the empirical tangent kernel differs strongly from its initial value.
    % }
    \caption{
    \textbf{Feature and kernel drift during self-supervised training.} ImageNet-100, ViT-S/16 trained for 100 epochs with SACReg applied at the backbone. \textbf{Left:} feature drift from initialization, measured by $1-\mathrm{CKA}$ at the CLS token after transformer blocks 3, 6, 9, and 12. \textbf{Right:} empirical NTK distance from initialization (solid) and from the previous checkpoint (dashed). Representations drift increasingly with depth, while the NTK moves substantially early in training and subsequently stabilizes, consistent with rich rather than lazy learning dynamics.
    }
    \label{fig:rich-learning}
\end{figure*}

\subsubsection{Feature learning in the ViT}
\label{app:rich-vit}
We additionally measure feature and kernel drift during self-supervised training
on ImageNet-100. We train a ViT-S/16 for 100 epochs with SACReg applied to the
backbone representation and evaluate all checkpoints on fixed held-out images.

\paragraph{Feature drift.}
For 4,096 validation images, let $\mathbf{X}_0,\mathbf{X}_t\in\mathbb{R}^{4096\times384}$
denote the CLS representations at initialization and epoch $t$, respectively,
at a given transformer block. We measure their dissimilarity using one minus
linear CKA \citep{kornblith2019similarity},
\[
1-\mathrm{CKA}(\mathbf{X}_0,\mathbf{X}_t), \qquad
\mathrm{CKA}(\mathbf{X},\mathbf{Y})
=
\frac{\|\mathbf{X}_c^\top \mathbf{Y}_c\|_F^2}
{\|\mathbf{X}_c^\top \mathbf{X}_c\|_F\,\|\mathbf{Y}_c^\top \mathbf{Y}_c\|_F},
\]
where $\mathbf{X}_c$ and $\mathbf{Y}_c$ are column-centered. We report this quantity after
blocks 3, 6, 9, and 12. Linear CKA is invariant to orthogonal transformations
and isotropic rescaling, so the measure captures changes in representation
geometry rather than changes in feature coordinates or overall scale.

\paragraph{Kernel drift.}
We estimate the empirical NTK of the backbone output with respect to the model parameters on a fixed subset of 256 validation images. To avoid constructing the
full vector-valued NTK, we use eight fixed unit-norm random directions
$\mathbf{v}_k\in\mathbb{R}^{384}$ in output space and compute
\[
\mathbf{g}_{ik}^{(t)}
=
\nabla_\theta\!\left(\mathbf{v}_k^\top \mathbf{h}_\theta(\mathbf{x}_i)\right),
\qquad
\mathbf{K}_t[i,j]
=
\frac{1}{8}\sum_{k=1}^{8}
\left\langle \mathbf{g}_{ik}^{(t)},\mathbf{g}_{jk}^{(t)}\right\rangle .
\]
The same images and random directions are used at every checkpoint. Up to a
constant factor, $\mathbf{K}_t$ is an unbiased estimate of the trace NTK; this factor
cancels in the normalized comparison. We measure kernel drift as
\[
d(\mathbf{K}_t,\mathbf{K}_s)
=
1-
\frac{\langle \mathbf{K}_t,\mathbf{K}_s\rangle_F}
{\|\mathbf{K}_t\|_F\|\mathbf{K}_s\|_F},
\]
and report distance both from initialization ($s=0$) and from the preceding checkpoint. In the lazy-training limit, the NTK remains fixed, so its distance from initialization stays at zero.

Figure~\ref{fig:rich-learning} shows substantial drift in both representations and the empirical NTK. Feature drift increases with depth, reaching its largest value at the final transformer block. The NTK moves away from its initialization during
the first 25 epochs and then changes only slightly between subsequent checkpoints. Together, these measurements are consistent with rich feature learning.

\section{Implementation of the regularizer}
\label{app:slicing}

\begin{figure*}[t]
    \centering
\includegraphics[width=.8\textwidth]{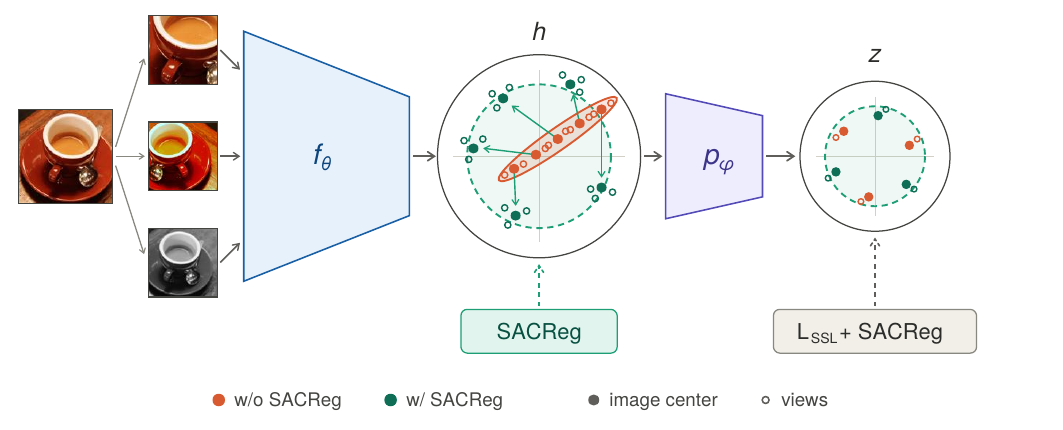}
    \caption{\textbf{Schematic overview of $\lambda$-JEPA.} Augmented views are mapped by the encoder ($f_\theta$) to hidden representations (h), then by ($p_\varphi$) to embeddings (z). SACReg applies regularization directly in the hidden space to prevent per-view collapse, encouraging a more dispersed representation distribution (green) compared with training without SACReg (orange). Filled circles denote image centers; open circles denote individual views. }
    \label{fig:Schematic_SACReg}
\end{figure*}

\paragraph{Invariance objective.}
For image $i$ with $V$ augmented views, we use the mean squared distance between all pairs of projected representations,
\[
\mathcal L_{\mathrm{SSL}}
=
\frac{1}{B}\sum_{i=1}^{B}
\frac{2}{V(V-1)}
\sum_{j<k}
\frac{1}{d_z}
\left\|\mathbf{z}_i^{(j)}-\mathbf{z}_i^{(k)}\right\|_2^2 ,
\]
where $B$ is the number of images in the batch and $d_z$ is the projector dimension.
This objective contains neither negatives nor an anti-collapse term.
Using $\bar{\mathbf{z}}_i=\frac{1}{V}\sum_{j=1}^{V}\mathbf{z}_i^{(j)}$, it is equivalently
$\frac{2V}{V-1}$ times the average squared distance from each view to its image center.
All runs use global views of a single size: $V=4$ on ImageNet-100 and $V=6$ on ImageNet-1k and video.
The invariance and regularization weights are set jointly as described in Appendix~\ref{app:exp-details}.

\paragraph{Slicing.}
For $B$ view centers $\mathbf{R}=\{\mathbf{r}_i\}_{i=1}^{B}\subset\mathbb R^d$,
the centered empirical covariance has rank at most $B-1$.
Thus, when $d\geq B$, the full log-determinant is degenerate.
At each step, we draw a random orthonormal projection
$\mathbf{U}\in\mathbb R^{d\times d'}$ using a thin QR factorization of a Gaussian matrix and evaluate SACReg on the projected centers
$\{\mathbf{U}^\top \mathbf{r}_i\}_{i=1}^{B}$.
The same projection is shared across GPUs and resampled every step. Within the slice, we compute the projected mean and covariance and use $\varepsilon=10^{-4}$.
We divide the regularizer by $d'$, making its scale approximately independent of the slice width.

We use $d'=128$ after the projector ($d_z=256$).
At the backbone, we use one third of the feature width:
$d'=128$ for ViT-S ($d=384$) and $d'=256$ for ViT-B ($d=768$).
The backbone and projector regularizers use independent random slices.

\subsection{Relation to the full-covariance regularizer}
\label{app:slicing-theory}

For clarity, consider the idealized case $\varepsilon=0$.
Let $\boldsymbol{\mu}\in\mathbb R^d$ and $\boldsymbol{\Sigma}\succ0$ denote the mean and covariance
of the view centers. The full regularizer per dimension is
\[
\mathcal R_{\mathrm{full}}(\boldsymbol{\mu},\boldsymbol{\Sigma})
=
\frac{1}{2d}
\left(
\|\boldsymbol{\mu}\|_2^2+\operatorname{tr}\boldsymbol{\Sigma}-d-\log\det\boldsymbol{\Sigma}
\right).
\]

For an orthonormal frame $\mathbf{U}\in\mathbb R^{d\times d'}$, the projected
centers have mean $\mathbf{U}^\top\boldsymbol{\mu}$ and covariance $\mathbf{U}^\top\boldsymbol{\Sigma}\mathbf{U}$, giving
the sliced regularizer
\[
\mathcal R_{\mathbf{U}}(\boldsymbol{\mu},\boldsymbol{\Sigma})
=
\frac{1}{2d'}
\left(
\|\mathbf{U}^\top\boldsymbol{\mu}\|_2^2
+\operatorname{tr}(\mathbf{U}^\top\boldsymbol{\Sigma}\mathbf{U})
-d'
-\log\det(\mathbf{U}^\top\boldsymbol{\Sigma}\mathbf{U})
\right).
\]
Since a fresh Haar-random frame is sampled at every step, the
corresponding expected slice objective is
\[
\bar{\mathcal R}_{d'}(\boldsymbol{\mu},\boldsymbol{\Sigma})
=
\mathbb E_{\mathbf{U}}[\mathcal R_{\mathbf{U}}(\boldsymbol{\mu},\boldsymbol{\Sigma})].
\]

\begin{proposition}[Random slicing]
For every $1\le d'\le d$:

\begin{enumerate}[leftmargin=*]
\item[(i)] $\bar{\mathcal R}_{d'}\ge0$, with equality if and only if
$\boldsymbol{\mu}=\mathbf{0}$ and $\boldsymbol{\Sigma}=\mathbf{I}_d$.

\item[(ii)] The normalized mean and total variance are preserved in
expectation:
\[
\mathbb E_{\mathbf{U}}\frac{\|\mathbf{U}^\top\boldsymbol{\mu}\|_2^2}{d'}
=
\frac{\|\boldsymbol{\mu}\|_2^2}{d},
\qquad
\mathbb E_{\mathbf{U}}\frac{\operatorname{tr}(\mathbf{U}^\top\boldsymbol{\Sigma}\mathbf{U})}{d'}
=
\frac{\operatorname{tr}\boldsymbol{\Sigma}}{d}.
\]

\item[(iii)] The expected sliced objective approaches the full one
monotonically:
\[
\bar{\mathcal R}_{d'}
\le
\bar{\mathcal R}_{d'+1}
\le
\mathcal R_{\mathrm{full}},
\qquad
\bar{\mathcal R}_{d}=\mathcal R_{\mathrm{full}}.
\]
\end{enumerate}
\end{proposition}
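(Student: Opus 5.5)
We treat the three parts separately. Parts (ii) and (i) rest on the Haar invariance of $\mathbf{U}$. Part (iii) reduces, after using (ii), to a determinant inequality between a positive definite matrix and its principal minors.

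\emph{Part (ii).} By invariance of the Haar measure under $\mathbf{U}\mapsto \mathbf{O}\mathbf{U}$ with $\mathbf{O}$ orthogonal, $\mathbb E[\mathbf{U}\mathbf{U}^\top]$ commutes with every orthogonal matrix, so it is a multiple of $\mathbf{I}_d$. Its trace equals $\operatorname{tr}(\mathbf{U}^\top\mathbf{U})=d'$, hence $\mathbb E[\mathbf{U}\mathbf{U}^\top]=(d'/d)\mathbf{I}_d$. Then $\mathbb E\|\mathbf{U}^\top\boldsymbol{\mu}\|^2=\boldsymbol{\mu}^\top\mathbb E[\mathbf{U}\mathbf{U}^\top]\boldsymbol{\mu}$ and $\mathbb E\operatorname{tr}(\mathbf{U}^\top\boldsymbol{\Sigma}\mathbf{U})=\operatorname{tr}(\boldsymbol{\Sigma}\,\mathbb E[\mathbf{U}\mathbf{U}^\top])$ give both identities after dividing by $d'$.

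\emph{Part (i).} Write $\mathcal R_{\mathbf{U}}$ in the spectral form of Proposition~\ref{prop:nonlinear-covariance-spectral-form}: it equals $\frac{1}{2d'}\big(\|\mathbf{U}^\top\boldsymbol{\mu}\|^2+\sum_j f(s_j)\big)$ with $f(s)=s-1-\log s\ge 0$. Here the $s_j$ are the eigenvalues of $\mathbf{U}^\top\boldsymbol{\Sigma}\mathbf{U}\succ0$, and $f(s)=0$ iff $s=1$. So $\mathcal R_{\mathbf{U}}\ge 0$ for every $\mathbf{U}$, and hence $\bar{\mathcal R}_{d'}\ge0$. Integrability is not an issue: by Cauchy interlacing the $s_j$ lie in $[\lambda_{\min}(\boldsymbol{\Sigma}),\lambda_{\max}(\boldsymbol{\Sigma})]$, so $\mathcal R_{\mathbf{U}}$ is bounded.

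For equality, $\mathcal R_{\mathbf{U}}$ is continuous in $\mathbf{U}$ and the Haar measure has full support on the Stiefel manifold. Hence $\bar{\mathcal R}_{d'}=0$ forces $\mathbf{U}^\top\boldsymbol{\mu}=\mathbf{0}$ and $\mathbf{U}^\top\boldsymbol{\Sigma}\mathbf{U}=\mathbf{I}_{d'}$ for every frame. Applying this to the first column, which ranges over all unit vectors $\mathbf{v}$, gives $\mathbf{v}^\top\boldsymbol{\mu}=0$ and $\mathbf{v}^\top\boldsymbol{\Sigma}\mathbf{v}=1$ for all $\mathbf{v}$. Therefore $\boldsymbol{\mu}=\mathbf{0}$ and $\boldsymbol{\Sigma}=\mathbf{I}_d$. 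The converse is immediate.

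\emph{Part (iii).} By (ii), the normalized mean term, trace term and the constant $-d'/d'=-1$ have the same expectation for every $d'$, and these agree with the corresponding terms of $\mathcal R_{\mathrm{full}}$. It therefore suffices to show that
\[
g(k)=\tfrac1k\,\mathbb E\log\det(\mathbf{U}_k^\top\boldsymbol{\Sigma}\mathbf{U}_k)
\]
is nonincreasing in $k$, with $g(d)=\frac1d\log\det\boldsymbol{\Sigma}$. The endpoint holds because $\mathbf{U}_d$ is orthogonal.

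For monotonicity, we couple the two sizes. Draw a Haar frame $\mathbf{U}_{k+1}$ and delete its $i$-th column; each resulting frame is a Haar $k$-frame. Let $\mathbf{A}=\mathbf{U}_{k+1}^\top\boldsymbol{\Sigma}\mathbf{U}_{k+1}$, a $(k+1)\times(k+1)$ positive definite matrix, and let $\mathbf{A}_{(i)}$ be its principal submatrix with row and column $i$ removed. Then
\[
g(k)=\mathbb E\Big[\tfrac{1}{k(k+1)}\textstyle\sum_{i}\log\det\mathbf{A}_{(i)}\Big],
\]
so it suffices to prove the pointwise inequality
\[
\textstyle\sum_{i=1}^{k+1}\log\det\mathbf{A}_{(i)}\ \ge\ k\log\det\mathbf{A}.
\]
This is Szász's inequality in the case of adjacent orders. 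We can also prove it directly. Cramer's rule gives $\det\mathbf{A}_{(i)}=\det\mathbf{A}\cdot(\mathbf{A}^{-1})_{ii}$, so
\[
\textstyle\sum_i\log\det\mathbf{A}_{(i)}=(k+1)\log\det\mathbf{A}+\sum_i\log(\mathbf{A}^{-1})_{ii}.
\]
Hadamard's inequality applied to $\mathbf{A}^{-1}\succ0$ gives $\sum_i\log(\mathbf{A}^{-1})_{ii}\ge\log\det\mathbf{A}^{-1}=-\log\det\mathbf{A}$, which yields the claim. Taking expectations gives $g(k+1)\le g(k)$, and since $\log\det$ enters $\bar{\mathcal R}$ with a minus sign, this is exactly $\bar{\mathcal R}_{d'}\le\bar{\mathcal R}_{d'+1}\le\mathcal R_{\mathrm{full}}$.

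The main obstacle is this determinant inequality together with the coupling that makes each deleted-column frame Haar-distributed. The coupling follows from invariance of the Haar measure under right multiplication by permutation matrices and under restriction to sub-frames.
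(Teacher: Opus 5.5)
Your proof is correct. Parts (i) and (ii) match the paper: Haar invariance gives $\mathbb E[\mathbf{U}\mathbf{U}^\top]=(d'/d)\mathbf{I}_d$, and each $\mathcal R_{\mathbf{U}}$ is nonnegative. Your continuity-plus-full-support argument also supplies a step the paper skips, namely the passage from zero expectation to $\mathbf{U}^\top\boldsymbol{\mu}=\mathbf{0}$ and $\mathbf{U}^\top\boldsymbol{\Sigma}\mathbf{U}=\mathbf{I}_{d'}$ for \emph{every} frame.

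Part (iii) is where you take a genuinely different route. The paper compares a slice with the full matrix in one step, citing only concavity of the logarithm for $\frac{1}{d'}\mathbb E\log\det(\mathbf{U}^\top\boldsymbol{\Sigma}\mathbf{U})\ge\frac1d\log\det\boldsymbol{\Sigma}$. Filled in, this is the pointwise Jensen bound $\log\det(\mathbf{U}^\top\boldsymbol{\Sigma}\mathbf{U})\ge\operatorname{tr}(\mathbf{U}^\top\log(\boldsymbol{\Sigma})\mathbf{U})$, which holds because each eigenvalue of the compression is a convex combination of eigenvalues of $\boldsymbol{\Sigma}$. Its expectation is $\frac{d'}{d}\log\det\boldsymbol{\Sigma}$ by (ii). The paper then gets monotonicity by reapplying the same bound to a random $d'$-frame inside a random $(d'+1)$-frame.

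You instead prove only the adjacent-order step, pointwise for a fixed $(k+1)$-frame. You use the column-deletion coupling and the Sz\'asz-type inequality $\sum_i\log\det\mathbf{A}_{(i)}\ge k\log\det\mathbf{A}$, derived from Cramer's rule and Hadamard's inequality for $\mathbf{A}^{-1}$. You then recover $\bar{\mathcal R}_{d'}\le\mathcal R_{\mathrm{full}}$ by chaining up to $d'=d$.

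Each approach buys something different. The paper's route compares with the full objective directly for every $d'$ and uses a single inequality for both claims. However, its nesting step implicitly needs the composition fact that a Haar frame taken inside an independent Haar frame is again Haar, and both of its steps are left as sketches. Your route needs only that sub-frames of a Haar frame are Haar. Its key step is a finite-dimensional determinant inequality that you state and prove completely.
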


Thus, slicing does not change the preferred representation: averaging
over random subspaces still uniquely favors zero mean and isotropic
unit covariance. The mean and total variance are preserved exactly in
expectation, while the log-determinant observes only the spectrum inside
the sampled subspace. Wider slices therefore capture progressively more
of the full spectral penalty, recovering the original regularizer when
$d'=d$. In practice, we choose $d'<d$ so that the projected covariance
can be estimated from substantially more centers than dimensions.

\paragraph{Proof.}
Each $\mathcal R_{\mathbf{U}}$ is nonnegative and is minimized when
$\mathbf{U}^\top\boldsymbol{\mu}=\mathbf{0}$ and $\mathbf{U}^\top\boldsymbol{\Sigma}\mathbf{U}=\mathbf{I}_{d'}$. If this holds for all random
subspaces, then every one-dimensional direction has zero projected mean
and unit variance, implying $\boldsymbol{\mu}=\mathbf{0}$ and $\boldsymbol{\Sigma}=\mathbf{I}_d$.

Haar invariance gives
\[
\mathbb E_{\mathbf{U}}[\mathbf{U}\mathbf{U}^\top]=\frac{d'}{d}\mathbf{I}_d,
\]
which directly yields (ii). Finally, concavity of the logarithm gives
\[
\frac{1}{d'}\mathbb E_{\mathbf{U}}
\log\det(\mathbf{U}^\top\boldsymbol{\Sigma}\mathbf{U})
\ge
\frac{1}{d}\log\det\boldsymbol{\Sigma}.
\]
Together with (ii), this gives
$\bar{\mathcal R}_{d'}\le\mathcal R_{\mathrm{full}}$.
Applying the same argument to a random $d'$-dimensional subspace inside
a random $(d'+1)$-dimensional subspace gives
$\bar{\mathcal R}_{d'}\le\bar{\mathcal R}_{d'+1}$.
For $d'=d$, $\mathbf{U}$ is orthogonal and the two objectives are identical.

\paragraph{Ring buffer.}
Reliable covariance estimation requires more centers than the slice dimension.
For each regularized space, we therefore maintain a FIFO buffer containing detached view centers from the previous $q$ steps and concatenate them with the current batch before computing the regularizer.
This gives
\[
B_{\mathrm{eff}}=(q+1)B,
\]
where only the current $B$ centers carry gradients.
Under data parallelism, the buffer contains centers from the full global batch.

We keep $B_{\mathrm{eff}}/d'=4$ throughout image training.
Thus, $q=3$ after the projector and at the ViT-S backbone
($B_{\mathrm{eff}}=512$, $d'=128$), while $q=7$ at the ViT-B backbone
($B_{\mathrm{eff}}=1024$, $d'=256$).
On video, the larger batch sizes reduce the need for buffering:
no buffer is used for ViT-S, while ViT-B uses $q=1$ at the backbone.
We observed no measurable effect from the short parameter lag introduced by the buffered centers.

\paragraph{Averaged invariance target.}
For ImageNet-1k and video, we follow LeJEPA and compute the invariance target using an exponential moving average of the backbone and projector. The target $\tilde{\mathbf{z}}_i$ is the mean projected representation of the $V$ views under the averaged network, and we use
\[
\mathcal L_{\mathrm{SSL}}
=
\frac{2V}{V-1}\,
\frac{1}{B V d_z}
\sum_{i=1}^{B}\sum_{j=1}^{V}
\left\|\mathbf{z}_i^{(j)}-\tilde{\mathbf{z}}_i\right\|_2^2 .
\]

ImageNet-100 uses the pairwise objective directly. In all cases, SACReg is computed from view centers of the current network.

\section{Experimental Details}
\label{app:exp-details}

\subsection{Datasets}
ImageNet-1k~\citep{imagenetrussakovsky2015} is the standard split with 1.28M training and 50k validation images. ImageNet-100 is the 100-class subset introduced by CMC~\citep{inet100tian2020contrastive}, with 126{,}689 training and 5{,}000 validation images. For video we follow LeVJEPA~\citep{levjepakuhn2026}: we take the union of the Kinetics-400, -600 and -700~\citep{kinetics710li2023uniformerv2} training sets, remove duplicates and every clip that appears in a validation set, and keep 20\% of the clips of each class (120{,}100 clips after decoding). LeVJEPA does not release its subset, so ours is a different draw of the same size. Transfer~\citep{wellericsson2021} uses DTD~\citep{dtdcimpoi2014describing}, Aircraft~\citep{aircraftmaji2013fine}, Cars~\citep{carskrause20133d}, CIFAR10, CIFAR100~\citep{cifarkrizhevsky2009learning}, Flowers~\citep{flowersnilsback2008automated}, Food~\citep{foodbossard2014} and Pets~\citep{petsparkhi2012cats} with the splits of the VISReg protocol~\citep{visreg2026}; video encoders are evaluated on ImageNet-1k, Something-Something-v2~\citep{ssv2goyal2017something} and Kinetics-400~\citep{kinetics400kay2017}.

\subsection{Models and training}
\paragraph{Large-scale image runs.}
We train ViT-S/16 and ViT-B/16 at $224^2$ with a two-layer projector (hidden width 2048, output 256, BatchNorm), batch size 128 split over two GPUs, for 100 and 400 epochs. Each image gives $V=6$ global views from the LeJEPA augmentation family: random resized crops covering 30--100\% of the image, horizontal flips, color jitter, random grayscale, Gaussian blur, and solarization on one of the views. We do not use local crops. The slices and buffers are as described in Appendix~\ref{app:slicing}. Optimization uses AdamW with learning rate $10^{-3}$, weight decay $0.05$, 10 warm-up epochs followed by a cosine schedule, gradient clipping at 1.0, and bf16 precision. The loss weights are $(31.07,\,225.50,\,1.671)$ for ViT-S and $(22.81,\,222.26,\,4.054)$ for ViT-B, for the invariance term, $\beta_z$ and $\beta_h$ respectively; they are set as described below and shared by the 100- and 400-epoch runs.

\paragraph{Controlled experiments on ImageNet-100.}
All controlled experiments use ViT-S/16 at $224^2$, are trained for 100 epochs with batch size 128, and are repeated over three independent random seeds. We re-implement LeJEPA, VICReg, SimCLR, DINO, BYOL, and VISReg within a common training codebase while preserving each method's released loss, projector, and augmentation settings. For each method, we train a baseline and a matched variant with $\beta_h\operatorname{SACReg}(\bar{\mathbf{H}})$ added; all other settings are kept fixed, so the two variants differ only by the backbone regularization term. We report the mean across the three seeds. The regularizer is applied to the representation provided as input to each method's projector: the CLS token for SimCLR, BYOL, VICReg, and DINO, and the 512-dimensional embedding immediately preceding the projector for LeJEPA and VISReg. Our own objective uses $V=4$ views and the pairwise invariance term defined in Appendix~\ref{app:slicing}, with weights $32.8$ for invariance, $157.8$ for $\beta_z$, and $1.894$ for $\beta_h$. The corresponding baseline sets $\beta_h=0$, with all other settings unchanged.

\paragraph{Video.}
We use LeVJEPA's training pipeline and released configuration unchanged except for the loss: their video ViT-S/16 and ViT-B/16 (16 frames at stride 2, tubelet size 1, 95\% token drop), their projector, and their optimizer (AdamW, learning rate $4\times10^{-4}$, weight decay $0.04$, warm-up over 12\% of training followed by a constant learning rate, bf16). Our loss takes $V=6$ global $224^2$ views of the same clip. The batch size is 768 for ViT-S and 512 for ViT-B, the loss weights are those of the image run of the same model size, and we evaluate the exponential-moving-average encoder, as LeVJEPA does.

\subsection{Setting the loss weights}
\label{app:loss-weights}
The loss terms have different forms and scales, so we do not balance them by their values but by their realized pull on the backbone. For a term with weight $w$, let $g$ be the norm of the gradient of the unweighted term with respect to the backbone parameters, measured on a fixed batch at a given checkpoint. The product $w\,g$ is the pull of the term, and its share of the total pull is $w\,g/\sum_k w_k g_k$.

\paragraph{Adding the backbone term to an existing method.}
When adding $\beta_h\operatorname{SACReg}(\bar{\mathbf{H}})$ to an existing SSL method, we choose $\beta_h$ so that the backbone regularizer contributes a small fraction $T=0.06$ of the sum of weighted backbone-gradient norms. Let
\[
G_{\mathrm{SSL}}=\sum_k w_k g_k
\]
denote the contribution of the method's original loss terms, measured at its epoch-25 checkpoint. We set
\[
\beta_h=\frac{T}{1-T}\frac{G_{\mathrm{SSL}}}{g_R},
\]
using a fixed reference $g_R=0.30$ for the gradient norm of $\operatorname{SACReg}$. We use a fixed reference because this gradient can be unusually large at a collapsed backbone, which would otherwise yield an artificially small weight. The target share $T=0.06$ was chosen from an earlier dose study on LeJEPA. Thus, $T$ and $g_R$ are fixed across methods, and $\beta_h$ is determined from the scale of each method's existing objective rather than tuned separately.

This gives $\beta_h=0.098$ (SimCLR), $0.0205$ (BYOL), $1.548$ (VICReg), $0.258$ (DINO), $0.0146$ (LeJEPA), and $0.0123$ (VISReg; measured from a one-epoch pilot). For DINO, this weight reduced both linear and kNN accuracy, so we tested $\{0.115,0.02,0.01\}$ and use $\beta_h=0.01$, selected by the online probe.

\paragraph{Our standalone objective.}
For our standalone objective, we set the three loss weights jointly. After a one-epoch pilot, we measure the backbone gradient norm $g_k$ of each term and choose
\[
w_k = 57\,\frac{s_k}{g_k},
\]
with target shares $(s_{\mathrm{inv}},s_{\mathrm{proj}},s_{\mathrm{back}})=(0.44,0.54,0.02)$. These shares and the total magnitude $57$ come from an earlier ViT-S run, and we use the same calibration rule for ViT-S and ViT-B without further tuning.

\subsection{Evaluation}
\paragraph{ImageNet-1k linear probe and kNN.}
We follow the linear-evaluation recipe of the Lightly benchmark (the MAE linear-probe recipe): a BatchNorm layer without affine parameters followed by a linear classifier on the frozen CLS token, trained for 90 epochs with random resized crops and flips, SGD with momentum 0.9 (equivalent to LARS at zero weight decay), learning rate $0.1\times\mathrm{batch}/256$ with a cosine schedule and 10 warm-up epochs; we report the best validation top-1 over the 90 epochs. The kNN classifier uses $k=200$ neighbors with cosine similarity weighted at temperature $0.07$, on the CLS features of the training set.

\paragraph{Transfer.}
We use the VISReg transfer protocol: the CLS tokens of the last four blocks are concatenated, a linear classifier is trained for 10 epochs over a grid of 13 learning rates, and the best test accuracy is reported for each dataset; the table reports the average over the eight datasets. Our implementation reproduces the published DINO ViT-B/16 row within 0.4 points on every dataset.

\paragraph{ImageNet-100.}
The linear probe is a linear classifier on the raw CLS features, trained with AdamW (learning rate $10^{-3}$, weight decay $10^{-7}$) until validation accuracy stops improving; the kNN classifier uses $k=200$ and temperature $0.1$. Both are fit on a fixed subset of 500 training images per class and evaluated on the 5{,}000 validation images.

\paragraph{Video.}
We follow the frozen-encoder protocols of V-JEPA as used by LeVJEPA. ImageNet-1k and Something-Something-v2 use an attentive probe: one cross-attention block with a learnable query over all output tokens followed by a linear classifier, trained for 20 epochs with AdamW (learning rate $10^{-3}$, cosine schedule). For ImageNet-1k each image is repeated over the 16 input frames; for Something-Something-v2 a video is sampled as 2 clips of 16 frames with 3 spatial crops. Kinetics-400 uses a linear classifier. LeVJEPA does not release the settings of this linear classifier. So we report the results with L-BFGS (multinomial logistic regression on standardized features, the best of three ridge strengths $\{10^{-5},10^{-4},10^{-3}\}$) using one center crop per clip and the full training and validation sets.

\paragraph{Representation statistics.}
All statistics are computed on frozen features. RankMe~\citep{rankme2023} is the exponential of the entropy of the normalized singular values reported as a fraction of the feature dimension. Positive and negative-pair cosines are the mean cosine similarity between two views of the same image and between views of two different images.

\paragraph{Augmentation thickness.}
For every model we store $V=8$ augmented views of 10,000 training images
(100 per class), drawn from the model's own training augmentations. For the
per-class analysis of Fig.~\ref{fig:organization-sensitivity}, we split the
10,000 images once at random into two equal halves. On the fitting half, we
estimate $\mathbf{A}_h$ as the average within-image covariance of the views and $\mathbf{B}_h$
as the covariance of the view means minus $\mathbf{A}_h/V$, which corrects for
estimating each image center from finitely many views. We use $\mathbf{B}_h$ to define
the whitening map and compute
$\boldsymbol{\Theta}_h = \mathbf{B}_h^{\dagger/2} \mathbf{A}_h \mathbf{B}_h^{\dagger/2}$.

For each class $c$, we define the target as the centered indicator scaled to
unit variance,
\[
y_c
=
\frac{\mathbf{1}[y=c]-p_c}{\sqrt{p_c(1-p_c)}},
\]
where $p_c$ is the class frequency in the fitting split. Using the fitting
half, we fit the least-squares coefficient $\mathbf{a}_c$ for predicting $y_c$ from
the whitened image centers. We then define $d_h(c)^2$ as the mean squared
residual on the held-out half. Since this is a held-out error, it can exceed
$1$, so the class-center separability $1-d_h(c)^2$ can be negative for some
classes. The view-sensitivity term
$\mathbf{a}_c^\top \boldsymbol{\Theta}_h
(\mathbf{I}_{r_h}+\boldsymbol{\Theta}_h)^{-1}\mathbf{a}_c$
uses the same $\mathbf{a}_c$ and the $\boldsymbol{\Theta}_h$ estimated on the fitting half.
% \paragraph{Augmentation thickness.}
% For every model we store $V=8$ augmented views of 10,000 training images
% (100 per class), drawn from the model's own training augmentations. For the
% per-class analysis of Fig.~\ref{fig:organization-sensitivity}, we split the
% 10,000 images once at random into two equal halves. On the fitting half, we
% estimate $A_h$ as the average within-image covariance of the views and $B_h$
% as the covariance of the view means minus $A_h/V$, which corrects for
% estimating each image center from finitely many views. We use $B_h$ to define
% the whitening map and compute $\Theta_h = B_h^{\dagger/2} A_h B_h^{\dagger/2}$.

% For each class $c$, we define the target as the centered indicator scaled to
% unit variance,
% \[
% y_c
% =
% \frac{\mathbf{1}[y=c]-p_c}{\sqrt{p_c(1-p_c)}},
% \]
% where $p_c$ is the class frequency in the fitting split. Using the fitting
% half, we fit the least-squares coefficient $a_c$ for predicting $y_c$ from
% the whitened image centers. We then define $d_h(c)^2$ as the mean squared
% residual on the held-out half. Since this is a held-out error, it can exceed
% $1$, so the class-center separability $1-d_h(c)^2$ can be negative for some
% classes. The view-sensitivity term $a_c^\top \Theta_h (I_{r_h}+\Theta_h)^{-1}a_c$ uses the same $a_c$ and the $\Theta_h$ estimated on the fitting half.

\subsection{Baselines}
OK-AI releases LeJEPA, DINO and iBOT trained under one modernized code base at ViT-S and ViT-B for 100 and 300 epochs; we evaluate these checkpoints under the protocols above. Their model card lists 1.45M training images, which is more than the standard ImageNet-1k training split; we did not correct for this. We also trained VISReg ViT-B/16 for 100 epochs with the authors' code and their released view configuration (four global and six local crops), and evaluated the public MoCo~v3, DINO, iBOT and VISReg checkpoints (300--400 epochs) the same way. Transfer numbers marked $^\ddag$ are taken from the VISReg paper. Video baselines are taken from the LeVJEPA paper: the 240-epoch numbers from its Fig.~2 and the remaining rows from its equal-compute table.

\subsection{Detailed results}
\label{app:detailed-results}
\begin{table}[!h]
\centering
\caption{Per-dataset linear-probe transfer accuracy (\%) under the VISReg protocol; the last column is the average reported in Table~\ref{tab:main-results} and~\ref{tab:long-training-results}. $^\ddag$Numbers reported by the VISReg paper; all other rows are evaluated by us under the matched protocol.}
\label{tab:transfer-full}
\resizebox{\textwidth}{!}{%
\begin{tabular}{llrrrrrrrrrr}
\toprule
Method & Backbone & Ep. & DTD & Aircraft & Cars & CIFAR10 & CIFAR100 & Flowers & Food & Pets & Avg. \\
\midrule
LeJEPA & ViT-S/16 & 100 & 69.4 & 43.8 & 43.4 & 89.3 & 71.1 & 84.0 & 73.6 & 75.3 & 68.7 \\
DINO & ViT-S/16 & 100 & 69.3 & 55.2 & 57.0 & 93.8 & 78.6 & 89.3 & 76.7 & 85.9 & 75.7 \\
iBOT & ViT-S/16 & 100 & 69.3 & 55.3 & 56.1 & 93.3 & 77.4 & 90.3 & 77.9 & 87.3 & 75.9 \\
\rowcolor{oursblue}
$\lambda$-JEPA & ViT-S/16 & 100 & 71.1 & 55.5 & 62.5 & 95.3 & 81.6 & 90.4 & 75.3 & 89.1 & 77.6 \\
\midrule
LeJEPA & ViT-S/16 & 300 & 70.2 & 47.6 & 50.3 & 92.1 & 74.4 & 85.8 & 75.7 & 79.1 & 71.9 \\
DINO & ViT-S/16 & 300 & 71.5 & 59.8 & 66.2 & 95.0 & 81.1 & 92.0 & 79.6 & 89.6 & 79.4 \\
iBOT & ViT-S/16 & 300 & 71.6 & 59.7 & 63.3 & 96.0 & 82.4 & 91.2 & 80.9 & 90.6 & 79.5 \\
\rowcolor{oursblue}
$\lambda$-JEPA & ViT-S/16 & 400 & 71.8 & 57.9 & 67.9 & 95.5 & 81.8 & 90.3 & 78.1 & 90.7 & 79.3 \\
\midrule
LeJEPA & ViT-B/16 & 100 & 72.3 & 51.5 & 54.4 & 93.0 & 76.4 & 86.9 & 78.9 & 80.1 & 74.2 \\
DINO & ViT-B/16 & 100 & 70.0 & 58.1 & 61.9 & 95.8 & 82.0 & 89.9 & 79.7 & 88.7 & 78.3 \\
iBOT & ViT-B/16 & 100 & 71.9 & 61.1 & 66.8 & 97.0 & 84.2 & 91.5 & 82.8 & 90.1 & 80.7 \\
VISReg & ViT-B/16 & 100 & 73.3 & 53.1 & 58.0 & 94.3 & 79.6 & 87.9 & 79.1 & 85.5 & 76.4 \\
\rowcolor{oursblue}
$\lambda$-JEPA & ViT-B/16 & 100 & 71.5 & 59.8 & 71.1 & 96.7 & 84.9 & 92.2 & 79.4 & 91.7 & 80.9 \\
\midrule
LeJEPA & ViT-B/16 & 300 & 73.7 & 52.0 & 55.0 & 93.5 & 77.6 & 87.7 & 80.5 & 81.8 & 75.2 \\
DINO & ViT-B/16 & 300 & 71.8 & 59.1 & 63.8 & 96.0 & 83.4 & 90.0 & 80.5 & 89.9 & 79.3 \\
iBOT & ViT-B/16 & 300 & 74.5 & 62.0 & 68.8 & 97.4 & 86.2 & 93.0 & 84.2 & 92.2 & 82.3 \\
MoCo v3$^\ddag$ & ViT-B/16 & 300 & 73.7 & 57.9 & 67.5 & 96.9 & 85.2 & 91.5 & 81.8 & 89.8 & 80.5 \\
DINO$^\ddag$ & ViT-B/16 & 400 & 74.3 & 63.6 & 73.9 & 96.5 & 85.0 & 94.6 & 83.1 & 93.6 & 83.1 \\
iBOT$^\ddag$ & ViT-B/16 & 400 & 74.1 & 63.5 & 73.8 & 97.1 & 85.9 & 93.7 & 84.2 & 93.6 & 83.2 \\
VISReg$^\ddag$ & ViT-B/16 & 400 & 75.7 & 57.1 & 64.8 & 94.6 & 78.8 & 90.4 & 82.9 & 88.3 & 79.1 \\
\rowcolor{oursblue}
$\lambda$-JEPA & ViT-B/16 & 400 & 73.2 & 61.1 & 74.0 & 96.9 & 85.4 & 92.0 & 81.0 & 92.2 & 82.0 \\
\bottomrule
\end{tabular}%
}
\end{table}

\begin{figure*}[!h]
    \centering
    \includegraphics[width=0.8\textwidth]{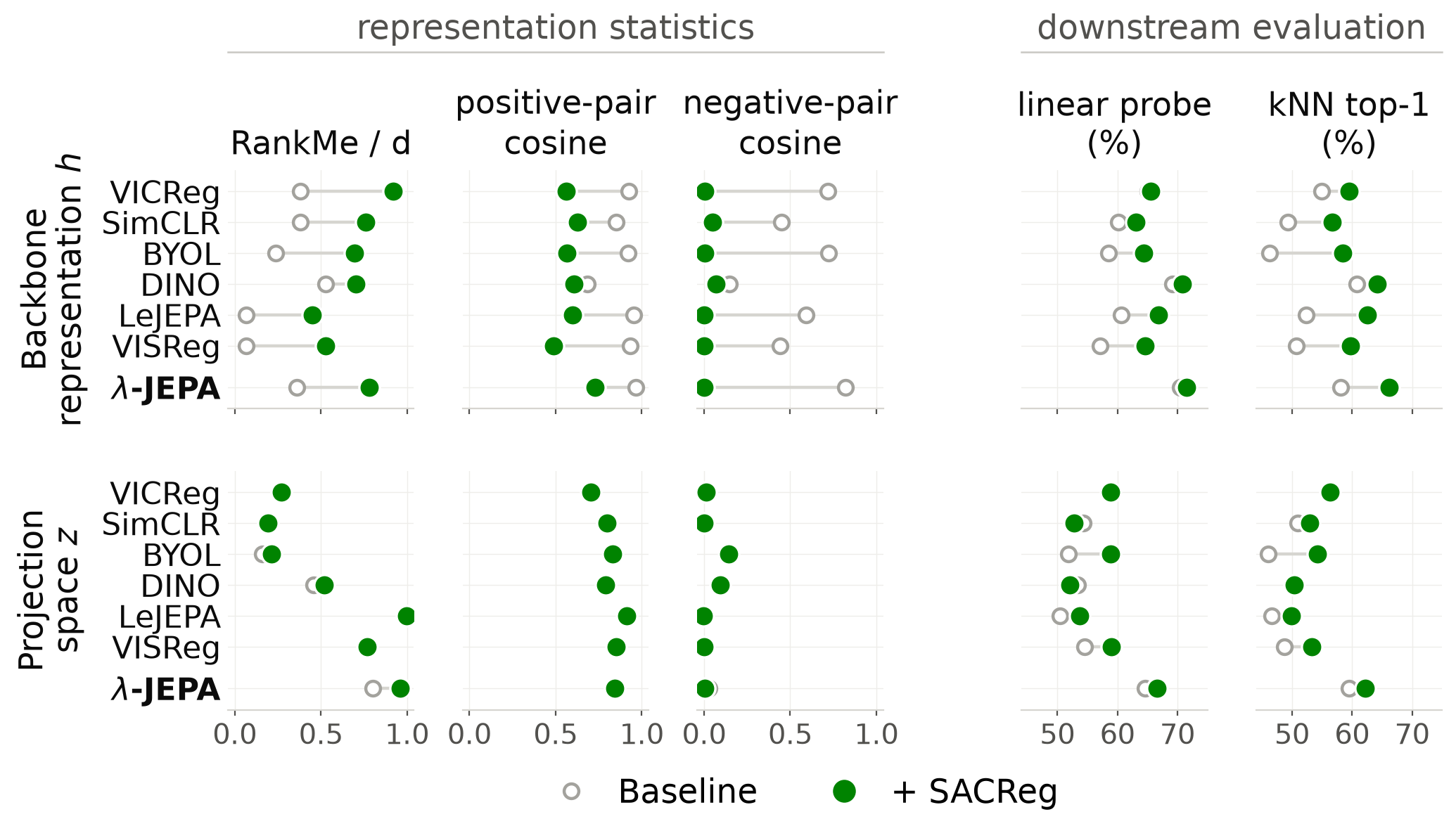}
    \caption{
    Effect of applying SACReg to the backbones across SSL objectives on ImageNet-100. We report representation statistics at the backbone $\mathbf{h}$ and projection space $\mathbf{z}$, together with downstream linear probe and kNN accuracy. Open circles denote the baseline method and green circles the same method with SACReg added at the backbone. Each point is the mean over three seeds.
    }
    \label{fig:two-space-audit}
\end{figure*}

% =========================
% Organization vs sensitivity
% =========================
\begin{figure*}[!h]
    \centering
    \includegraphics[width=0.75\textwidth]{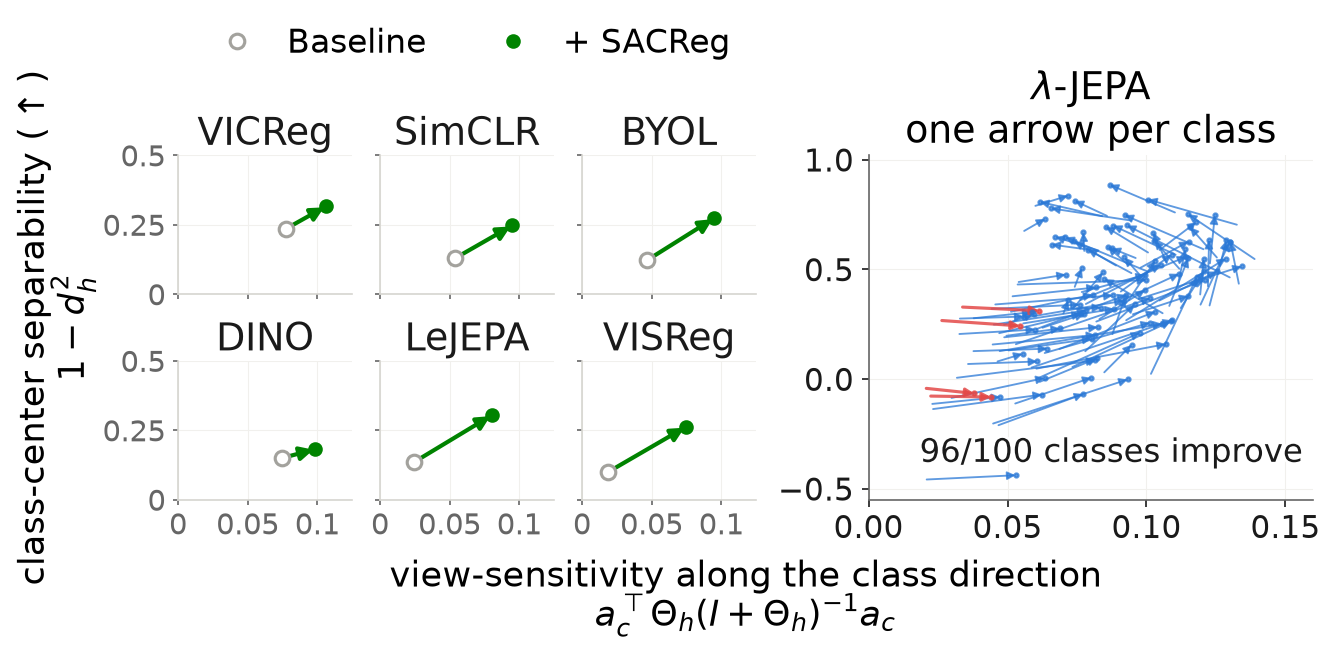}
    % \caption{
    % Image-center organization and view sensitivity along class-prediction directions. Left: average over classes for each method before and after backbone regularization. Right: class-wise changes for our model.
    % }
    \caption{
    Effect of SACReg on view sensitivity and class-center separability. Left: mean class-level changes for six SSL methods on ImageNet-100, with arrows from the baseline representation to the same method with SACReg added at the backbone. Right: class-wise changes for $\lambda$-JEPA; 96 of 100 classes improve in class-center separability.
    }
    \label{fig:organization-sensitivity}
\end{figure*}

\section{A closer look at LeJEPA}
\label{app:lejepa}

Our results suggest that anti-collapse should act on the representation retained for downstream tasks. A natural alternative explanation is that the particular regularizer is unimportant: perhaps simply applying an existing anti-collapse objective at the backbone is sufficient. We test this with LeJEPA, one of the explicitly regularized methods in our study.

LeJEPA encourages the projected representation to match an isotropic Gaussian using SIGReg, a sketched Gaussianity test over random one-dimensional projections. However, its retained backbone representation is far from this target (Fig.~\ref{fig:hz_geometry}). Starting from vanilla LeJEPA, with the original SIGReg and invariance losses after the projector unchanged, we therefore add a second SIGReg term directly to the 512-dimensional encoder output, using the weighting heuristic of Appendix~\ref{app:loss-weights}, and train under the ImageNet-100 setup of Section~\ref{sec:exp-in100}.

Table~\ref{tab:lejepa-sigreg} shows that the added SIGReg substantially improves several Gaussianity statistics at the backbone. Negative-pair cosine drops from $0.58$ to $0.01$, the diagonal Gaussian KL from $0.96$ to $0.07$, and the Epps--Pulley statistic also decreases. However, the covariance spectrum remains concentrated: RankMe$/d$ changes only from $0.07$ to $0.09$, while the full-covariance KL remains high ($3.89\to3.03$). At the same time, positive-pair cosine decreases and downstream performance worsens by $1.4$ linear-probe points and $4.3$ kNN points. Thus, the gains from SACReg are not explained simply by moving LeJEPA's existing anti-collapse constraint to the retained representation. In this comparison, directly controlling the covariance spectrum is important: SIGReg substantially improves its Gaussianity diagnostics but leaves the backbone spectrum concentrated, whereas $\mathcal{R}_{\mathrm{SAC}}$ acts explicitly on that spectrum through its log-determinant term.
\begin{table}[h]
\centering
\small
\begin{tabular}{lrr}
\toprule
Readout at encoder output ($\mathbf{h}$) & LeJEPA & + SIGReg at $\mathbf{h}$ \\
\midrule
Positive-pair cosine & 0.96 & 0.73 \\
Negative-pair cosine & 0.58 & 0.01 \\
KL to $\mathcal N(0,1)$ (diagonal) & 0.96 & 0.07 \\
KL to $\mathcal N(0,I)$ (full covariance) & 3.89 & 3.03 \\
Epps--Pulley & 652 & 504 \\
RankMe$/d$ & 0.07 & 0.09 \\
Linear probe (\%) & 60.4 & 58.9 \\
kNN, $k=200$ (\%) & 52.4 & 48.1 \\
\bottomrule
\end{tabular}
\caption{Adding SIGReg directly to the LeJEPA encoder output on ImageNet-100. The original projector losses are unchanged. All representation statistics are measured at the 512-dimensional encoder output.}
\label{tab:lejepa-sigreg}
\end{table}

\section{Augmentation thickness: definition and interpretation}
\label{app:thickness}

This section gives the formal motivation for the augmentation-thickness quantities used in Section~\ref{sec:exp-in100}. In particular, we show that augmentation thickness has two opposing effects: too little thickness can suppress meaningful augmentation-dependent variation,
whereas too much allows within-image variation to dominate the separation
between images.

\subsection{Definition}

Let $Q$ denote a source image and let $\mathbf{h}$ be its representation under a randomly sampled augmented view. Define the image center
\begin{equation}
    \mathbf{m}_h(Q) = \mathbb{E}[\mathbf{h}\mid Q]
\end{equation}
and the residual view-dependent component
\begin{equation}
    \boldsymbol{\xi}_{h} = \mathbf{h}-\mathbf{m}_h(Q).
\end{equation}
By the law of total covariance,
\begin{equation}
    \mathbf{C}_h = \operatorname{Cov}(\mathbf{h}) = \mathbf{B}_h + \mathbf{A}_h,
    \qquad
    \mathbf{B}_h = \operatorname{Cov}\!\left(\mathbf{m}_h(Q)\right),
    \qquad
    \mathbf{A}_h = \mathbb{E}\!\left[\operatorname{Cov}(\mathbf{h}\mid Q)\right].
    \label{eq:thickness-decomposition}
\end{equation}
Thus, $\mathbf{B}_h$ measures variation between image centers with the rank $r_h = \operatorname{rank}(\mathbf{B}_h)$, while $\mathbf{A}_h$ measures variation across augmented views of the same image.

The same amount of within-image variation can have very different consequences depending on the scale of the between-image variation. We therefore normalize $\mathbf{A}_h$ by $\mathbf{B}_h$ and define
\begin{equation}
    \boldsymbol{\Theta}_h
    =
    \mathbf{B}_h^{\dagger/2} \mathbf{A}_h \mathbf{B}_h^{\dagger/2}.
    \label{eq:thickness-app}
\end{equation}
We call $\boldsymbol{\Theta}_h$ the \emph{augmentation thickness} of the representation. Along any direction $\mathbf{a}$ with nonzero between-image variance,
\begin{equation}
    \theta_h(\mathbf{a})
    =
    \frac{\mathbf{a}^\top \mathbf{A}_h \mathbf{a}}{\mathbf{a}^\top \mathbf{B}_h \mathbf{a}}
\end{equation}
is the ratio between within-image and between-image variation in that direction.

We next formalize why augmentation thickness should not simply be minimized. Let $y=y(Q)$ be a centered image-level target with $\|y\|_{L_2}\leq 1$, and define
\begin{equation}
    E_{\mathcal A}(y)
    =
    \min_f
    \mathbb{E}\!\left[
        \left(y(Q)-f(\mathbf{v})\right)^2
    \right],
    \label{eq:augmentation-irreducible}
\end{equation}
where $\mathbf{v}$ is a single augmented view of $Q$. This is the irreducible error of predicting $y$ from one augmented view.

Let $\mathcal M_h$ denote the set of affine functions of the image centers and define
\begin{equation}
    d_h(y)
    =
    \min_{g\in\mathcal M_h}
    \|y-g\|_{L_2}.
    \label{eq:center-prediction-error}
\end{equation}
Thus, $d_h(y)^2$ is the error of the best affine prediction of $y$ from the image centers.

\begin{theorem}[Two sides of augmentation thickness]
\label{thm:thickness}
Restrict to the support of $\mathbf{B}_h$ and work in centered, whitened coordinates, so that
\begin{equation}
    \mathbb{E}[\mathbf{m}_h(Q)]=\mathbf{0},
    \qquad
    \operatorname{Cov}(\mathbf{m}_h(Q))=\mathbf{I}_{r_h},
    \qquad
    \operatorname{Cov}(\boldsymbol{\xi}_{h})=\boldsymbol{\Theta}_h.
\end{equation}
Then
\begin{equation}
    d_h(y)
    \geq
    \left(
        \sqrt{E_{\mathcal A}(y)}
        -
        \sqrt{\|\boldsymbol{\Theta}_h\|_{\mathrm{op}}}
    \right)_+ .
    \label{eq:too-thin}
\end{equation}

Moreover, let
\begin{equation}
    g(Q)=\mathbf{a}^\top \mathbf{m}_h(Q)
\end{equation}
be the best linear prediction of $y$ from the image centers. Then the best linear prediction error from a single-view representation $\mathbf{h}$ is
\begin{equation}
    \inf_{\mathbf{u}}
    \mathbb{E}\!\left[(y-\mathbf{u}^\top \mathbf{h})^2\right]
    =
    d_h(y)^2
    +
    \mathbf{a}^\top
    \boldsymbol{\Theta}_h (\mathbf{I}_{r_h}+\boldsymbol{\Theta}_h)^{-1} \mathbf{a}.
    \label{eq:too-thick}
\end{equation}
For fixed image-center organization and fixed task direction $\mathbf{a}$, the second term is monotone increasing in $\boldsymbol{\Theta}_h$ in the positive-semidefinite order.
\end{theorem}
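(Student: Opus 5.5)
The plan is to decompose the single-view representation as $\mathbf{h}=\mathbf{m}_h(Q)+\boldsymbol{\xi}_h$ in the centered, whitened coordinates of the statement. The one structural fact used throughout is $\mathbb{E}[\boldsymbol{\xi}_h\mid Q]=\mathbf{0}$. This gives $\operatorname{Cov}(\mathbf{m}_h,\boldsymbol{\xi}_h)=\mathbf{0}$, and hence $\operatorname{Cov}(\mathbf{h})=\mathbf{I}_{r_h}+\boldsymbol{\Theta}_h$ by \eqref{eq:thickness-decomposition}. It also gives $\mathbb{E}[y\,\boldsymbol{\xi}_h]=\mathbb{E}\bigl[y\,\mathbb{E}[\boldsymbol{\xi}_h\mid Q]\bigr]=\mathbf{0}$ for every image-level target $y=y(Q)$. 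Since $y$ and $\mathbf{m}_h$ are centered, the affine class $\mathcal M_h$ reduces to linear functions: the intercept is zero at the optimum. The best center predictor is then $g=\mathbf{a}^\top\mathbf{m}_h$ with $\mathbf{a}=\mathbb{E}[\mathbf{m}_h y]$, because the centers are whitened. In particular $d_h(y)^2=\|y\|_{L_2}^2-\|\mathbf{a}\|_2^2$ and $\|\mathbf{a}\|_2^2=\operatorname{Var}(g)\le\|y\|_{L_2}^2\le1$.

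For the lower bound \eqref{eq:too-thin}, I would take the specific view-level predictor $f(\mathbf{v})=\mathbf{a}^\top\mathbf{h}(\mathbf{v})$. This is admissible in \eqref{eq:augmentation-irreducible} because $\mathbf{h}$ is a function of the view. Hence
\[
\sqrt{E_{\mathcal A}(y)}\le\|y-\mathbf{a}^\top\mathbf{h}\|_{L_2}\le\|y-g\|_{L_2}+\|\mathbf{a}^\top\boldsymbol{\xi}_h\|_{L_2},
\]
by Minkowski's inequality. The first term is $d_h(y)$. The second equals $\sqrt{\mathbf{a}^\top\boldsymbol{\Theta}_h\mathbf{a}}\le\sqrt{\|\boldsymbol{\Theta}_h\|_{\mathrm{op}}}\,\|\mathbf{a}\|_2\le\sqrt{\|\boldsymbol{\Theta}_h\|_{\mathrm{op}}}$, using $\|\mathbf{a}\|_2\le1$ from above. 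Rearranging, and noting $d_h(y)\ge0$, gives the bound with the positive part.

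For the identity \eqref{eq:too-thick}, I would solve the single-view least-squares problem directly. We have $\operatorname{Cov}(\mathbf{h},y)=\mathbb{E}[\mathbf{m}_h y]+\mathbb{E}[\boldsymbol{\xi}_h y]=\mathbf{a}$. The matrix $\mathbf{I}_{r_h}+\boldsymbol{\Theta}_h\succ0$ is invertible since $\boldsymbol{\Theta}_h\succeq0$. So the optimum is $\mathbf{u}^\star=(\mathbf{I}_{r_h}+\boldsymbol{\Theta}_h)^{-1}\mathbf{a}$, and the error is $\|y\|_{L_2}^2-\mathbf{a}^\top(\mathbf{I}_{r_h}+\boldsymbol{\Theta}_h)^{-1}\mathbf{a}$. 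Subtracting $d_h(y)^2=\|y\|_{L_2}^2-\mathbf{a}^\top\mathbf{a}$ leaves
\[
\mathbf{a}^\top\bigl(\mathbf{I}_{r_h}-(\mathbf{I}_{r_h}+\boldsymbol{\Theta}_h)^{-1}\bigr)\mathbf{a}=\mathbf{a}^\top\boldsymbol{\Theta}_h(\mathbf{I}_{r_h}+\boldsymbol{\Theta}_h)^{-1}\mathbf{a},
\]
where the last equality holds because the two matrices commute. For monotonicity, I would use the same identity $\boldsymbol{\Theta}(\mathbf{I}+\boldsymbol{\Theta})^{-1}=\mathbf{I}-(\mathbf{I}+\boldsymbol{\Theta})^{-1}$ together with the standard fact that $\mathbf{X}\succeq\mathbf{Y}\succ0$ implies $\mathbf{Y}^{-1}\succeq\mathbf{X}^{-1}$. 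Applied to $\mathbf{I}+\boldsymbol{\Theta}$, this shows the quadratic form is nondecreasing in $\boldsymbol{\Theta}_h$ in the Loewner order, for fixed $\mathbf{a}$.

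The main obstacle is bookkeeping around the restriction to the support of $\mathbf{B}_h$. The statement writes $\operatorname{Cov}(\boldsymbol{\xi}_h)=\boldsymbol{\Theta}_h$ only after projecting onto that support. Components of $\mathbf{h}$ orthogonal to the support are pure augmentation noise, uncorrelated with $y$, so including them in $\mathbf{u}$ cannot reduce the error. I would justify this restriction explicitly, rather than silently, before running the computation above.
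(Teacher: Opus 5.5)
Your argument is correct and follows the paper's proof step for step. For the lower bound, the paper first bounds $E_{\mathcal A}(g)\le\mathbf{a}^\top\boldsymbol{\Theta}_h\mathbf{a}$ using the single-view predictor $\mathbf{a}^\top\mathbf{h}$ and then applies a triangle inequality; you merge these into one Minkowski step. For the identity, the paper splits $y=g+r$ with $r$ uncorrelated with $\mathbf{h}$, where you solve the normal equations for $y$ directly. The monotonicity argument is the same.

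The one thing to fix is your final paragraph. The justification you propose for the restriction is false, and the proof does not need it.

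Off-support coordinates of $\mathbf{h}$ are indeed uncorrelated with $y$. However, they can be correlated with the on-support noise $\boldsymbol{\xi}_h$, and then they reduce the error as suppressor variables. For instance, take $\mathbf{h}=(m(Q)+\zeta,\ \zeta)$ with $m$ centered of unit variance, $\mathbb{E}[\zeta\mid Q]=0$, $\operatorname{Var}(\zeta\mid Q)=\theta>0$, and $y=m(Q)$. Then $\mathbf{B}_h=\operatorname{diag}(1,0)$, $\boldsymbol{\Theta}_h=\theta$, $d_h(y)=0$ and $\mathbf{a}=1$, so the restricted error is $\theta/(1+\theta)$. Yet $\mathbf{u}=(1,-1)^\top$ recovers $y$ exactly from the full $\mathbf{h}$.

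So the identity is exact only for the restricted, whitened representation. For linear prediction from the full $\mathbf{h}$ it is merely an upper bound. The restriction is part of the theorem's setting; the paper adopts it and reuses $\mathbf{h}$ for the transformed variable. Take it as a hypothesis rather than arguing that it is lossless. The rest of your proof is unaffected.
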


The two statements capture complementary failure modes. Equation~\ref{eq:too-thin} says that if a target cannot be recovered perfectly from an augmented view, then a representation with very little augmentation thickness cannot nevertheless organize its image centers arbitrarily well for that target. Some augmentation-dependent variation may therefore need to remain in the representation. In the opposite direction, Eq.~\ref{eq:too-thick} shows that, once the image-center organization is fixed, additional within-image variation makes prediction from a single view harder. Augmentation thickness therefore describes a trade-off rather than a quantity that should be uniformly minimized.

The quantity
\begin{equation}
    \mathbf{a}^\top \boldsymbol{\Theta}_h(\mathbf{I}_{r_h}+\boldsymbol{\Theta}_h)^{-1}\mathbf{a}
    \label{eq:view-sensitivity}
\end{equation}
is the \emph{view-sensitivity} term reported in Fig.~\ref{fig:organization-sensitivity}. We use this quantity in Fig.~\ref{fig:organization-sensitivity} rather than the raw directional thickness $\theta_h(\mathbf{a})$ because it captures the contribution of augmentation variation after optimizing the linear prediction. It isolates the part of the optimal single-view linear prediction error attributable to variation across augmentations, separately from the image-center organization term $d_h(y)^2$.

\subsection{Proof of Theorem~\ref{thm:thickness}}

Throughout the proof we work on the support of $\mathbf{B}_h$ in centered, whitened coordinates and reuse $\mathbf{h}$, $\mathbf{m}_h$, and $\boldsymbol{\xi}_{h}$ for the transformed variables. Hence
\begin{equation}
    \mathbb{E}[\mathbf{m}_h(Q)]=\mathbf{0},
    \qquad
    \operatorname{Cov}(\mathbf{m}_h(Q))=\mathbf{I}_{r_h},
    \qquad
    \mathbb{E}[\boldsymbol{\xi}_{h}\mid Q]=\mathbf{0},
    \qquad
    \operatorname{Cov}(\boldsymbol{\xi}_{h})=\boldsymbol{\Theta}_h.
\end{equation}

\paragraph{Too little thickness.}
Consider any linear function of the image centers,
\begin{equation}
    g(Q)=\mathbf{a}^\top \mathbf{m}_h(Q).
\end{equation}
Since $\mathbf{a}^\top \mathbf{h}$ is computable from a single augmented view, it is one admissible predictor of $g$. Therefore,
\begin{align}
    E_{\mathcal A}(g)
    &\leq
    \mathbb{E}\!\left[
        \left(\mathbf{a}^\top \mathbf{m}_h(Q)-\mathbf{a}^\top \mathbf{h}\right)^2
    \right] \\
    &=
    \mathbb{E}\!\left[(\mathbf{a}^\top\boldsymbol{\xi}_{h})^2\right] \\
    &=
    \mathbf{a}^\top\boldsymbol{\Theta}_h \mathbf{a}.
    \label{eq:app-thin-1}
\end{align}

Now let $g$ be the best linear prediction of $y$ from the image centers. Since the image-center covariance is whitened, we may write
\begin{equation}
    g(Q)=\mathbf{a}_y^\top \mathbf{m}_h(Q).
\end{equation}
Orthogonal projection in $L_2$ gives
\begin{equation}
    \|\mathbf{a}_y\|
    =
    \|g\|_{L_2}
    \leq
    \|y\|_{L_2}
    \leq 1.
\end{equation}
Using the triangle inequality,
\begin{align}
    \sqrt{E_{\mathcal A}(y)}
    &\leq
    \|y-g\|_{L_2}
    +
    \sqrt{E_{\mathcal A}(g)} \\
    &\leq
    d_h(y)
    +
    \sqrt{\mathbf{a}_y^\top\boldsymbol{\Theta}_h \mathbf{a}_y} \\
    &\leq
    d_h(y)
    +
    \sqrt{\|\boldsymbol{\Theta}_h\|_{\mathrm{op}}},
\end{align}
where the last inequality uses $\|\mathbf{a}_y\|\leq1$. Rearranging yields
\begin{equation}
    d_h(y)
    \geq
    \left(
        \sqrt{E_{\mathcal A}(y)}
        -
        \sqrt{\|\boldsymbol{\Theta}_h\|_{\mathrm{op}}}
    \right)_+,
\end{equation}
which proves Eq.~\ref{eq:too-thin}.

\paragraph{Too much thickness.}
Let
\begin{equation}
    g(Q)=\mathbf{a}^\top \mathbf{m}_h(Q)
\end{equation}
be the best linear prediction of $y$ from the image centers, and write
\begin{equation}
    y=g+r.
\end{equation}
By the orthogonality property of linear regression,
\begin{equation}
    \mathbb{E}[r\,\mathbf{m}_h(Q)]=\mathbf{0},
    \qquad
    \mathbb{E}[r^2]=d_h(y)^2.
\end{equation}
Moreover, since $\mathbb{E}[\boldsymbol{\xi}_{h}\mid Q]=\mathbf{0}$ and $r$ is a function of $Q$,
\begin{equation}
    \mathbb{E}[r\,\boldsymbol{\xi}_{h}]
    =
    \mathbb{E}\!\left[
        r\,\mathbb{E}[\boldsymbol{\xi}_{h}\mid Q]
    \right]
    =\mathbf{0}.
\end{equation}
Hence $r$ is uncorrelated with
\begin{equation}
    \mathbf{h}=\mathbf{m}_h(Q)+\boldsymbol{\xi}_{h}
\end{equation}
and contributes exactly $d_h(y)^2$ to the optimal affine prediction error from $\mathbf{h}$.

For the component $g=\mathbf{a}^\top \mathbf{m}_h(Q)$,
\begin{equation}
    \operatorname{Cov}(\mathbf{h})
    =
    \mathbf{I}_{r_h}+\boldsymbol{\Theta}_h,
    \qquad
    \operatorname{Cov}(\mathbf{h},\mathbf{m}_h(Q))
    =
    \mathbf{I}_{r_h}.
\end{equation}
The best linear prediction error of $g$ from $\mathbf{h}$ is therefore
\begin{align}
    \mathbb{E}[g^2]
    &-
    \operatorname{Cov}(g,\mathbf{h})
    \operatorname{Cov}(\mathbf{h})^{-1}
    \operatorname{Cov}(\mathbf{h},g) \\
    &=
    \mathbf{a}^\top
    \left[\mathbf{I}_{r_h}-(\mathbf{I}_{r_h}+\boldsymbol{\Theta}_h)^{-1}\right]
    \mathbf{a} \\
    &=
    \mathbf{a}^\top
    \boldsymbol{\Theta}_h(\mathbf{I}_{r_h}+\boldsymbol{\Theta}_h)^{-1}
    \mathbf{a}.
\end{align}
Adding the residual contribution gives
\begin{equation}
    \inf_{\mathbf{u}}
    \mathbb{E}\!\left[(y-\mathbf{u}^\top \mathbf{h})^2\right]
    =
    d_h(y)^2
    +
    \mathbf{a}^\top
    \boldsymbol{\Theta}_h(\mathbf{I}_{r_h}+\boldsymbol{\Theta}_h)^{-1}\mathbf{a},
\end{equation}
which proves Eq.~\ref{eq:too-thick}.

Finally,
\begin{equation}
    \boldsymbol{\Theta}_h(\mathbf{I}_{r_h}+\boldsymbol{\Theta}_h)^{-1}
    =
    \mathbf{I}_{r_h}-(\mathbf{I}_{r_h}+\boldsymbol{\Theta}_h)^{-1}.
\end{equation}
If $\boldsymbol{\Theta}_h' \succeq \boldsymbol{\Theta}_h$, then
\begin{equation}
    (\mathbf{I}_{r_h}+\boldsymbol{\Theta}_h')^{-1}
    \preceq
    (\mathbf{I}_{r_h}+\boldsymbol{\Theta}_h)^{-1},
\end{equation}
and therefore
\begin{equation}
    \boldsymbol{\Theta}_h'(\mathbf{I}_{r_h}+\boldsymbol{\Theta}_h')^{-1}
    \succeq
    \boldsymbol{\Theta}_h(\mathbf{I}_{r_h}+\boldsymbol{\Theta}_h)^{-1}.
\end{equation}
Thus, for fixed $\mathbf{a}$, increasing augmentation thickness can only increase the view-sensitivity term. This completes the proof.

\end{document}